\newcommand{\E}{\mathbb{E}}
\newtheorem{remark}{Remark}
\newtheorem{theorem}{Theorem}
\newtheorem{proposition}{Proposition}
\newtheorem{corollary}{Corollary}
\newtheorem{lemma}{Lemma}
\newtheorem{definition}{Definition}
\newtheorem{assumption}{Assumption}
\newtheorem{example}{Example}
\DeclareSymbolFont{letters}     {OML}{cmm} {m}{it}
\title{On Dynamic Pricing with   Covariates}
\author{Hanzhao Wang \and Kalyan Talluri \and Xiaocheng Li}
\date{\small
Imperial College Business School, Imperial College London\\
$\{$h.wang19, kalyan.talluri, xiaocheng.li$\}$@imperial.ac.uk\\
}
\DeclareMathOperator*{\argmax}{arg\,max}
\DeclareMathOperator*{\argmin}{arg\,min}
\begin{document}

\maketitle
\onehalfspacing
\begin{abstract}
We consider dynamic pricing with covariates under a generalized linear demand model: a seller can dynamically adjust the price of a product over a horizon of $T$ time periods, and at each time period $t$, the demand of the product is jointly determined by the price and an observable covariate vector $x_t\in\mathbb{R}^d$ through a generalized linear model with unknown co-efficients.  Most of the existing literature assumes the covariate vectors $x_t$'s are independently and identically distributed (i.i.d.); the few papers that relax this assumption either sacrifice model generality or yield sub-optimal regret bounds. In this paper, we show that UCB and Thompson sampling-based pricing algorithms can achieve an $O(d\sqrt{T}\log T)$ regret upper bound without assuming any statistical structure on the covariates $x_t$. Our upper bound on the regret matches the lower bound up to logarithmic factors.  We thus show that (i) the i.i.d. assumption is not necessary for obtaining low regret, and (ii) the regret bound can be independent of the  (inverse) minimum eigenvalue of the covariance matrix of the $x_t$'s, a quantity present in previous bounds. Moreover, we consider a constrained setting of the dynamic pricing problem where there is a limited and unreplenishable inventory and we develop theoretical results that relate the best achievable algorithm performance to a variation measure with respect to the temporal distribution shift of the covariates. We also discuss conditions under which a better regret is achievable and demonstrate the proposed algorithms' performance with numerical experiments.
\end{abstract}

\section{Introduction}

In this paper we consider the problem of dynamically pricing a product over time when additional covariate information is available. In the literature this is also referred to as ``feature-based dynamic pricing'', ``personalized dynamic pricing'', or ``dynamic pricing with side information''.  Over time periods $t=1,\ldots, T$ the seller observes covariates $x_t\in \mathbb{R}^d$ in each period, sets a price, and then observes the realized demand. The covariates generally are a set of features that characterize the pricing environment at both a macro level (market or environment) and a micro level (customer-specific).   The seller's objective is to maximize revenue, and the demand depends on the price and the covariates through a parametric function. The seller needs to balance the learning of the demand function with the earning of the revenue, the classic exploration-exploitation tradeoff. The performance of a pricing policy or algorithm is measured by the gap between the expected cumulative revenue obtained by the policy and that of an optimal policy which knows the demand function a priori.

At a high level, the existing literature on dynamic pricing with covariates can be grouped into three categories:
\begin{enumerate}
\item Allow arbitrary covariates but assume both the knowledge of the price-sensitivity coefficient and the parameters of the distribution of the demand shock. In Appendix~\ref{sec_modeling_issue} we discuss modeling issues associated with this assumption.
\item Assume i.i.d. covariates and allow an unknown price coefficient and an unknown parameter of the distribution of the demand shock.
\item Allow arbitrary covariates with an unknown price coefficient and parameters of the error distribution, but able to derive only a sub-optimal regret bound.
\end{enumerate}
The i.i.d. assumption in the second stream above can be restrictive in many situations in practice, say when there is serial correlation or peer effects in the data (we elaborate further in Appendix~\ref{sec_modeling_issue}). 

In this paper, we study a setting where the covariates $x_t$ can be arbitrary, i.e., no i.i.d. assumption  and the demand depends on the covariates and the price through a generalized linear model.  The main contributions of this paper are as follows:
\begin{itemize}
    \item We give a simple UCB-based pricing algorithm, which makes no assumptions on how the covariates are generated and yet achieves an optimal order of regret. This algorithm is inspired by the GLM-UCB algorithm for generalized linear reward bandits \citep{filippi2010parametric}, where however the reward is assumed to be generalized linear also in action also. Our reward function (revenue) does not have a generalized linear form over the action (price), and hence the analysis of the pricing algorithm has to take into account this misalignment between the reward and observation (demand).
    \item We next propose a Thompson sampling-based pricing algorithm, which is more computationally efficient than the UCB-based one while still enjoying an optimal order of regret.  By leveraging the special structure of the pricing problem we improve on two known limitations to  Thompson sampling for generalized linear bandits~\citep{abeille2017linear,agrawal2016linear}, namely: (1)  a sub-optimal order of regret with respect to the dimension of the covariates; (2) a requirement of convexity of the reward function with respect to the unknown parameters. 
    \item We extend the previous Thompson sampling dynamic pricing algorithm to a setting with an inventory constraint, where the seller has a limited and unreplenishable inventory at the beginning of the horizon. We show the distributional shift of the covariates adds an unavoidable term on the regret. Therefore, we modify the Thompson sampling algorithm into a dual-based algorithm to balance the inventory consumption by adaptively maintaining a dual variable.
\end{itemize}

The remainder of the paper is organized as follows. \S\ref{sec_lite_review} presents a brief review of the literature; \S\ref{sec_Model} gives the formal definition of the problem with assumptions and the performance measure; \S\ref{sec_UCB} designs a UCB-based pricing algorithm with a discussion on its computation efficiency; \S\ref{sec_TS} proposes a Thompson sampling pricing algorithm to alleviate the potential computation issue in the UCB pricing algorithm. We further extend the Thompson sampling pricing algorithm to a setting with an inventory constraint in \S\ref{sec_TS_C}. Finally, \S\ref{sec_num_diss} provides empirical simulations, and \S\ref{sec_conclusion} concludes the paper.

\subsection{Literature Review}
\label{sec_lite_review}

\begin{table}[ht!]
\centering
\scalebox{0.8}{
\renewcommand{\arraystretch}{1.5}
  \begin{tabular}{>{\centering}m{1.8in} |>{\centering}m{0.83in} |>{\centering}m{0.65in} |>{\centering\arraybackslash}m{1.4in}|>{\centering\arraybackslash}m{1.5in}}
\toprule
 Paper & Regret Bound  & Covariates & Demand Model & Key Assumptions \\
 \midrule
\cite{qiang2016dynamic} &$O(d \log T)$ & Mart. Diff.  & $x^\top\alpha+bp+\epsilon$ &  With known incumbent price\\\hline
 \cite{cohen2020feature} & $O(d^2 \log T)$
&Adver. & $\mathbbm{1}\{x^\top\alpha\geq p\}$ &  ---\\
\hline
 \cite{cohen2020feature} & $\tilde{O}(d^{19/6} T^{\frac{2}{3}})$ &
 Adver. & $\mathbbm{1}\{x^\top\alpha+\epsilon \geq p\}$ &  Sub-Gaussian noise\\
 \hline
 \cite{mao2018contextual} & $\tilde{O}( T^{\frac{d}{d+1}})$ &
 Adver. & $\mathbbm{1}\{f(x)\geq p\}$ &  $f(\cdot)$ is Lipschitz  \\
 \hline
 \cite{javanmard2017perishability} &$O(\sqrt{T})$ &Adver.& $\mathbbm{1}\{x^\top\alpha_t+\epsilon \geq p\}$ &  Log-concave noise with changing $\alpha_t$ \\
 \hline
 \cite{javanmard2019dynamic} &$O(d\log T)$ & i.i.d. & $\mathbbm{1}\{x^\top\alpha+\epsilon \geq p\}$ &  Log-concave noise and sparsity\\
 \hline
  \cite{javanmard2019dynamic}  &$\tilde{O}(s\log (d)\sqrt{T})$ & i.i.d. & $\mathbbm{1}\{x^\top\alpha+\epsilon \geq b p\}$ &  Log-concave noise and sparsity \\
  \hline
  \cite{luo2021distribution} &$\tilde{O}( d T^{\frac{2}{3}})$ & Adver. & $\mathbbm{1}\{x^\top\alpha+\epsilon \geq b p\}$ &  Log-concave noise with unknown distribution\\
  \hline
   \cite{xu2021logarithmic} &$O( d \log T)$ & Adver. & $\mathbbm{1}\{x^\top\alpha+\epsilon \geq  p\}$ & Strictly log-concave noise \\ \hline
       \cite{ban2021personalized}  & $\tilde{O}(s\log(d)\sqrt{T})$    &i.i.d.&   $g(x^\top\alpha+x^\top \beta \cdot p)+\epsilon$ &  Sub-Gaussian noise and sparsity\\   \hline
  Ours & $\tilde{O}(d\sqrt{T})$   & Adver. &  $g(x^\top\alpha+x^\top \beta \cdot p)+\epsilon$ &  Sub-Gaussian noise\\
  \bottomrule
\end{tabular}}
\caption{Summary of Existing Results: The notation $\tilde{O}(\cdot)$ omits the logarithmic factors, $d$ is the dimension of the covariates and $T$ is the horizon length. The column ``Regret Bound'' shows the regret upper bound, where $s$ is the number of non-zero coordinates of unknown parameters when sparsity structure is assumed. The column ``Covariates'' describes the assumption on the generation of the covariates, where ``Mart. Diff.'' stands for the the covariates sequence has martingale difference and ``Adver.'' means adversarial generations. In column ``Demand Model'', $\alpha, \beta\in \mathbb{R}^d$ and $b\in \mathbb{R}$, which are defaulted as fixed but unknown parameters without further comment in ``Key assumptions'', and $\epsilon$ is the unobserved error term.  The column ``Key assumptions''  summarizes key assumptions for the corresponding paper.}
\label{Tab:table_bound}
\end{table}
\cite{ban2021personalized} consider the generalized linear demand model which includes both the binary and linear demand models as special cases, extending the covariate-free case in \cite{broder2012dynamic}. We work with this generalized linear demand model, but our algorithm and analysis differ from that of \cite{ban2021personalized}. \cite{qiang2016dynamic} impose a martingale-type condition on the covariates which is similar to the i.i.d. assumption because both in essence ensure the minimum eigenvalue of the sample covariance matrix is bounded away from zero.  In Appendix~\ref{sec_non_iid}, we discuss the technical and practical advantages of removing the i.i.d. assumption.


\cite{ferreira2018online} consider the price-based network revenue management problem where there are finitely many allowable prices and a number of resource constraints.  \cite{javanmard2019dynamic} and \cite{ban2021personalized} pose their dynamic pricing problem in a high-dimensional setting, with a sparsity assumption, and perform a variable selection subroutine in their pricing algorithm via $\ell_1$ regularization. To this end, we believe the i.i.d. assumption and the dependence of the bounds on the minimum eigenvalue in both papers are necessary to handle the high-dimensional setting, in contrast to our low-dimensional one where we can remove those assumptions. \cite{keskin2023data} formulate a stylized framework for electricity pricing, where the covariates can be classified into three types based on their different forms of heterogeneity, and thus non-i.i.d. They develop a spectral clustering method to tailor the featured-based pricing, which is quite different from our algorithms. We summarize the existing works in Table \ref{Tab:table_bound}. For more on the history and origin of the dynamic pricing problem, especially the covariate-free case, we refer the readers to the review paper \cite{den2015dynamic}.


Our work is also related to generalized linear contextual bandits \citep{abeille2017linear,filippi2010parametric}. We utilize  techniques from this literature, while our work is differentiated in the following aspects: (1) there is a slight misalignment between the reward (revenue) and observation (demand) in the pricing setting, while in the bandits setting they are generally same; (2) in addition, the reward functions in these works are assumed to be generalized linear functions of actions; in contrast, our reward function (revenue) is not generalized linear in the action (price), only the observed demand is; (3) we further relax the assumptions and improve the regret bounds of \cite{abeille2017linear} by leveraging the special structure of the pricing problem.

\section{Problem Setup}
\label{sec_Model}
Consider the generalized linear demand model
\begin{equation}
\label{fun_demand}
    D_t=g(x_t^\top \alpha^*+x_t^\top \beta^* \cdot p_t)+\epsilon_t \quad \forall t=1,2,\ldots,T,
\end{equation}
where $\alpha^*, \beta^* \in \mathbb{R}^d$ are true unknown parameters, $g(\cdot)$ is a known link function, $x_t$ is an observable covariate vector (i.e., the seller observes $x_t$ before setting the price $p_t$), and $\epsilon_t$ is an unobservable and idiosyncratic demand shock in period $t$.  
 
The generalized linear model covers the two mainstream demand models, the binary demand model and the linear demand model, as special cases (see Appendix~\ref{sec_modeling_issue} for more details). Let $\mathcal{X}$ denote the domain of $x_t$ and $\theta^*\coloneqq (\alpha^*,\beta^*)$ as the concatenated true parameter vector.

We make the following assumptions for the demand model and the pricing rule. The boundedness assumption is standard in the dynamic pricing literature, and it is a rather harmless one for practical applications. We note that the last one of the assumptions can be usually met by suitable normalization of the covariates.

\begin{assumption}[Boundedness]  
\begin{itemize}
    \item[(a)] There exists $\bar{\theta}>0$ such that $\theta^* \in \Theta\coloneqq \{\theta\in \mathbb{R}^{2d}: \|\theta\|_2\leq \bar{\theta}\}$.
    \item[(b)] The seller sets prices in a range $[\underline{p},\bar{p}]$ for all $t=1,\ldots,T$.
    \item[(c)] There exists a known $\bar{D}>0$ such that $g(x^\top\alpha^*+x^\top\beta^* \cdot p)\leq \Bar{D}$ for all $x\in \mathcal{X}$ and $p \in [\underline{p},\bar{p}]$.
    \item[(d)] For all $x\in \mathcal{X}$ and $p \in [\underline{p},\bar{p}]$, $\|(x,px)\|_2\leq 1$.
\end{itemize}
\label{assp_bound}
\end{assumption}

\begin{assumption}[Properties of $g(\cdot)$]
$g(\cdot)$ is strictly increasing and differentiable with a bounded derivative over its domain. Specifically, there exist constants $\underline{g}, \bar{g} \in \mathbb{R}$ such that $0<\underline{g}\leq g'(z)\leq \bar{g} <\infty$ for all $z=x^\top\alpha+x^\top\beta \cdot p$ where $x\in \mathcal{X}$, $\theta \in \Theta$, and $p \in [\underline{p},\bar{p}]$.
\label{assp_g}
\end{assumption}

Assumption \ref{assp_g} is a standard assumption for the link function of the generalized linear model \citep{abeille2017linear,filippi2010parametric}, and it ensures the learnability of the parameters $(\alpha, \beta).$

\begin{assumption}[Demand Shock]
Assume $\{\epsilon_t, t=1,2\ldots\}$ forms a $\bar{\sigma}^2$-sub-Gaussian martingale difference sequence, i.e., $$\mathbb{E}[\epsilon_t|\mathcal{H}_{t-1}]=0 \text{ \ and \ } \log \left(\mathbb{E}\left[e^{s\epsilon_t}\big \vert \mathcal{H}_{t-1} \right]\right)\leq \frac{\bar{\sigma}^2 s^2}{2}$$ for all $s\in \mathbb{R}$, where $\mathcal{H}_t:=\sigma(p_1,\ldots,p_t,\epsilon_1,\ldots,\epsilon_t,x_1,\ldots,x_t,x_{t+1})$ and $\mathcal{H}_{0}:=\sigma\left(\emptyset, \Omega\right)$. Moreover, we assume $\bar{\sigma}^2$ is known a priori.
\label{assp_error}
\end{assumption}

Assumption \ref{assp_error} on the error term covers common distributions such as normal distribution and distributions with bounded support. The sub-Gaussian parameter $\bar{\sigma}^2$ works as an upper bound for the true variance of the random variable. We note that the filtration definition includes the covariates at time $t+1$. This small change allows the demand shock $\epsilon_t$ to be dependent on the covariates $x_t$ at that time period and gives more modeling flexibility. We remark that while such a martingale structure is assumed with respect to the demand shock, no assumption is imposed on the covariates $x_t$ other than the boundedness in Assumption \ref{assp_bound}.

\paragraph{Performance measure.}\

For simplicity, we denote $a^*_t\coloneqq x^\top_t\alpha^*$, $b^*_t \coloneqq x_t^\top\beta^*$. We write the expected revenue function as
$$r\left(p;a,b\right)\coloneqq p\cdot g(a+b\cdot p),$$
and the optimal expected revenue function as
$$r^*\left(a,b\right)\coloneqq \max_{p\in[\underline{p},\bar{p}]} \ p\cdot g(a+b\cdot p).$$
Throughout the paper, we assume this optimization problem can be efficiently solved.

Now, we define \textit{regret} as the performance measure for the problem. Specifically,
$$\text{Reg}_T^\pi(x_1,\ldots,x_T)  \coloneqq  \sum_{t=1}^T r^*(a^*_t,b^*_t) - \mathbb{E}\left[\sum_{t=1}^T r_t^{\pi}\right]$$
where $\pi$ denotes the online policy/algorithm in use, and the expectation is taken with respect to the demand shock $\epsilon_t$'s and the potential randomness from $\pi$. An admissible policy $\pi$  maps the  covariates $x_{1},...,x_t$, the past demand $D_1,\ldots,D_{t-1}$, and the past prices $p_1,\ldots,p_{t-1}$ to the price $p_t^{\pi}$ offered at $t$. Accordingly, $r_t^{\pi}$ denotes the revenue collected at time $t$ under $\pi$. The benchmark oracle (the first summation in above) is defined based on the optimal revenue function $r^*(a^*_t,b^*_t)$. It assumes knowledge of $(a^*_t,b^*_t)=(x_t^\top\alpha^*,x_t^\top\beta^*)$ but does not observe the realization of the demand shock $\epsilon_t$ when setting the price. In defining the regret, we allow the covariates $x_t$'s to be arbitrarily generated subject to Assumption \ref{assp_bound}. Therefore, no expectation is taken for $x_t$'s in the regret definition, and we seek a worst-case regret upper bound over all possible $x_t$'s. For the case when $x_t$'s are i.i.d., the regret definition involves one more layer of expectation on $x_t$'s. Thus our regret bound is  stronger and can directly translate into a regret bound for the i.i.d. case.
\section{Parameter Estimation and UCB-Based Pricing}

\label{sec_UCB}
In this section, we first introduce the maximum likelihood estimation for estimating the parameter $\theta^*$ based on the history data, which will be adopted in all pricing algorithms introduced later. The development of the approach is not new and the presentation here aims to generate specific intuitions for dynamic pricing. Based on it, we present the prototypical UCB-based algorithm for the problem.

\subsection{Maximum Quasi-likelihood Estimator and its Properties}
\label{sec_Quasi_MLE}
From the demand model \eqref{fun_demand}, we define the quasi-likelihood function \citep{wedderburn1974quasi} for the $t$-th observation as follows
\begin{equation}
    \label{eq_quasi_self}
    l_t(\theta)\coloneqq -\int_{D_{t}}^{g(z_{t}^\top \theta)}\frac{1}{h(u)}(u-D_{t})\mathrm{d}u,
\end{equation}
where $\theta=(\alpha, \beta)$ encapsulates the parameters, $z_{t}=(x_t, p_tx_t)$ is a column vector by concatenating the covariates, and $h(u)=g'(g^{-1}(u))$ for $u\in \mathbb{R}$. Since the distributions of the error terms (and thus the demands) are unknown, we can not maximize the exact likelihood function and can only use the quasi-likelihood.  To gain some intuition for $l_t(\cdot)$, we can treat it as the weighted mean of the gap between the observation $D_t$ and the estimator $g(z_t^\top \theta)$: by noting $1/h(u)=(g^{-1})'(u)$, it can be interpreted as a weight which captures the first-order information (derivative) of $\theta$ revealed at point $u$. Then the function $l_t$ as a first-order Taylor expansion proxy of $g^{-1}(\cdot)$ for the true likelihood function  aims to find a $\theta$ that minimizes the weighted gap between $g(z_{t}^\top \theta)$ and $D_t.$ To see this, when one maximizes $l_t$ over $\theta$, it achieves its maximum when $g(z_t^\top \theta)=D_t$ (if possible).

Based on $l_t$, we define the regularized quasi-likelihood estimator with parameter $\lambda>0$ as:
\begin{equation}
\label{eq_quasi_reg}
    \hat{\theta}_t:=\argmax_{\theta\in \Theta}
    -\frac{\lambda\underline{g}\|\theta\|_2^2}{2}  +\sum_{t'=1}^{t} l_{t'}(\theta),
\end{equation}
where $\underline{g}$ denotes the lower bound of $g'(\cdot)$ as defined in Assumption \ref{assp_g}. The estimator $\hat{\theta}_t$ will be used throughout the paper. The  regularization term avoids the singularity caused by the arbitrariness of the covariates $x_t$'s and also ensures a curvature for the likelihood function.

\paragraph{Estimation error of $\hat{\theta}_t$.}\

Define the (cumulative) sampled design matrix
$$M_{t}\coloneqq \lambda I_{2d}+\sum_{t'=1}^{t}z_{t'}z_{t'}^\top$$
where $I_{2d}$ is a $2d$-dimensional identity matrix, and the $M$-norm of a vector $z$ as
$\|z\|_M \coloneqq \sqrt{z^\top M z}$
for a positive definite matrix $M$. We use $\det M$ to denote the determinant of the matrix $M$. Then the lemma below bounds the estimation error of $\hat{\theta}_t$  according to the sampled design matrix $M_t$:
\begin{proposition}[\cite{filippi2010parametric}]
\label{prop_theta_est}
For any regularization parameter $\lambda>0$, the following bound 
$$\mathbb{P}\left( \exists t\in \{1,\ldots,T\}:  \left\|\hat{\theta}_t-\theta^* \right\|_{M_t}\geq2\sqrt{\lambda}\bar{\theta}+ \frac{2\bar{\sigma}}{\underline{g}}\sqrt{2\log\left(\frac{1}{\delta}\right)+\log\left(\frac{\det M_t}{\lambda^{2d}}\right)}\right)\leq \delta$$
holds for any $\delta\in(0,1)$.
\end{proposition}

The proposition is a key ingredient in removing the i.i.d. assumption on the covariates $x_t$. Specifically, if we impose an i.i.d. assumption on $x_t$ and assume the minimum eigenvalue of its covariance matrix is bounded away from zero, then we can upper bound the estimation error of $\theta^*$ in the Euclidean norm. 
As we do not make such assumptions, the above proposition tells us that we can still obtain an estimation error bound by measuring the distance according to the sampled design matrix $M_t.$ The following corollary makes the point even more precise by making the right-hand-side independent of $M_t$; we remark that this type of argument is commonly used in the linear bandits literature \citep{lattimore2020bandit}.

\begin{corollary}
For all $\lambda>0$,
$$\mathbb{P}\left( \exists t\in \{1,\ldots,T\}:  \left\|\hat{\theta}_t-\theta^* \right\|_{M_t}\geq 2\sqrt{\lambda}\bar{\theta}+ \frac{2\bar{\sigma}}{\underline{g}}\sqrt{2\log\left(T\right)+2d\log\left(\frac{2d\lambda +T}{2d\lambda}\right)} \right)\leq \frac{1}{T}. $$
\label{MLEbound}
\end{corollary}

\subsection{UCB-Based Pricing}
\label{sec_UCB_alg}

Algorithm \ref{alg_UCB} describes an upper confidence bound (UCB)-based pricing algorithm which naturally arises from the estimator $\hat{\theta}_t$ in \eqref{eq_quasi_reg}. At each time $t$, the algorithm first computes $\hat{\theta}_t$ and then constructs a confidence set based on the error bound in Corollary \ref{MLEbound}. Following the principle of optimism of UCB, the algorithm finds the most ``optimistic'' parameter within the confidence bound and sets the price pretending this parameter to be true. The algorithm design mostly mimics the GLM-UCB for generalized linear bandits \citep{filippi2010parametric} and here it differs from GLM-UCB in a misalignment of rewards and observations. Specifically, the (realized) reward at period $t$ collected from pulling the selected arm is just the observation at $t$ in generalized linear bandits and thus both the reward and observation are generalized linear functions. Here, the observations in the dynamic pricing setting contain the (realized) revenue and the demand at $t$, where the reward function is not a generalized linear function in action $p_t$ anymore (although the demand is). Therefore, when estimating the unknown parameter $\theta^*$, Algorithm \ref{alg_UCB} uses the observed demands by \eqref{eq_quasi_reg} instead of the observed rewards (revenues) as in \cite{filippi2010parametric}. This small twist is natural but critical in achieving a tighter regret bound (See Appendix~\ref{apx:diss_UCB} for more discussion).

\begin{algorithm}[ht!]
\centering
\caption{UCB Pricing}
\label{alg_UCB}
\begin{algorithmic}
\STATE{\textbf{Input:} Regularization parameter $\lambda$.}
\FOR{$t=1,\ldots,T$}
\STATE{Compute the estimators $\hat{\theta}_{t-1}$ by \eqref{eq_quasi_reg} and its confidence set
$$ \Theta_{t}:=\left\{\theta\in \Theta: \left\|\theta-\hat{\theta}_{t-1}\right\|_{M_{t-1}}\leq 2\sqrt{\lambda}\bar{\theta}+ \frac{2\bar{\sigma}}{\underline{g}}\sqrt{2\log (T)+2d\log\left(\frac{2d\lambda +T}{2d\lambda}\right)}\right\}.
$$}
\STATE{Observe covariates $x_t$ and choose the UCB parameter which maximizes the expected revenue:
\begin{equation}
\label{eq_UCB_opt}
   (\alpha_t,\beta_t)\coloneqq \argmax_{(\alpha,\beta)\in \Theta_{t}} \ \  r^*\left(x_t^\top\alpha,x_t^\top\beta\right).
\end{equation}}
\STATE{Set the price by
\begin{equation*}
    p_{t}=\argmax_{p\in [\underline{p},\bar{p}]} r(p;x^\top_t\alpha_t,x^\top_t\beta_t).
\end{equation*}
}
\ENDFOR
\end{algorithmic}
\end{algorithm}
%
Theorem \ref{thm_regret_bound} states the regret bound of Algorithm \ref{alg_UCB}. For dimension $d$ and horizon $T$, it meets the lower bound of the problem (See \citet{ban2021personalized}) up to logarithmic factors. Compared to the existing bounds \citep{qiang2016dynamic,javanmard2019dynamic, ban2021personalized}, our bound does not involve the term $\lambda_{\min}^{-1}$ where $\lambda_{\min}$ represents the minimum eigenvalue of the covariance matrix of $x_t$. For other parameters like $\bar{p}$ and $\bar{\theta}$, it is unclear whether their dependencies are optimal; We mention that they also appear in the existing regret bounds under the i.i.d. setting \citep{javanmard2019dynamic, ban2021personalized}.

\begin{theorem}
\label{thm:UCB_UB}
Under Assumptions \ref{assp_bound}, \ref{assp_g} and \ref{assp_error}, with any sequence $\{x_t\}_{t=1,\ldots,T}$, if we choose the regularization parameter $\lambda=1$, the regret of Algorithm \ref{alg_UCB} is upper bounded by
$$4\bar{g}\bar{p}\bar{\gamma}\sqrt{Td\log\left(\frac{2d+T}{2d}\right)}+\bar{p}\bar{D} = \tilde{O}(d\sqrt{T})$$
where $\bar{\gamma}=2\bar{\theta}+ \frac{2\bar{\sigma}}{\underline{g}}\sqrt{2\log T+2d\log\left(\frac{2d+T}{2d}\right)}$
represents an upper bound for the confidence volume.
\label{thm_regret_bound}
\end{theorem}

The proof largely mimics \cite{filippi2010parametric} except when leveraging the special structures of the pricing problem to deal with the fact that rewards (revenues) are not generalized linear functions in actions (prices) as discussed above. The analysis draws a connection between the dynamic pricing and the bandits problem, and also provides an alternative route for the existing analyses on the dynamic pricing with covariates problem (See \citet{qiang2016dynamic, ban2021personalized, zhu2020demands} among others). As noted earlier, this new analysis relaxes the i.i.d. assumption and removes the dependency on the inverse minimum eigenvalue of the covariance matrix.

\paragraph{Tractability of the UCB optimization problem \eqref{eq_UCB_opt}.} \

We note that in general the optimization subroutine \eqref{eq_UCB_opt} can be non-convex and hard to solve. In fact, computation efficiency is a common issue for UCB algorithms---the action selection step even for linear bandits can be NP-hard  \citep{dani2008stochastic}. One solution to compute the UCB optimization problem is through the Monte Carlo method: For each time $t$, the confidence set is an ellipsoid, so sampling from the confidence set can be done efficiently. Specifically, we randomly generate $M$ samples from the uniform distribution over the confidence set and denote the samples as $(\alpha_m,\beta_m)$'s. Then the optimization subroutine \eqref{eq_UCB_opt} can be approximately solved by
\begin{equation}
\label{eq_UCB_MC}
(\alpha^{\text{MC}}_t,\beta^{\text{MC}}_t) = \argmax_{m=1,\ldots,M} r^*(x_t^\top \alpha_m,x_t^\top\beta_m).
\end{equation}
In the numerical experiments, we try out different values of $M$ and examine the effect of sample size on the algorithm performance. With this Monte Carlo approach, the regret bound of Algorithm \ref{alg_UCB} will be revised by the following.
\begin{theorem}
\label{thm_UCB_MC_approx}
Under Assumptions \ref{assp_bound}, \ref{assp_g} and \ref{assp_error}, with any sequence $\{x_t\}_{t=1,\ldots,T}$, if we choose the regularization parameter $\lambda=1$, the regret of Algorithm \ref{alg_UCB} with a replacement of $(\alpha_t,\beta_t)$ by $(\alpha^{\text{MC}}_t,\beta^{\text{MC}}_t)$ is upper bounded by
$$4\bar{g}\bar{p}\bar{\gamma}\sqrt{Td\log\left(\frac{2d+T}{2d}\right)}+\bar{p}\bar{D}+\sum_{t=1}^T\mathbb{E}\left[r^*\left(x_t^\top\alpha_t,x_t^\top\beta_t\right)-r^*(x_t^\top\alpha^{\text{MC}}_t,x_t^\top\beta^{\text{MC}}_t)\right]$$
where $\bar{\gamma}=2\bar{\theta}+ \frac{2\bar{\sigma}}{\underline{g}}\sqrt{2\log T+2d\log\left(\frac{2d+T}{2d}\right)}$.
\end{theorem}

The additional summation term in the regret bound captures the approximation error brought by the Monte Carlo method. Generally, a finer bound can be obtained for this approximation error with some additional structure of the underlying function $r^*$ such as Lipschitzness. However, such a bound, or the Monte Carlo method itself, suffers from the curse of dimensionality (we defer more discussions to Appendix \ref{apx:diss_UCB}). 

Another source of computational suboptimality comes from solving the quasi-MLE problem \eqref{eq_quasi_self}. One cannot expect to solve the problem exactly in general, and an additional approximation error will also appear in the regret bound. We defer further discussions of this to Appendix~\ref{apx:appx_MLE}. To overcome the computational drawback of the UBC-based algorithm, we present a Thompson sampling-based algorithm in the next section.


\section{Thompson Sampling-based Pricing}
\label{sec_TS}
In this section we devise a  Thompson sampling-based method as a more efficient counterpoint to the  UCB-based algorithm of the previous section.  \citet{ferreira2016analytics} has discussed the possibility of applying Thompson sampling for revenue management where the authors studied the covariate-free case and analyzed the constrained dynamic pricing/revenue management problem. Our result focuses on the handling of covariates and, more importantly, we analyze the problem under a frequentist rather than a Bayesian setting. A common belief is that such a frequentist regret bound will suffer from suboptimal regret that scales on the order of $d^{3/2}$ \citep{abeille2017linear} with $d$ being the number of unknown parameters, compared to the $O(d)$ dependency under the UCB-based algorithm. We show that such suboptimality is surprisingly avoidable for this dynamic pricing problem, which renders the Thompson sampling both computationally efficient and regret-optimal. Our finding is similar in spirit to that of \citet{farias2022synthetically} on treatment effect estimation, though the analysis is different; both imply that the special structure such as a one-dimensional decision space, is helpful in reducing the Thompson sampling's regret order under the frequentist regime.



\subsection{Algorithm}
Algorithm \ref{alg_TS} presents the Thompson sampling-based pricing algorithm. The first step is the same as Algorithm \ref{alg_UCB}, which is to solve the optimization problem \eqref{eq_quasi_reg} and obtain the estimator $\hat{\theta}_{t-1}=(\hat{\alpha}_{t-1},\hat{\beta}_{t-1}).$ The standard Thompson sampling algorithm \citep{abeille2017linear} applies a randomized perturbation on the parameter $\hat{\theta}_{t-1}$ and obtains a perturbed estimation $\tilde{\theta}_{t-1}=\hat{\theta}_{t-1}+\eta_{t-1}$ with $\eta_{t-1}$ being some Gaussian noise. Then the Thompson sampling algorithm will pretend the parameter $\tilde{\theta}_{t-1}$ as the true $\theta^*$ and set the price accordingly. The key design of our Thompson sampling algorithm is to first project the parameter onto the direction of $x_t$ and obtain $(\hat{a}_t,\hat{b}_t)\coloneqq (x_t^\top \hat{\alpha}_{t-1}, x_t^\top \hat{\beta}_{t-1})$. Next, instead of perturbing in the original parameter space, we perturb this projected parameter to obtain $(\tilde{a}_t,\tilde{b}_t)$. Recall that the matrix $M_{t-1}$ measures the uncertainty of the current parameter estimation of $\hat{\theta}_{t-1}=(\hat{\alpha}_{t-1},\hat{\beta}_{t-1})$. To measure the uncertainty of $(\hat{a}_t,\hat{b}_t)$, we should project the uncertainty matrix as well by
\begin{equation}
\label{eq_M_tilde}
    \Tilde{M}_{t-1}\coloneqq \left( \begin{pmatrix}
x_t&\bm{0}\\
\bm{0}&x_t
\end{pmatrix}^\top M_{t-1}^{-1} \begin{pmatrix}
x_t&\bm{0}\\
\bm{0}&x_t
\end{pmatrix} \right)^{-1}
\end{equation}
where $\bm{0}$ is the zero vector with dimension $d$. It is easy to verify that $\Tilde{M}_{t-1}$ is well defined, i.e., the matrix inside the parentheses is invertible. Essentially, $\Tilde{M}_{t-1}$ guides the direction that we perform the perturbation in \eqref{eq_TS_samplestep}, and it shares the same intuition as the confidence set $\Theta_t$ in Algorithm \ref{alg_UCB}.


\begin{algorithm}[H]
\centering
\caption{Thompson Sampling Pricing}
\label{alg_TS}
\begin{algorithmic}
\STATE{\textbf{Input:} Regularization parameter $\lambda$.}
\FOR{$t=1,\ldots,T$}
\STATE{Compute the estimator $\hat{\theta}_{t-1}=(\hat{\alpha}_{t-1},\hat{\beta}_{t-1})$ by \eqref{eq_quasi_reg} and observe the covariates $x_t$.}
\STATE{Compute the estimator $(\hat{a}_t,\hat{b}_t)\coloneqq (x_t^\top \hat{\alpha}_{t-1}, x_t^\top \hat{\beta}_{t-1})$.}
\STATE{Sample $\eta_t\sim \mathcal{N}(0,I_{2})$ and compute the parameter
\begin{equation}
\label{eq_TS_samplestep}
    (\tilde{a}_{t},\tilde{b}_t)\coloneqq (\hat{a}_t,\hat{b}_t)+\left(2\sqrt{\lambda}\bar{\theta}+ \frac{2\bar{\sigma}}{\underline{g}}\sqrt{2\log T+2d\log\left(\frac{2d\lambda +T}{2d\lambda}\right)}\right)\tilde{M}_{t-1}^{-1/2}\eta_t.
\end{equation}
Set the price by
\begin{equation*}
    p_{t}=\argmax_{p\in [\underline{p},\bar{p}]} \ r(p;\tilde{a}_t,\tilde{b}_t),
\end{equation*}
and observe the demand $D_t$.}
\ENDFOR
\end{algorithmic}
\end{algorithm}

\subsection{Regret Analysis}
To analyze the regret of Algorithm \ref{alg_TS}, we need a minor modification of Assumption \ref{assp_g}.

\begin{assumption}[Properties of $g(\cdot)$]
Let $$\tilde{\Theta}:=\left\{\theta \in \mathbb{R}^{2d}:  \|\theta- \tilde{\theta}\|_2\leq 4\bar{\theta}\sqrt{\log (4T^2)}+ \frac{4\bar{\sigma}}{\underline{g}}\sqrt{\left(2\log T+2d\log\left(\frac{2d+T}{2d}\right)\right)\log (4T^2)} \text{ for some } \tilde{\theta}\in \Theta\right\}$$ where $\Theta$ is as defined in Assumption \ref{assp_bound}. We assume $g(z)$ is strictly increasing, differentiable and there exist constants $\underline{g}, \bar{g} \in \mathbb{R}$ such that $0<\underline{g}\leq g'(z)\leq \bar{g} <\infty$ for all $z=x^\top\alpha+x^\top\beta \cdot p$ where $x\in \mathcal{X}$, $\theta = (\alpha,\beta) \in \Tilde{\Theta}$, and $p \in [\underline{p},\bar{p}]$.
\label{assp_g'}
\end{assumption}

Assumption \ref{assp_g'} is a stronger version of Assumption \ref{assp_g} on the properties of $g$ in that it requires the same property of $g$ to hold across a larger region $\tilde{\Theta}$ than $\Theta$ in Assumption \ref{assp_g}. Specifically, the enlargement of $\Theta$ to $\tilde{\Theta}$ is to ensure the property of $g$ covers the range of randomized sampled parameters in the Thompson sampling step.

The following lemma is adapted from Lemma 3 of \citet{abeille2017linear}. It implies that the sampled parameters $(\tilde{a}_t,\tilde{b}_t)$ are ``optimistic'' against the true parameter $(a_{t}^*, b_{t}^*)$ with a certain probability for any convex function $f$. This probabilistic optimism is in parallel to the UCB design. The conditional part  holds with high probability as in the previous UCB case.
\begin{lemma}
\label{Abeille_opt_prob}
Let $(a_{t}^*, b_{t}^*)=(x_t^\top \alpha^*, x_t^\top\beta^*)$. Then for the sampled parameter $(\tilde{a}_t,\tilde{b}_t)$ from \eqref{eq_TS_samplestep} in Algorithm \ref{alg_TS} , the following inequality holds for any $t\ge1$ and  any function $f(a,b)$ that is convex in $(a,b),$
$$\mathbb{P}\left(f(\tilde{a}_t,\tilde{b}_t)\geq f(a_{t}^*, b_{t}^*) \bigg \vert \mathcal{H}_{t-1}, (\alpha^*, \beta^*)\in \Theta_{t}\right)\geq \frac{1}{4\sqrt{e\pi}},$$
where
$$ \Theta_{t}=\left\{\theta\in \Theta: \left\|\theta-\hat{\theta}_{t-1}\right\|_{M_{t-1}}\leq 2\sqrt{\lambda}\bar{\theta}+ \frac{2\bar{\sigma}}{\underline{g}}\sqrt{2\log (T)+2d\log\left(\frac{2d\lambda +T}{2d\lambda}\right)}\right\},
$$
is the confidence set defined in the UCB algorithm (Algorithm \ref{alg_UCB}).
\end{lemma}

The above lemma is critical in our regret analysis and that of \citet{abeille2017linear}. Yet there is a gap, as the revenue function is not convex in terms of the parameters $(a,b).$ The following lemma bridges this gap. It basically says that the optimal revenue of the sampled parameters will be optimistic as long as the sampled parameters are optimistic for a linear function.

\begin{lemma}
\label{lem:linear2reward}
If $\tilde{a}_t+\tilde{b}_t\cdot p^*_t\geq a^*_t+b^*_t\cdot p^*_t$ where $p^*_t\coloneqq \argmax_{p\in[\underline{p},\bar{p}]}r(p;a^*_t,b^*_t)$ is the optimal pricing at $t$, then $r^*(\tilde{a}_t,\tilde{b}_t)\geq r^*(a^*_t,b^*_t)$ where recall $r^*\left(a,b\right)= \max_{p\in[\underline{p},\bar{p}]} \ p\cdot g(a+b\cdot p)$.
\end{lemma}

The lemma justifies the algorithm design of perturbing in the projected parameter space  $(a,b)$ rather than the original parameter space $(\alpha,\beta).$ There are two benefits to doing this: (i) It reduces the dimensionality of the perturbation and the regret bound; though we conduct parameter estimation in the original $2d$-dimensional space, the key sampling and analysis happen in this $2$-dimensional space which is the reason for bound improvement. (ii) It relaxes the original requirement in \citet{abeille2017linear} that the reward function must be convex in the parameters, which limits the algorithm's application in the dynamic pricing context. Both benefits arise from the structure of the pricing and revenue function, and the proof of Lemma \ref{lem:linear2reward} is simple algebraic manipulation.

\begin{theorem}
Under Assumption \ref{assp_bound}, \ref{assp_error},  \ref{assp_g'}, with any sequence $\{x_t\}_{t=1,\ldots,T}$ and any $T\geq 6$, if we choose the regularization parameter  $\lambda=1$, the regret of Algorithm \ref{alg_TS} can be bounded by
$$36\Bar{g}\Bar{p}\bar{\gamma}\sqrt{2e \pi dT\log\left(\frac{2d+T}{2d}\right)\log(4T^2)}+2\bar{D}\bar{p}=\tilde{O}\left(d \sqrt{T}\right),$$
where $\bar{\gamma}=2\bar{\theta}+ \frac{2\bar{\sigma}}{\underline{g}}\sqrt{2\log T+2d\log\left(\frac{2d+T}{2d}\right)}$.
\label{thm_TS_NC}
\end{theorem}

Theorem \ref{thm_TS_NC} gives the regret bound of Algorithm \ref{alg_TS}. The proof of the theorem mimics the analysis by \citet{abeille2017linear} and the key idea is to construct random parameters $(\tilde{a}_t', \tilde{b}_t')$ such that (i) the revenue $r^*(\tilde{a}_t', \tilde{b}_t')$ upper bounds $r^*({a}^*_t, {b}^*_t)$ with high probability and (ii) the parameters $(\tilde{a}_t', \tilde{b}_t')$ connects to $(\tilde{a}_t, \tilde{b}_t)$ in a way that the single-step regret can be bounded by the volume of the confidence set (as the case of the UCB-based algorithm). Yet, in terms of the regret bound, we provide an analysis of the original Thompson sampling algorithm \citep{abeille2017linear} for comparison in Appendix~\ref{apx:TS_vs_newTS} which results in a regret bound on the order of $d^{\frac{3}{2}}\sqrt{T}$. Furthermore, in comparison with Algorithm \ref{alg_UCB}, Algorithm \ref{alg_TS} achieves the same order of regret bound with a much smaller computation cost. In all, these results say that the Thompson sampling algorithm can be a computationally efficient substitute for the UCB-based algorithm without any compromise in the theoretical guarantee.

\section{Pricing with Inventory Constraint}
\label{sec_TS_C}

In the previous section, we consider the case where there is no inventory constraint. Here we proceed to a constrained setting. Specifically, consider the seller starts with an inventory of $C$. Let $C_t$ be the remaining inventory at the beginning of time period $t$ and $C_1=C.$ Accordingly, there is a minor twist in demand because of this inventory constraint. We define the realized sale/demand by
$$\Tilde{D}_t\coloneqq \min\{C_t,D_t\}$$
where $D_t$ is the (true) demand in \eqref{fun_demand} and $C_t$ follows the dynamic
$$C_t\coloneqq C_1-\sum_{t'=1}^{t-1} \Tilde{D}_{t'}.$$
In other words, for the exact time period that the inventory is exhausted, the realized sale will be the demand $D_t$ truncated by the available inventory.

The seller's objective is to maximize the expected revenue over the horizon
$$\mathbb{E}\left[\sum_{t=1}^Tp^{\pi}_t\Tilde{D}_t\right]$$
where the policy $\pi$ specifies the price $p^{\pi}_t$.

\subsection{Model}
For the case of unlimited inventory, whether the covariates are i.i.d. or non-i.i.d. do not significantly affect the best achievable regret bound. For the constrained case however, it does make a difference. In this section we consider a setting where the covariates are independent but non-identical random vectors over time.

\begin{assumption}[Independent covariates]
  \label{assmp_gen_x_t}
The covariates $X_t \sim \mathcal{P}_t$ for $t=1,\ldots T$ where $\mathcal{P}_t$'s are unknown distributions. $X_t$'s are independent of each other and also with the demand shocks $\epsilon_t$'s.
\end{assumption}

Here we use the notation $X_t$ to distinguish these random covariates from the deterministic yet arbitrarily chosen $x_t$ in the previous sections. We remark that the assumption does not impose additional restrictions on the covariates than the arbitrarily chosen case. To recover the setting in \S \ref{sec_UCB} and \S \ref{sec_TS}, we can set $\mathcal{P}_t$ to be point-mass distributions with $\mathbb{P}(X_t=x_t)=1$. To the other extreme, if we have $\mathcal{P}_t=\mathcal{P}$, it recovers the setting of i.i.d. covariates. In this light, the aim of introducing $\mathcal{P}_t$'s is to prepare for a finer characterization of the non-i.i.d. covariates but without any compromise on modeling flexibility.


As for the unconstrained case, we define the benchmark revenue as
\begin{equation}
\label{eq_OPTbenchmark}
    \text{OPT}(x_1,\ldots,x_T,C)\coloneqq \max_{\pi\in \Pi_{\text{hind}}}\mathbb{E}\left[\sum_{t=1}^T p^{\pi}_t\Tilde{D}_t\right]
\end{equation}
where $\Pi_{\text{hind}}$ includes all pricing policies $\pi$ which maps
$$\left\{x_1,\ldots,x_T,D_{1},\ldots,D_{t-1}, p_1,\ldots,p_{t-1},\theta^*,C\right\}$$
to the price $p^{\pi}_t$ offered at time $t$. Here the expectation is taken only with respect to the $\epsilon_t, t=1,\ldots,T$ and the potential randomness of the pricing policy $\pi$. In defining the benchmark, the hindsight pricing policy is allowed to use the future covariates information and also assumes the knowledge of the true model parameter $\theta^*.$ Accordingly, the regret of an online policy is defined by
$$\text{Reg}_T^\pi(\mathcal{P}_1,\ldots,\mathcal{P}_T) \coloneqq \mathbb{E}\left[  \text{OPT}(X_1,\ldots,X_T,C) -\sum_{t=1}^Tp^{\pi}_t\Tilde{D}_t\right]$$
where the expectation is taken with respect to the demand shock $\epsilon_t$'s, covariates $X_t$, and potential randomness in $\pi$. The online policy $\pi \in \Pi_{\text{on}}$ does not observe future covariates information $\{x_{t'}\}_{t'=t+1}^T$ or know the true parameter $\theta^*$.

We introduce the following measure to quantify the variation of the distributions over time:
$$\mathcal{W}_T(\mathcal{P}_1,\ldots,\mathcal{P}_T)\coloneqq \sum_{t=1}^T \mathcal{W}(\mathcal{P}_t,\bar{\mathcal{P}}_T),$$
where $\mathcal{W}(\cdot,\cdot)$ is the Wasserstein distance function to measure the distance between two distributions \citep{kantorovich1960mathematical} with the Euclidean distance as the underlying metric and $\bar{\mathcal{P}}_T \coloneqq \frac{\sum_{t=1}^T \mathcal{P}_t}{T}$, i.e., the uniform mixture of $\mathcal{P}_t$'s. When $\mathcal{P}_t$'s are identical for $t=1,\ldots,T$, then $\mathcal{W}_T(\mathcal{P}_1,\ldots,\mathcal{P}_T)=0$. This measure  has been used to quantify the intensity of non-i.i.d.-ness or non-stationarity of online optimization algorithms under a full information environment \citep{jiang2020online} and under a partial information environment \citep{liu2022non}.

\begin{theorem}[Regret Lower Bound]
\label{thm_LB}
The following lower bound holds for any policy $\pi \in \Pi_{\text{on}}$:
$$\textup{Reg}_T^\pi(\mathcal{P}_1,\ldots,\mathcal{P}_T)=\Omega(\mathcal{W}_T(\mathcal{P}_1,\ldots,\mathcal{P}_T)).$$
\end{theorem}

Theorem \ref{thm_LB} gives a lower bound of the regret which is exactly the variation measure above. In the previous unconstrained case, we note that the regret is affected by the covariates $x_t$'s through the bound on its norm. In contrast, for this constrained case, the variation in the covariates does affect the regret lower bound. The intuition for this contrast is that for the unconstrained setting, the variation in distribution only affects the learning procedure, i.e., the estimation of the true model parameter. For the constrained setting, the variation results in an additional challenge in balancing the resource consumption: the seller does not know whether the future periods will be more profitable or less, and hence whether to reserve more inventory or less for the future periods.


\subsection{Algorithm}

To generate some intuition for the algorithm, we first write out the deterministic optimization problem for the pricing problem,
\begin{align}
    \label{eq_RT_Det}
    \max_{p_1,...,p_T}\  & \sum_{t=1}^T \mathbb{E}\left[p_t \cdot g(A_t+B_t\cdot p_t)\right] \\
\text{s.t.}\  &  \sum_{t=1}^T \mathbb{E}\left[ g(A_t+B_t\cdot p_t)\right]\leq cT \nonumber
\end{align}
where $A_t = X_{t}^\top \alpha^*$ and $B_t = X_{t}^\top \beta^*$. Here $c=C/T$ denotes the average available inventory per time period. The optimization problem replaces the random demand with the corresponding expectation and only requires the constraint to be satisfied on expectation. It is easy to verify that the optimal value of this deterministic program upper bounds the hindsight benchmark $\E[\text{OPT}(X_1,...,X_T,C)]$. This upper bound relationship makes the deterministic program a nice starting point for algorithm development and theoretical analysis in the literature \citep{gallego1997multiproduct,talluri2004theory}.

The Lagrangian of \eqref{eq_RT_Det} is
$$L(\mu,p_{1:T})\coloneqq c\mu T+\sum_{t=1}^T \mathbb{E}\left[(p_t-\mu)\cdot g(A_t+B_t\cdot p_t)\right]$$
where the primal decision variables are $p_1,...,p_T$ and the dual decision variable is $\mu$. Such a formulation inspires the development of dual-based algorithms \citep{agrawal2016linear, li2022online}.

First, let $\mu^*$ be the optimal dual solution, and then the optimal primal price should accordingly be determined by
\begin{equation}
p_{t}^* = \argmax_{p_{t}\in [\mu^*, \bar{p}]} \ (p_t-\mu^*)\cdot g(A_t+B_t\cdot p_t).
\label{eqn:primal_opt_constr}
\end{equation}
Intuitively, the additional term $\mu^*$ reflects the cost arising from the limited inventory and can be interpreted as the shadow price. Also, we remark that the price in \eqref{eq_RT_Det} should be a function of the covariates $X_t$; we omit this dependency in the notation for simplicity without hurting the generality.

Second, if we see the primal price variables as fixed and focus on the dual variable, the Lagrangian becomes
\begin{equation}
L(\mu,p_{1:T})=\mu\cdot \sum_{t=1}^T \mathbb{E}\left[c-g(A_t+B_t\cdot p_t)\right] +\sum_{t=1}^T \mathbb{E}\left[p_t\cdot g(A_t+B_t\cdot p_t)\right]
\label{eqn:dual_opt_constr}
\end{equation}
where only the first summation involves the dual variable.

Algorithm \ref{alg_TS_C} implements the Thompson sampling algorithm for this constrained setting. The parameter learning part is the same as Algorithm \ref{alg_TS}. The price optimization step \eqref{eq_price_con} optimizes \eqref{eqn:primal_opt_constr} using $\mu_t$ as an estimate of $\mu^*$. The update of the dual price $\mu_t$ in \eqref{eqn:OGD_mu} essentially performs a projected online gradient descent with respect to \eqref{eqn:dual_opt_constr} and the step size is chosen to be $\frac{\bar{p}}{\sqrt{(\Bar{D}^2+\Bar{\sigma}^2)T}}$.

\begin{algorithm}[ht!]
\centering
\caption{Thompson Sampling with Inventory Constraint}
\label{alg_TS_C}
\begin{algorithmic}
\STATE{\textbf{Input:} Regularization parameter $\lambda$, initial dual variable $\mu_1=0$.}
\FOR{$t=1,\ldots,T$}
\STATE{Compute the estimator $\hat{\theta}_{t-1}=(\hat{\alpha}_{t-1},\hat{\beta}_{t-1})$ by \eqref{eq_quasi_reg} and observe the covariates $X_t=x_t$.}
\STATE{Compute the estimator $(\hat{a}_t,\hat{b}_t)\coloneqq (x_t^\top \hat{\alpha}_{t-1}, x_t^\top \hat{\beta}_{t-1})$.}
\STATE{Sample $\eta_t\sim \mathcal{N}(0,I_{2})$ and compute the parameter
$$(\tilde{a}_{t},\tilde{b}_t)= (\hat{a}_t,\hat{b}_t)+\left(2\sqrt{\lambda}\bar{\theta}+ \frac{2\bar{\sigma}}{\underline{g}}\sqrt{2\log T+2d\log\left(\frac{2d\lambda +T}{2d\lambda}\right)}\right)\tilde{M}_{t-1}^{-1/2}\eta_t.$$
Set the price by
\begin{equation}
\label{eq_price_con}
    p_{t}=\argmax_{p \in [\mu_t, \bar{p}]} \ (p-\mu_t)\cdot g(\tilde{a}_{t}+\tilde{b}_{t}\cdot p),
\end{equation}
and observe the demand $D_t$. }
\STATE{
If the remaining inventory can not fulfill the demand, i.e., $D_t\ge C_t$, then the algorithm stops, otherwise update the dual variable:
\begin{equation}
\mu_{t+1}=\text{Proj}_{[0,\Bar{p}]}\left(\mu_{t}+\frac{\bar{p}}{\sqrt{(\Bar{D}^2+\Bar{\sigma}^2)T}}\cdot (D_{t}-c)\right)
\label{eqn:OGD_mu}
\end{equation}
and the remaining inventory
$$C_{t+1} = C_t - D_t.$$
}
\ENDFOR
\end{algorithmic}
\end{algorithm}


\subsection{Regret Analysis}

For regret analysis, we introduce a few additional assumptions to ensure feasibility and tractability:
\begin{assumption}[Feasibility and Tractability]
\label{assmp_feasible}
We assume
\begin{itemize}
    \item[(a)] The average inventory per time period $c=C/T\in[0,\bar{D}]$.
    \item[(b)]  For all $x\in\mathcal{X}$, we have $x^\top \beta^*<0$ and $g(x^\top \alpha^*+x^\top \beta^* \cdot \bar{p})< c$.
    \item[(c)] The function $d\cdot g^{-1}(d)$ is a convex function for $d\in[0,\Bar{D}]$.
\end{itemize}
\end{assumption}
Part (a) and Part (b) ensure that the constrained problem is well-defined. Part (c) ensures the single-period pricing problem can be efficiently solved and is a standard assumption in the revenue management literature \citep{gallego1997multiproduct,chen2022primal,chen2022network} and holds for many $g(\cdot)$'s, including $g(x)=x$, $g(x)=x^2$ and $g(x)=\exp(x)$.


\begin{theorem}[Regret Upper Bound]
  \label{thm_TS_C_UB}
  Under Assumption \ref{assp_bound}, \ref{assp_error}, \ref{assp_g'}, \ref{assmp_gen_x_t}, \ref{assmp_feasible} with $T\geq 6$, if we choose the regularization parameter  $\lambda=1$, the regret of Algorithm \ref{alg_TS_C} can be bounded by
  \begin{multline*}
      40\Bar{g}\Bar{p}\bar{\gamma}\sqrt{2e \pi dT\log\left(\frac{2d+T}{2d}\right)\log(4T^2)}+\Bar{p}\sqrt{(\bar{D}^2+\bar{\sigma}^2)T}+7\Bar{D}\Bar{p}+\sqrt{2}(\bar{p}+\Bar{p}^2)\bar{g}\bar{\theta} \mathcal{W}_T(\mathcal{P}_1,\ldots,\mathcal{P}_T)\\
      =\tilde{O}\left(d \sqrt{T}+\mathcal{W}_T(\mathcal{P}_1,\ldots,\mathcal{P}_T)\right),
  \end{multline*}
  where $\bar{\gamma}=2\bar{\theta}+ \frac{2\bar{\sigma}}{\underline{g}}\sqrt{2\log T+2d\log\left(\frac{2d+T}{2d}\right)}$
\end{theorem}

To interpret the regret bound, we first make a comparison against Theorem \ref{thm_LB}. The variation measure $\mathcal{W}_T$ term appears in both the lower and upper bounds, and thus it provides a tight characterization of how the change of distribution affects the algorithm performance. Next, when the distributions $\mathcal{P}_t$'s are identical to each other, the $\mathcal{W}_T$ term disappears and the result recovers the existing bounds for the i.i.d. setting \citep{keskin2014dynamic,li2020dynamic}. 

Surprisingly the regret bound under the partial-information setting of the pricing problem, 
(i)~matches the one for full-information online constrained stochastic optimization \citep{jiang2020online}, and (ii)~is strictly better than the one for bandits with knapsacks (BwK) \citep{liu2022non} which is partial-information setting as well. This is because, in our partial-information pricing problem  the underlying demand model does not change over time, i.e., $(\alpha^*, \beta^*)$ remains the same over time; only the covariates distribution $\mathcal{P}_t$ changes over time. If  the demand model were to change over time, the regret bound above will become aligned with the BwK regret bound of \citet{liu2022non}. Finally, in comparison with the previous unconstrained setting, the results in this section show that the variation in the covariates distribution substantially affects the algorithm performance under the constrained setting.


\section{Numerical Experiments}
\label{sec_num_diss}

We consider the linear demand model $ D_t = x_t^\top \alpha^*+x_t^\top \beta^* \cdot p_t+\epsilon_t$ with two groups of numerical experiments:
\begin{itemize}
\item[(a)] Without inventory constraint: we implement and compare the performance of Algorithm \ref{alg_UCB} (with different numbers of Monte Carlo samples), Algorithm \ref{alg_TS}, and benchmark algorithms.
\item[(b)] With inventory constraint: we implement Algorithm \ref{alg_TS_C} and compare its performance against benchmark algorithms. We also examine the effect of different $\mathcal{W}_T$'s on the regret.
\end{itemize}
All the numeric results (such as the curves in the plot) are reported based on the average performance over 100 simulation trials. More experiment details are deferred to Appendix~\ref{sec_appn_num}.

\paragraph{Without inventory constraint.}\

\textbf{Environment Setup.} We set the horizon $T=1500$. The whole horizon is split into two phases with equal length. In the first phase, the first half of the covariates (dimension $1$ to $d/2$) are i.i.d. generated over time, while the second half (dimension $d/2+1$ to $d$) are all zero; in the second phase, the first half of the covariates (dimension $1$ to $d/2$) are all zero, while the second half (dimension $d/2+1$ to $d$) are i.i.d. generated over time. Moreover, we set the allowable price range as $[0.1,5]$. For each simulation trial, the true parameter $\alpha^*$ is generated by a uniform distribution over $\frac{1}{\sqrt{d}}[1,2]^{d}$ and the true parameter $\beta^*$ is generated by a uniform distribution over $-\frac{1}{\sqrt{d}}[0,1]^{d}$, the covariates (if non-zero) are always generated i.i.d. from a uniform distribution over $\frac{1}{\sqrt{d}}[0,1].$ The normalizing factor $\frac{1}{\sqrt{d}}$ ensures the demand always stays on the same magnitude for different $d$.

\textbf{Results.} Figure \ref{fig:d6} and \ref{fig:d12} compare the cumulative regrets of Algorithm \ref{alg_UCB} (UCB) with different Monte Carlo samples, and Algorithm \ref{alg_TS} (TS), with two benchmarks under $d=6$ and $d=12$ respectively. The first benchmark algorithm is Algorithm \ref{alg_TS_exante}, the original Thompson sampling  (TS\_Ori) which does the random perturbation in the original parameter space instead of the projected space. For the second benchmark, we adapt the covariate-free constrained iterated least square (CILS) algorithm \citep{keskin2014dynamic} designed for the i.i.d. covariates setting. We refer to Appendix~\ref{sec_appn_num} for more details about CILS. We notice that the CILS algorithm performs well for the first phase but fails to learn the parameter
during the second phase. UCB performs stably well under different numbers of Monte Carlo samples. At the beginning of the second phase, the curves of UCB and TS grow quickly but then they all flatten. In addition,  TS outperforms the original Thompson sampling TS\_Ori in both figures, which is consistent with the improved factor of $\sqrt{d}$ in our theoretical regret bound.

\begin{figure}[ht!]
     \centering
     \begin{subfigure}[b]{0.45\textwidth}
         \centering
         \includegraphics[width=\textwidth]{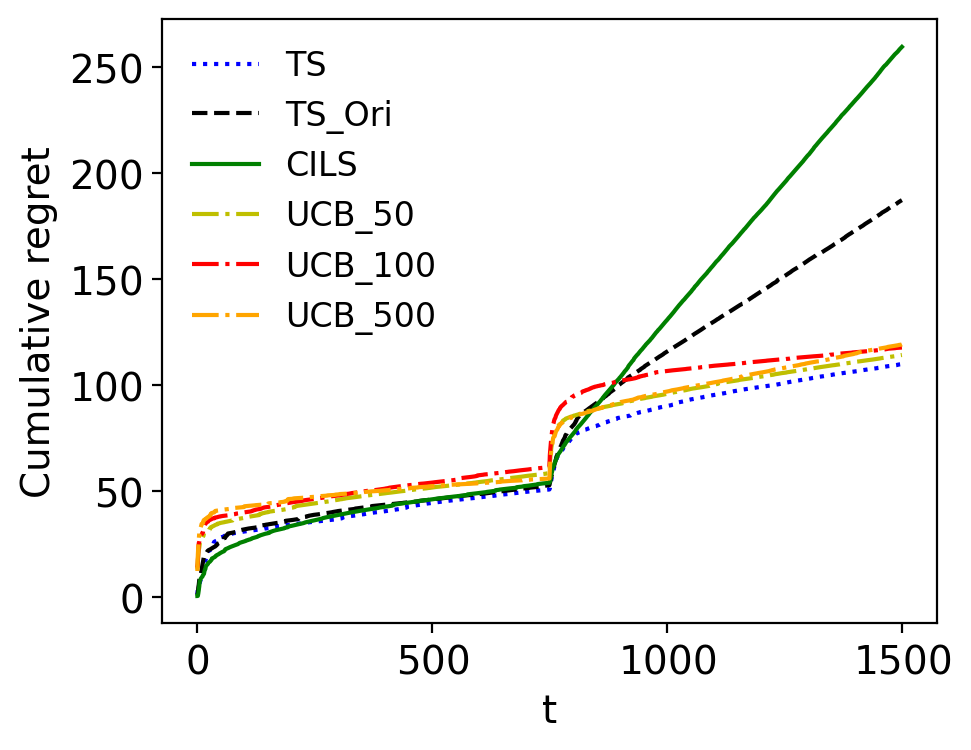}
         \caption{Cumulative regret, $d=6$.}
         \label{fig:d6}
     \end{subfigure}
     \begin{subfigure}[b]{0.45\textwidth}
         \centering
         \includegraphics[width=\textwidth]{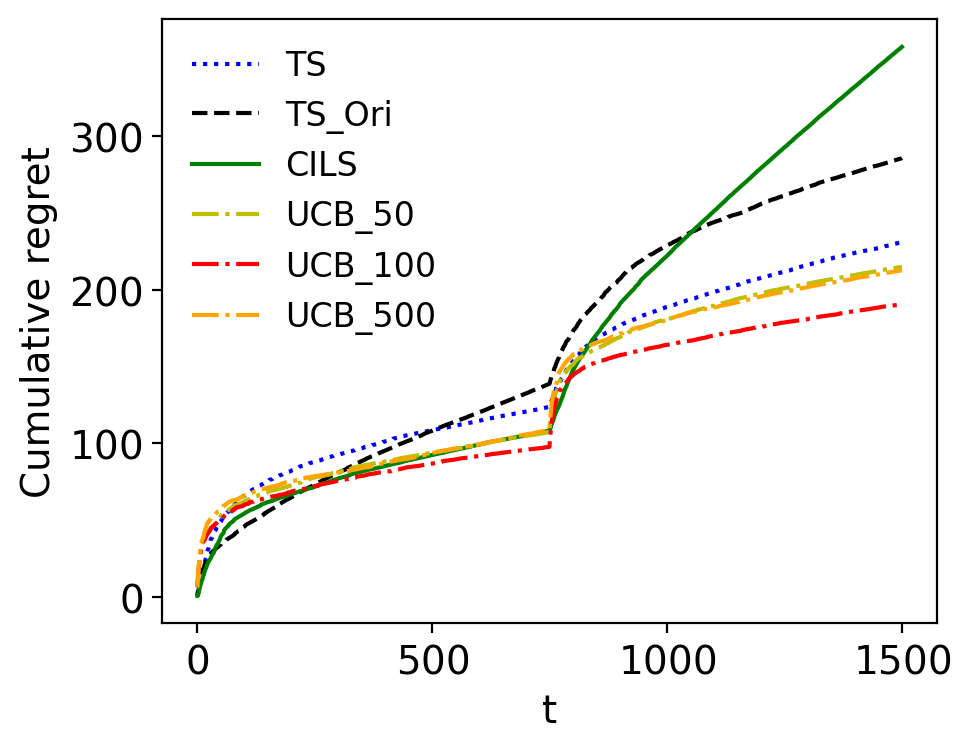}
         \caption{Cumulative regret, $d=12$.}
         \label{fig:d12}
     \end{subfigure}
        \caption{Experiment (a): performances of algorithms without inventory constraint.}
        \label{fig:uncon}
\end{figure}

\paragraph{With inventory constraint.}\

\textbf{Environment Setup.} We also consider splitting the horizon $T$ into two phases with equal length. The covariates follow three patterns with different variation measure $\mathcal{W}_T$'s:
\begin{itemize}
    \item Large:  the covariates are generated i.i.d. from a uniform distribution over $\frac{0.1}{\sqrt{d}}[0,1]$ in the first phase and  $\frac{5}{\sqrt{d}}[0,1]$ in the second phase, which gives the largest $\mathcal{W}_T$ among the three patterns.
    \item Small: the covariates are generated i.i.d. from a uniform distribution over $\frac{1}{\sqrt{d}}[0,1]$ in the first phase and  $\frac{5}{\sqrt{d}}[0,1]$ in the second phase, where $\mathcal{W}_T$ is smaller than the case of Large.
    \item Zero: the covariates are generated i.i.d. from a uniform distribution over  $\frac{5}{\sqrt{d}}[0,1]$ in both phases, where the variation measure $\mathcal{W}_T=0$.
\end{itemize}

We set the averaged inventory over periods as $c=C/T=0.5$. As a reference point, the averaged optimal demands for the above three patterns when there is no inventory constraint are approximately $1, 1.2, 1.9,$ respectively. We set the dimension of covariates $d=6$. The generations of $\theta^*$ and the range of pricing remain the same as before.

\textbf{Results.} Figure \ref{fig:reg_con} reports the performance of Algorithm \ref{alg_TS_C} (TS\_Dual) over $T=100,300,\ldots,1500$ with two other benchmarks under the Large pattern. The benchmark algorithm Greedy\_Single solves the deterministic single-period pricing problem under inventory $c$ with using $\hat{\theta}_t$ as $\theta^*$ and algorithm Greedy\_Dual is same as TS\_Dual expect replacing $(\tilde{a}_t,\tilde{b}_t)$ by $(\hat{a}_t,\hat{b}_t)$, i.e., without the Thompson sampling step. More details of these two algorithms can be found in Appendix~\ref{sec_appn_num}.  Figure \ref{fig:reg_con} shows TS\_Dual outperforms two benchmark algorithms consistently for different $T$'s. Furthermore, its regret curve does not scale with $O(\sqrt{T})$. This verifies the results in Theorem \ref{thm_TS_C_UB} where the regret upper bound includes an additional term $\mathcal{W}_T$ which also depends on $T$. We further examine the influence of $\mathcal{W}_T$ on the regret. We plot the regrets of TS\_Dual over $T=100,300,\ldots,1500$ under different patterns in Figure \ref{fig:reg_con_var}. We observe that the regrets are consistently increasing in $\mathcal{W}_T$ for different $T$'s, which numerically coincides the results in Theorem \ref{thm_TS_C_UB}.

\begin{figure}[ht!]
     \centering
     \begin{subfigure}[b]{0.4\textwidth}
         \centering
         \includegraphics[width=\textwidth]{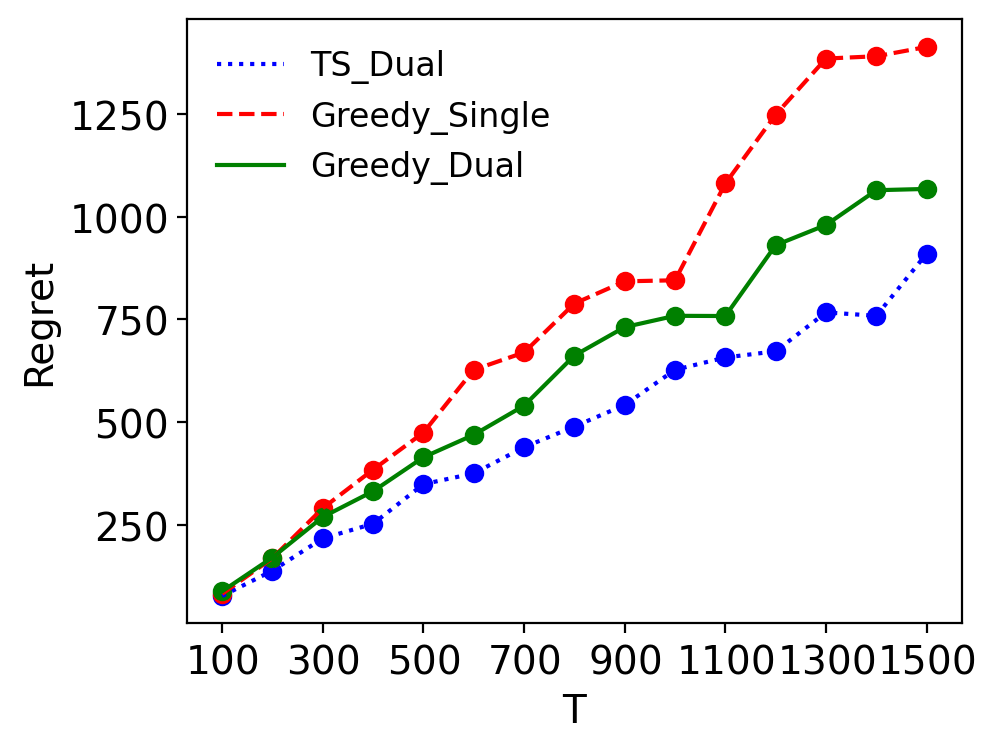}
         \caption{Regret under large $\mathcal{W}_T$.}
         \label{fig:reg_con}
     \end{subfigure}
     \begin{subfigure}[b]{0.4\textwidth}
         \centering
         \includegraphics[width=\textwidth]{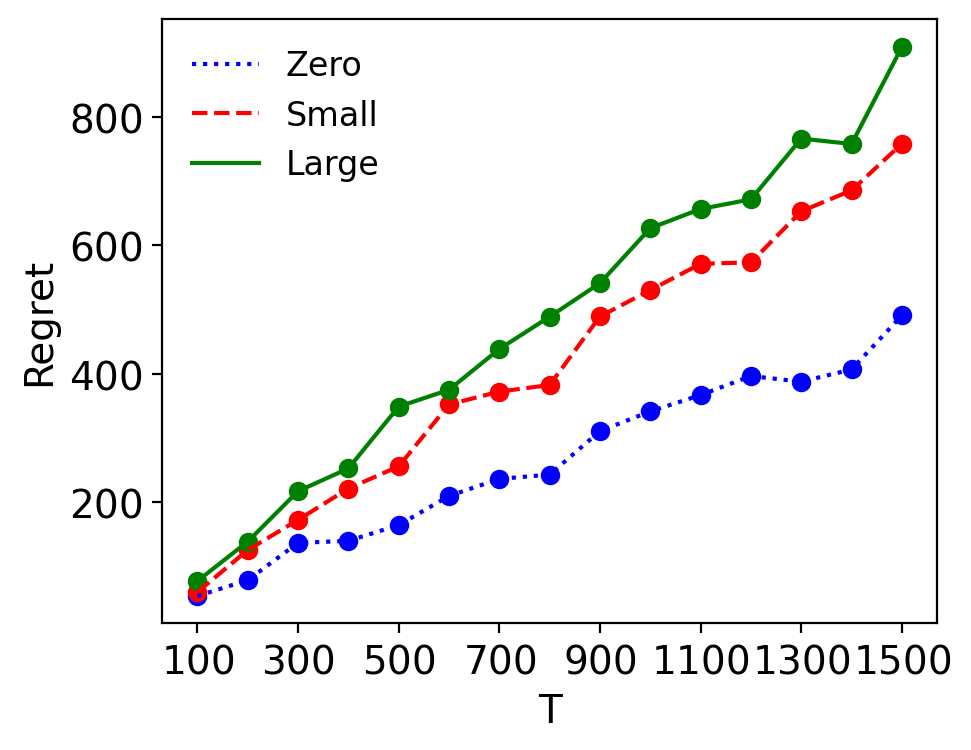}
         \caption{Regret of TS\_Dual when varying $\mathcal{W}_T$.}
         \label{fig:reg_con_var}
     \end{subfigure}
        \caption{Experiment (b): performances of algorithms with inventory constraint.}
        \label{fig:con}
\end{figure}

\section{Conclusion}
\label{sec_conclusion}
In this paper, we consider the dynamic pricing problem under a generalized linear demand model where the demand is dependent on the price and arbitrary covariates. We propose UCB-based and Thompson sampling-based pricing algorithms inspired by bandits literature, which can both achieve the optimal order of regret by leveraging the special structure of the pricing problem. We further consider the pricing problem with inventory constraint and show the Thompson sampling algorithm can be extended to this case.


\bibliographystyle{informs2014} 
\bibliography{bundle1} 

\begin{thebibliography}{52}
\expandafter\ifx\csname natexlab\endcsname\relax\def\natexlab#1{#1}\fi
\expandafter\ifx\csname url\endcsname\relax
  \def\url#1{{\tt #1}}\fi
\expandafter\ifx\csname urlprefix\endcsname\relax\def\urlprefix{URL }\fi
\expandafter\ifx\csname urlstyle\endcsname\relax
  \expandafter\ifx\csname doi\endcsname\relax
  \def\doi#1{doi:\discretionary{}{}{}#1}\fi \else
  \expandafter\ifx\csname doi\endcsname\relax
  \def\doi{doi:\discretionary{}{}{}\begingroup \urlstyle{rm}\Url}\fi \fi

\bibitem[{Abbasi-Yadkori et~al.(2011)Abbasi-Yadkori, P{\'a}l, and
  Szepesv{\'a}ri}]{abbasi2011improved}
Abbasi-Yadkori, Yasin, D{\'a}vid P{\'a}l, Csaba Szepesv{\'a}ri. 2011.
\newblock Improved algorithms for linear stochastic bandits.
\newblock {\it Advances in neural information processing systems\/} {\bf 24}
  2312--2320.

\bibitem[{Abeille and Lazaric(2017)}]{abeille2017linear}
Abeille, Marc, Alessandro Lazaric. 2017.
\newblock Linear thompson sampling revisited.
\newblock {\it Artificial Intelligence and Statistics\/}. PMLR, 176--184.

\bibitem[{Agrawal and Devanur(2016)}]{agrawal2016linear}
Agrawal, Shipra, Nikhil Devanur. 2016.
\newblock Linear contextual bandits with knapsacks.
\newblock {\it Advances in Neural Information Processing Systems\/} {\bf 29}.

\bibitem[{Agrawal and Goyal(2013)}]{agrawal2013thompson}
Agrawal, Shipra, Navin Goyal. 2013.
\newblock Thompson sampling for contextual bandits with linear payoffs.
\newblock {\it International Conference on Machine Learning\/}. PMLR, 127--135.

\bibitem[{Armstrong et~al.(2005)Armstrong, Green et~al.}]{armstrong2005demand}
Armstrong, Jon~Scott, Kesten~C Green, et~al. 2005.
\newblock Demand forecasting: evidence-based methods.
\newblock Tech. rep., Citeseer.

\bibitem[{Baardman et~al.(2020)Baardman, Boroujeni, Cohen-Hillel, Panchamgam,
  and Perakis}]{baardman2020detecting}
Baardman, Lennart, Setareh~Borjian Boroujeni, Tamar Cohen-Hillel, Kiran
  Panchamgam, Georgia Perakis. 2020.
\newblock Detecting customer trends for optimal promotion targeting.
\newblock {\it Manufacturing \& Service Operations Management\/} .

\bibitem[{Bailey et~al.(2019)Bailey, Johnston, Kuchler, Stroebel, and
  Wong}]{bailey2019peer}
Bailey, Michael, Drew~M Johnston, Theresa Kuchler, Johannes Stroebel, Arlene
  Wong. 2019.
\newblock Peer effects in product adoption.
\newblock Tech. rep., National Bureau of Economic Research.

\bibitem[{Ban and Keskin(2021)}]{ban2021personalized}
Ban, Gah-Yi, N~Bora Keskin. 2021.
\newblock Personalized dynamic pricing with machine learning: High-dimensional
  features and heterogeneous elasticity.
\newblock {\it Management Science\/} {\bf 67}(9) 5549--5568.

\bibitem[{Bastani et~al.(2021)Bastani, Simchi-Levi, and Zhu}]{bastani2021meta}
Bastani, Hamsa, David Simchi-Levi, Ruihao Zhu. 2021.
\newblock Meta dynamic pricing: Transfer learning across experiments.
\newblock {\it Management Science\/} .

\bibitem[{Boyd et~al.(2004)Boyd, Boyd, and Vandenberghe}]{boyd2004convex}
Boyd, Stephen, Stephen~P Boyd, Lieven Vandenberghe. 2004.
\newblock {\it Convex optimization\/}.
\newblock Cambridge university press.

\bibitem[{Broder and Rusmevichientong(2012)}]{broder2012dynamic}
Broder, Josef, Paat Rusmevichientong. 2012.
\newblock Dynamic pricing under a general parametric choice model.
\newblock {\it Operations Research\/} {\bf 60}(4) 965--980.

\bibitem[{Bubeck et~al.(2015)}]{bubeck2015convex}
Bubeck, S{\'e}bastien, et~al. 2015.
\newblock Convex optimization: Algorithms and complexity.
\newblock {\it Foundations and Trends{\textregistered} in Machine Learning\/}
  {\bf 8}(3-4) 231--357.

\bibitem[{Chen and Gallego(2022)}]{chen2022primal}
Chen, Ningyuan, Guillermo Gallego. 2022.
\newblock A primal--dual learning algorithm for personalized dynamic pricing
  with an inventory constraint.
\newblock {\it Mathematics of Operations Research\/} {\bf 47}(4) 2585--2613.

\bibitem[{Chen and Shi(2022)}]{chen2022network}
Chen, Yiwei, Cong Shi. 2022.
\newblock Network revenue management with online inverse batch gradient descent
  method.
\newblock {\it Available at SSRN 3331939\/} .

\bibitem[{Chu et~al.(2011)Chu, Li, Reyzin, and Schapire}]{chu2011contextual}
Chu, Wei, Lihong Li, Lev Reyzin, Robert Schapire. 2011.
\newblock Contextual bandits with linear payoff functions.
\newblock {\it Proceedings of the Fourteenth International Conference on
  Artificial Intelligence and Statistics\/}. JMLR Workshop and Conference
  Proceedings, 208--214.

\bibitem[{Cohen et~al.(2020)Cohen, Lobel, and Paes~Leme}]{cohen2020feature}
Cohen, Maxime~C, Ilan Lobel, Renato Paes~Leme. 2020.
\newblock Feature-based dynamic pricing.
\newblock {\it Management Science\/} {\bf 66}(11) 4921--4943.

\bibitem[{Dani et~al.(2008)Dani, Hayes, and Kakade}]{dani2008stochastic}
Dani, Varsha, Thomas~P Hayes, Sham~M Kakade. 2008.
\newblock Stochastic linear optimization under bandit feedback .

\bibitem[{Den~Boer(2015)}]{den2015dynamic}
Den~Boer, Arnoud~V. 2015.
\newblock Dynamic pricing and learning: historical origins, current research,
  and new directions.
\newblock {\it Surveys in operations research and management science\/} {\bf
  20}(1) 1--18.

\bibitem[{den Boer and Zwart(2014)}]{den2014simultaneously}
den Boer, Arnoud~V, Bert Zwart. 2014.
\newblock Simultaneously learning and optimizing using controlled variance
  pricing.
\newblock {\it Management science\/} {\bf 60}(3) 770--783.

\bibitem[{Farias et~al.(2022)Farias, Moallemi, Peng, and
  Zheng}]{farias2022synthetically}
Farias, Vivek, Ciamac Moallemi, Tianyi Peng, Andrew Zheng. 2022.
\newblock Synthetically controlled bandits.
\newblock {\it arXiv preprint arXiv:2202.07079\/} .

\bibitem[{Ferreira et~al.(2016)Ferreira, Lee, and
  Simchi-Levi}]{ferreira2016analytics}
Ferreira, Kris~Johnson, Bin Hong~Alex Lee, David Simchi-Levi. 2016.
\newblock Analytics for an online retailer: Demand forecasting and price
  optimization.
\newblock {\it Manufacturing \& Service Operations Management\/} {\bf 18}(1)
  69--88.

\bibitem[{Ferreira et~al.(2018)Ferreira, Simchi-Levi, and
  Wang}]{ferreira2018online}
Ferreira, Kris~Johnson, David Simchi-Levi, He~Wang. 2018.
\newblock Online network revenue management using thompson sampling.
\newblock {\it Operations research\/} {\bf 66}(6) 1586--1602.

\bibitem[{Filippi et~al.(2010)Filippi, Cappe, Garivier, and
  Szepesv{\'a}ri}]{filippi2010parametric}
Filippi, Sarah, Olivier Cappe, Aur{\'e}lien Garivier, Csaba Szepesv{\'a}ri.
  2010.
\newblock Parametric bandits: The generalized linear case.
\newblock {\it NIPS\/}, vol.~23. 586--594.

\bibitem[{Gallego and Van~Ryzin(1997)}]{gallego1997multiproduct}
Gallego, Guillermo, Garrett Van~Ryzin. 1997.
\newblock A multiproduct dynamic pricing problem and its applications to
  network yield management.
\newblock {\it Operations research\/} {\bf 45}(1) 24--41.

\bibitem[{Goolsbee and Klenow(2002)}]{goolsbee2002evidence}
Goolsbee, Austan, Peter~J Klenow. 2002.
\newblock Evidence on learning and network externalities in the diffusion of
  home computers.
\newblock {\it The Journal of Law and Economics\/} {\bf 45}(2) 317--343.

\bibitem[{Hazan et~al.(2016)}]{hazan2016introduction}
Hazan, Elad, et~al. 2016.
\newblock Introduction to online convex optimization.
\newblock {\it Foundations and Trends{\textregistered} in Optimization\/} {\bf
  2}(3-4) 157--325.

\bibitem[{Javanmard(2017)}]{javanmard2017perishability}
Javanmard, Adel. 2017.
\newblock Perishability of data: dynamic pricing under varying-coefficient
  models.
\newblock {\it The Journal of Machine Learning Research\/} {\bf 18}(1)
  1714--1744.

\bibitem[{Javanmard and Nazerzadeh(2019)}]{javanmard2019dynamic}
Javanmard, Adel, Hamid Nazerzadeh. 2019.
\newblock Dynamic pricing in high-dimensions.
\newblock {\it The Journal of Machine Learning Research\/} {\bf 20}(1)
  315--363.

\bibitem[{Jiang et~al.(2020)Jiang, Li, and Zhang}]{jiang2020online}
Jiang, Jiashuo, Xiaocheng Li, Jiawei Zhang. 2020.
\newblock Online stochastic optimization with wasserstein based
  non-stationarity.
\newblock {\it arXiv preprint arXiv:2012.06961\/} .

\bibitem[{Kantorovich(1960)}]{kantorovich1960mathematical}
Kantorovich, Leonid~V. 1960.
\newblock Mathematical methods of organizing and planning production.
\newblock {\it Management science\/} {\bf 6}(4) 366--422.

\bibitem[{Kantorovich and Rubinshtein(1958)}]{kantorovich1958space}
Kantorovich, Leonid~Vasilevich, SG~Rubinshtein. 1958.
\newblock On a space of totally additive functions.
\newblock {\it Vestnik of the St. Petersburg University: Mathematics\/} {\bf
  13}(7) 52--59.

\bibitem[{Keskin et~al.(2023)Keskin, Li, and Sunar}]{keskin2023data}
Keskin, N~Bora, Yuexing Li, Nur Sunar. 2023.
\newblock Data-driven clustering and feature-based retail electricity pricing
  with smart meters.
\newblock {\it Available at SSRN 3686518\/} .

\bibitem[{Keskin and Zeevi(2014)}]{keskin2014dynamic}
Keskin, N~Bora, Assaf Zeevi. 2014.
\newblock Dynamic pricing with an unknown demand model: Asymptotically optimal
  semi-myopic policies.
\newblock {\it Operations research\/} {\bf 62}(5) 1142--1167.

\bibitem[{Keskin and Zeevi(2018)}]{keskin2018incomplete}
Keskin, N~Bora, Assaf Zeevi. 2018.
\newblock On incomplete learning and certainty-equivalence control.
\newblock {\it Operations Research\/} {\bf 66}(4) 1136--1167.

\bibitem[{Lai and Wei(1982)}]{lai1982least}
Lai, Tze~Leung, Ching~Zong Wei. 1982.
\newblock Least squares estimates in stochastic regression models with
  applications to identification and control of dynamic systems.
\newblock {\it The Annals of Statistics\/} {\bf 10}(1) 154--166.

\bibitem[{Lattimore and Szepesv{\'a}ri(2020)}]{lattimore2020bandit}
Lattimore, Tor, Csaba Szepesv{\'a}ri. 2020.
\newblock {\it Bandit algorithms\/}.
\newblock Cambridge University Press.

\bibitem[{Li and Ye(2022)}]{li2022online}
Li, Xiaocheng, Yinyu Ye. 2022.
\newblock Online linear programming: Dual convergence, new algorithms, and
  regret bounds.
\newblock {\it Operations Research\/} {\bf 70}(5) 2948--2966.

\bibitem[{Li and Zheng(2020)}]{li2020dynamic}
Li, Xiaocheng, Zeyu Zheng. 2020.
\newblock Dynamic pricing with external information and inventory constraint.
\newblock {\it Available at SSRN 3458662\/} .

\bibitem[{Liu et~al.(2022)Liu, Jiang, and Li}]{liu2022non}
Liu, Shang, Jiashuo Jiang, Xiaocheng Li. 2022.
\newblock Non-stationary bandits with knapsacks.
\newblock {\it Advances in Neural Information Processing Systems\/} {\bf 35}
  16522--16532.

\bibitem[{Luo et~al.(2021)Luo, Sun, and Liu}]{luo2021distribution}
Luo, Yiyun, Will~Wei Sun, Yufeng Liu. 2021.
\newblock Distribution-free contextual dynamic pricing.
\newblock {\it arXiv preprint arXiv:2109.07340\/} .

\bibitem[{Mao et~al.(2018)Mao, Leme, and Schneider}]{mao2018contextual}
Mao, Jieming, Renato~Paes Leme, Jon Schneider. 2018.
\newblock Contextual pricing for lipschitz buyers.
\newblock {\it NeurIPS\/}. 5648--5656.

\bibitem[{Motwani and Raghavan(1995)}]{motwani1995randomized}
Motwani, Rajeev, Prabhakar Raghavan. 1995.
\newblock {\it Randomized algorithms\/}.
\newblock Cambridge university press.

\bibitem[{Nasr and Elshar(2018)}]{nasr2018continuous}
Nasr, Walid~W, Ibrahim~J Elshar. 2018.
\newblock Continuous inventory control with stochastic and non-stationary
  markovian demand.
\newblock {\it European Journal of Operational Research\/} {\bf 270}(1)
  198--217.

\bibitem[{Neale and Willems(2009)}]{neale2009managing}
Neale, John~J, Sean~P Willems. 2009.
\newblock Managing inventory in supply chains with nonstationary demand.
\newblock {\it Interfaces\/} {\bf 39}(5) 388--399.

\bibitem[{Qiang and Bayati(2016)}]{qiang2016dynamic}
Qiang, Sheng, Mohsen Bayati. 2016.
\newblock Dynamic pricing with demand covariates.
\newblock {\it Available at SSRN 2765257\/} .

\bibitem[{Russo and Van~Roy(2014)}]{russo2014learning2}
Russo, Daniel, Benjamin Van~Roy. 2014.
\newblock Learning to optimize via posterior sampling.
\newblock {\it Mathematics of Operations Research\/} {\bf 39}(4) 1221--1243.

\bibitem[{Seiler et~al.(2017)Seiler, Yao, and Wang}]{seiler2017does}
Seiler, Stephan, Song Yao, Wenbo Wang. 2017.
\newblock Does online word of mouth increase demand?(and how?) evidence from a
  natural experiment.
\newblock {\it Marketing Science\/} {\bf 36}(6) 838--861.

\bibitem[{Talluri et~al.(2004)Talluri, Van~Ryzin, and
  Van~Ryzin}]{talluri2004theory}
Talluri, Kalyan~T, Garrett Van~Ryzin, Garrett Van~Ryzin. 2004.
\newblock {\it The theory and practice of revenue management\/}, vol.~1.
\newblock Springer.

\bibitem[{Vershynin(2018)}]{vershynin2018high}
Vershynin, Roman. 2018.
\newblock {\it High-dimensional probability: An introduction with applications
  in data science\/}, vol.~47.
\newblock Cambridge university press.

\bibitem[{Wedderburn(1974)}]{wedderburn1974quasi}
Wedderburn, Robert~WM. 1974.
\newblock Quasi-likelihood functions, generalized linear models, and the
  gauss—newton method.
\newblock {\it Biometrika\/} {\bf 61}(3) 439--447.

\bibitem[{Xu and Wang(2021)}]{xu2021logarithmic}
Xu, Jianyu, Yu-xiang Wang. 2021.
\newblock Logarithmic regret in feature-based dynamic pricing.
\newblock {\it NeurIPS\/} .

\bibitem[{Zhu and Zheng(2020)}]{zhu2020demands}
Zhu, Feng, Zeyu Zheng. 2020.
\newblock When demands evolve larger and noisier: Learning and earning in a
  growing environment.
\newblock {\it International Conference on Machine Learning\/}. PMLR,
  11629--11638.

\end{thebibliography}


%
%
%
\appendix

\section{Better Regret with Known Price Coefficient}

\label{sec_logT}

In the main paper, all the regret bounds are on the order of $O(\sqrt{T}).$ Here we consider a setting of known price coefficient, i.e., $\beta^*$ in \eqref{fun_demand} is known and show that a $O(\log T)$ regret is achievable under such a setting. This means that the knowledge of the price coefficient is critical in determining the regret order. The algorithm and analysis presented in the following are mainly for two purposes: (i) to identify a difference between the dynamic pricing problem and the bandits problem; (ii) to explain the achievability of $O(\log T)$ regret dependency in the literature. In the rest of this subsection, we denote $p^*((\alpha,\beta),x)=\argmax_{p\in [\underline{p},\bar{p}]}r(p;x^\top\alpha,x^\top\beta)$ as the optimal price under the covariates $x\in\mathcal{X}$ and the parameter $(\alpha,\beta)$.

\begin{assumption}[Known $\beta^*$ and smoothness]
\label{assp_logT_gamma}
Assume $\beta^*$ in \eqref{fun_demand} is known. In addition, assume there exists a constant $C$ such that the optimal expected revenue function satisfies
$$\left|r^*(x^\top \alpha^*,x^\top\beta^*)-r\left(p^*(\theta,x);x^\top \alpha^*,x^\top\beta^*\right)\right|\leq C(x^\top \alpha^*-x^\top \alpha)^2,$$
for all $x\in\mathcal{X}$, $\theta, \theta^*\in\Theta$ with $\theta=(\alpha,\beta^*)$ and $\theta^*=(\alpha^*,\beta^*).$
\end{assumption}

To interpret the condition in the assumption,  the left-hand-side represents the revenue loss caused by using a wrong parameter $\theta$ for pricing, while the right-hand-side is quadratic in terms of the linear estimation error. One sufficient condition for the assumption is that $r(p;x^\top \alpha^*,x^\top\beta^*)$ is continuously twice differentiable with respect to $p$ for all possible $\theta^*$ and $x$, and $p^*(\theta,x)$ is Lipschitz in $x^\top \alpha$. Essentially, this condition does not impose extra restriction upon Assumptions \ref{assp_bound}, \ref{assp_g}, and \ref{assp_error}; in other words, almost all the demand models that satisfy the previous assumptions also meet this condition under the knowledge of $\beta^*$. For example, this condition can be met by the binary demand model \eqref{BDM2} with a log-concave unknown noise \citep{javanmard2019dynamic} and by the linear demand model \eqref{LDM}. It is also analogous to the ``well separation'' condition in the covariate-free case \citep{broder2012dynamic}.

To proceed with the algorithm description, we first slightly revise the MLE estimator in Section \ref{sec_Quasi_MLE} for known $\beta^*$. Specifically, we redefine the misspecified likelihood function for the case of known $\beta^*$ as
\begin{equation*}
\tilde{l}_t(\alpha)\coloneqq -\int_{D_{t}}^{g(x_{t}^\top \alpha+x_t^\top\beta^*\cdot p_t)}\frac{1}{h(u)}(u-D_{t})\mathrm{d}u,
\end{equation*}
where the function $h(u)$ is the same as in Section \ref{sec_Quasi_MLE}. Then the estimator becomes
\begin{equation}
    \hat{\alpha}_t:=\argmax_{\alpha\in \Theta_{\alpha}}
    -\frac{\lambda \underline{g} \|\alpha\|_2^2}{2}+\sum_{\tau=1}^t \tilde{l}_{\tau}(\alpha),
    \label{eq_quasi_reg_logT}
\end{equation}
where $\Theta_{\alpha}$ denotes the subspace $\{\alpha: (\alpha,\beta^*)\in \Theta\}$. Compared to the previous case of unknown $\beta^*$, the only change made here is to plug in the known value of $\beta^*$ and to restrict the attention to estimating the unknown $\alpha^*.$ Accordingly, we revise the definition of the (cumulative) design matrix as  $$\Tilde{M}_t\coloneqq \lambda I_d+\sum_{\tau=1}^t x_{\tau} x_{\tau}^\top$$ with $I_d$ as an identity matrix of dimension $d$.

The following result is parallel to  Corollary \ref{MLEbound}. We omit the proof as it is the same as the previous case of unknown $\gamma^*$ except for some minor notation changes.

\begin{corollary}
\label{MLEbound_logT}
For all $\lambda>0$,
$$\mathbb{P}\left( \exists t\in \{1,...,T\}:  \left\|\hat{\alpha}_t-\alpha^* \right\|_{\tilde{M}_t}\geq 2\sqrt{\lambda}\bar{\theta}+ \frac{2\bar{\sigma}}{\underline{g}}\sqrt{2\log\left(T\right)+d\log\left(\frac{d\lambda +T}{d\lambda}\right)} \right)\leq \frac{1}{T}.$$
\end{corollary}

Algorithm \ref{alg_CE} describes a certainty-equivalent pricing policy. At each time step, the algorithm performs a regularized quasi-MLE to obtain the estimator $\hat{\alpha}_{t-1}$. Then it assumes $\hat{\alpha}_{t-1}$ to be the true parameter and finds the corresponding optimal price.

\begin{algorithm}[ht!]
\caption{Certainty-Equivalent Pricing}
\label{alg_CE}
\begin{algorithmic}
\STATE{\textbf{Input:} Regularization parameter $\lambda$.}
\FOR{$t=1,...,T$}
\STATE{Compute the estimator $\hat{\alpha}_{t-1}$ by \eqref{eq_quasi_reg_logT}, observe feature $x_t$ and set the price by
\begin{equation*}
    p_{t}=p^*\left(\hat{\theta}_{t-1}, x_t\right)
\end{equation*}
where $\hat{\theta}_{t-1}=(\hat{\alpha}_{t-1},\beta^*)$}.
\ENDFOR
\end{algorithmic}
\end{algorithm}

\begin{theorem}
Under Assumptions \ref{assp_bound}, \ref{assp_g}, \ref{assp_error}, \ref{assp_logT_gamma} and with any sequence $\{x_t\}_{t=1,...,T}$, if we choose the regularization parameter $\lambda=1$, the regret of Algorithm \ref{alg_CE} is upper bounded by
$$2C\bar{\gamma}^{'2}d\log\left(\frac{d+T}{d}\right)+\bar{p}\Bar{D} = \tilde{O}(d^2\log^2 T)$$
where $\bar{\gamma}^{'}=2\bar{\theta}+ \frac{2\bar{\sigma}}{\underline{g}}\sqrt{2\log T+d\log\left(\frac{d+T}{d}\right)}$ and $C$ is defined in Assumption \ref{assp_logT_gamma}.
\label{thm_regret_bound_LogT}
\end{theorem}

Theorem \ref{thm_regret_bound_LogT} provides a regret upper bound for Algorithm \ref{alg_CE}. We remark that it is unnecessary for Algorithm \ref{alg_CE} to compute the estimator at every time step. \cite{javanmard2019dynamic} solve an L$_1$ regularized linear regression on geometric time intervals, and the scheme can also be applied to Algorithm \ref{alg_CE} with the same order of regret bound. The frequent or infrequent estimation scheme makes no analytical difference and the choice mainly accounts for computation consideration. \cite{xu2021logarithmic} study the special case of the binary demand model with unit price coefficient (i.e., known $\beta^*$), and they derive the same order of regret bound as Theorem \ref{thm_regret_bound_LogT} under arbitrary covariates using online Newton's method. The intuition is that the convergence rate of Newton's method is on the same order with the MLE estimator, so the corresponding output can be viewed as an approximate MLE estimator at each time step, and the approximation will not deteriorate the regret performance. This is aligned with Theorem \ref{thm:UCB_UB_2} and Theorem \ref{thm_TS_UB_appx} where an approximate quasi-MLE estimator is used.

In general, many existing the $o(\sqrt{T})$ regret bounds \citep{broder2012dynamic, javanmard2017perishability, javanmard2019dynamic, xu2021logarithmic} fall into this paradigm of
$$\text{known price coefficient} \ + \ \text{certainty-equivalent policy}.$$
Intuitively, when the price coefficient is known, the price $p_t$ will not interfere the learning of $\alpha^*.$ Thus there is no need to do price exploration like UCB or TS, and the regret purely reflects the cumulative learning rate of $\alpha^*.$ This disentanglement of pricing decisions from parameter estimation makes the setting of known $\beta^*$ analogous to the ``full information'' setting in online learning literature. In contrast, when the price coefficient is unknown, the pricing decisions will affect the learning rate of $\beta^*$, thus the setting of unknown $\beta^*$ is more aligned with the ``partial information'' setting such as the bandits problem.

\paragraph{Interpreting the result under a linear demand model.}\

We use a linear demand model to further illustrate the contrast between $O(\sqrt{T})$ and $O(\log T)$ regret dependency. Consider the demand follows
$$D_t = a^* + b^* p_t +\epsilon_t$$
for some $a^*>0$ and $b^*<0.$ At time $t$, the seller sets the price by $p_t=-\frac{\hat{a}_t}{2\hat{b}_t}$ for some estimators $\hat{a}_t$ and $\hat{b}_t$ which could be from either Algorithm \ref{alg_UCB} (optimistic estimators) or Algorithm \ref{alg_CE} (CE estimators).
Then the single step regret can be expressed by a function of the true parameters and the estimators,
\begin{equation}
 \text{Reg}_t = r^*(a^*,b^*) - r(p_t;a^*,b^*) = -2b^*\left(\frac{\hat{a}_t}{2\hat{b}_t}-\frac{a^*}{2b^*}\right)^2  = -\frac{(a^*\hat{b}_t-\hat{a}_tb^*)^2}{2b^*\hat{b}_t^2}
 \label{eqn_explain}
\end{equation}
\begin{itemize}
    \item When the price coefficient is known, $\hat{b}_t = b^*$. The equality becomes
     $$\text{Reg}_t = -\frac{1}{2b^*} (\hat{a}_t-a^*)^2.$$
    \item When the price coefficient is unknown, we cannot do more than a first-order Talyor expansion when we want to upper bound Reg$_t$ by the estimation error, i.e.,
    \begin{equation}
        \text{Reg}_t \le  c(|\hat{a}_t-a^*|+|\hat{b}_t-b^*|)
        \label{eqn_explain_2}
    \end{equation}
     for some $c>0.$
\end{itemize}
For this example, whether the price coefficient $b^*$ is known determines the space in which we view the right-hand-side of \eqref{eqn_explain} as a function of $\hat{a}_t$ and $\hat{b}_t$. Intuitively, suppose that the estimation error is on the order of $\sqrt{1/t}$ (the intuition is precise when the covariates are i.i.d.). Then the right-hand-side will recover two different regret bounds under the two settings.

Generally, we remark that the first-order bound like \eqref{eqn_explain_2} under a proper norm is always the first step for the analysis of UCB and TS algorithms, including our analysis for the dynamic pricing problem. For linear bandits problem, the LinUCB algorithm \citep{chu2011contextual,abbasi2011improved} can directly obtain this first-order bound and under TS algorithms, it can be obtained by some Bayesian arguments \citep{russo2014learning2} or by maintaining a constant probability of choosing an optimistic action with anti-concentration sampling \citep{abeille2017linear}.

\paragraph{Proof of Theorem \ref{thm_regret_bound_LogT}.}\

\begin{proof}
Denote $\bar{\gamma}^{'}=2\bar{\theta}+ \frac{2\bar{\sigma}}{\underline{g}}\sqrt{2\log T+d\log\left(\frac{d+T}{d}\right)}$. Revise the definition of the ``good event'' as
$$\tilde{\mathcal{E}}:=\left\{ \left\|\hat{\alpha}_t-\alpha^*\right\|_{\tilde{M}_t}\leq \bar{\gamma}^{'} \text{ for } t=0,...,T-1\right\},$$
From Corollary \ref{MLEbound_logT}, we know
$$\mathbb{P}(\mathcal{\tilde{E}}) \ge 1 -1/T.$$

Under the event $\mathcal{\tilde{E}},$ the single period regret can be bounded by
\begin{align*}
   \text{Reg}_t&= r^*(x_t^\top \alpha^*,x_t^\top \beta^*)-r(p_t;x_t^\top \alpha^*,x_t^\top \beta^*)\\
    &\leq C\left|x_t^\top \alpha^*-x_t^\top \hat{\alpha}_{t-1}\right|^2\\
    &\leq C\|x_t\|^2_{\tilde{M}^{-1}_{t-1}}\|\hat{\alpha}_{t-1}-\alpha^*\|_{\tilde{M}_{t-1}}^2\\
    &\leq C\bar{\gamma}^{'2}\|x_t\|_{\tilde{M}^{-1}_{t-1}}^2.
\end{align*}
Here the second line is from Assumption \ref{assp_logT_gamma}, the third line is from  Holder's inequality, and the last inequality is by the event $\tilde{\mathcal{E}}$.

Thus, the total expected regret (the expectation is with respect to the randomness of demand shocks) can be bounded by
\begin{align*}
    \text{Reg}_T^{\pi_\text{CE}}(x_1,...,x_T)&=\sum_{t=1}^T \mathbb{E}\left[\text{Reg}_t\cdot \mathbbm{1}_{\tilde{\mathcal{E}}}\right]+\mathbb{E}\left[\text{Reg}_t\cdot \mathbbm{1}_{\tilde{\mathcal{E}}^{c}}\right]\\
    &\leq \sum_{t=1}^T C\bar{\gamma}^{'2}\|x_t\|_{\tilde{M}^{-1}_{t-1}}^2+\mathbb{P}\left(\tilde{\mathcal{E}}^{c}\right)\cdot \bar{p}T\bar{D}\\
    &\leq 2C\bar{\gamma}^{'2}d\log\left(\frac{d+T}{d}\right)+\bar{p}\Bar{D}
\end{align*}
where $\pi_\text{CE}$ denotes Algorithm \ref{alg_CE} and $\tilde{\mathcal{E}}^c$ denotes the complement of the event $\tilde{\mathcal{E}}$. The last inequality is because of Lemma \ref{EPL_lemma}.
\end{proof}

\section{More Discussions on the Existing Literature}
\label{sec_modeling_issue}

Here we discuss some modeling issues and assumptions in the existing dynamic pricing literature and also show how the general demand model \eqref{fun_demand} captures the binary demand model and the linear demand model.

\subsection{Coefficient of Price and the Distribution of the Random Shock}
We first point out an issue with the assumptions on the stochastic quasi-linear demand model or the so-called \textit{binary demand} model.
The binary demand model is stated as follows. At time $t$, the demand is
\begin{equation}
 D_{t} = \begin{cases}
  1,& \text{if} \ \ x_t^\top \alpha^* +\zeta_t \ge p_t \\
  0,& \text{if} \ \ x_t^\top \alpha^* +\zeta_t < p_t
\end{cases} \label{BDM1}
\end{equation}
where $x_t\in \mathbb{R}^d$ denotes the covariate vector, $\alpha^*\in \mathbb{R}^d$ is the linear coefficient vector, $p_t$ is the price, and $\zeta_t$ models an unobserved utility shock. Under this model, $x_t^\top \alpha^* +\zeta_t$ represents the customer's utility where the term $x_t^\top \alpha^*$ captures the part of the utility explained by the covariates $x_t.$

An assumption often made is (i) the coefficient of $p_t$ is fixed at 1 and (ii) $\zeta_t$'s are i.i.d. and follow a distribution with known parameters. The vector $\alpha^*$ then is assumed unknown that we wish to learn from observations over time.  This is roughly the setting used in \citet{javanmard2017perishability, cohen2020feature, xu2021logarithmic} and part of \citet{javanmard2019dynamic}.

There are two issues in general with this setting:
\begin{enumerate}
\item Estimation of the parameters of the distribution generating $\zeta_t$ cannot be separated from the estimation of the $\alpha^*$.  So it raises the question of how one arrives at this knowledge of the parameters of the distribution of $\zeta_t$'s.
\item Somewhat related to the above point --  say the seller starts collecting new data covariates $y_t\in\mathbb{R}^m$ in addition to $x_t$.   The additional covariates $y_t$ can potentially improve the prediction of customer utility and thus reduce the variance of the demand shock $\zeta_t$. Assuming the variance is known will then be problematic.
\end{enumerate}

\subsubsection{Parametric Distribution of the Error Term}

\label{sec_pe_2}
Assume the distribution of $\zeta_t$ belongs to a parametric family of distributions with some unknown parameter(s).  Say $\zeta_t$ has zero mean and is scale-invariant with variance $\sigma^{-2}$ for some unknown $\sigma>0$.

Scaling appropriately, model \eqref{BDM1} is then equivalent to the following binary demand model where the distribution of $\zeta_t$ is known but the price coefficient $\sigma$ is unknown,
\begin{equation}
D_{t} = \begin{cases}
  1,& \text{if} \ \ x_t^\top \alpha^* +\tilde{\zeta}_t \ge \sigma p_t \\
  0,& \text{if} \ \ x_t^\top \alpha^* +\tilde{\zeta}_t < \sigma p_t
\end{cases}
\label{BDM2}
\end{equation}
where $\tilde{\zeta}_t$ follows some known distribution with mean zero and variance one.

%
So in \eqref{BDM2} we can assume either (i) known price coefficient (say fixed to 1) and unknown variance, or (ii) unknown price coefficient and known variance (say 1).   We note that it is unnecessary to assume both parts unknown because the thresholding conditions in \eqref{BDM1} and \eqref{BDM2} are scale-invariant.

\begin{remark}
In the literature, \cite{javanmard2019dynamic} and \cite{luo2021distribution} consider the model \eqref{BDM2}. Specifically, \cite{javanmard2019dynamic} focus on the high-dimensional setting and impose the i.i.d. assumption on the covariates; \cite{luo2021distribution} allow a non-parametric structure for the shock distribution (more general than \eqref{BDM2}) and achieve an $\tilde{O}(dT^{2/3})$ regret. \cite{xu2021logarithmic} also mention the model \eqref{BDM2} and leave the achievability of $\sqrt{T}$ regret as an open question. The generalized linear model replaces the price coefficient $\sigma$ in \eqref{BDM2} with a linear function of the covariates $x_t$. Thus it can be viewed as a further generalization of the model \eqref{BDM2}. The rationale is that the extent to which the covariates $x_t$ explain the customer utility can be dependent not only on some unknown constant $\sigma$ but also on $x_t$, and the dependency is unknown. Our result resolves the open question in \citet{xu2021logarithmic}.
\end{remark}

\paragraph{Linear demand model.}

Another prevalent demand model (\cite{ den2014simultaneously, keskin2014dynamic,keskin2018incomplete,qiang2016dynamic, javanmard2019dynamic, ban2021personalized, bastani2021meta}) is the \textit{linear demand model}, where the demand is
\begin{equation}
 D_t = x_t^\top \alpha^*+x_t^\top \beta^* \cdot p_t+\epsilon_t   \label{LDM}
\end{equation}
and $\alpha^*$ and $\beta^*$ are unknown parameter vectors. The demand shock $\epsilon_t$'s are mean-zero random variables, and it makes no essential difference to adapt the distribution of $\epsilon_t$ to the history up to time $t$.

Under the linear demand model, the distribution of $\epsilon_t$ makes no difference to the optimal pricing strategy as long as it is mean-zero and sub-Gaussian. This explains why papers on the linear demand model, unlike the case of the binary demand model's unobserved utility shock $\zeta$, do not assume knowledge of the distribution of $\epsilon_t$.

\subsection{Generalized Linear Demand Model Class}

Now we show how the general model \eqref{fun_demand} recovers the binary demand model \eqref{BDM2} and the linear demand model \eqref{LDM} as special cases.

\begin{example}[Binary Demand Model]
The binary demand model \eqref{BDM2} (also \eqref{BDM1}) is considered by a number of papers as in Table \ref{Tab:table_bound} (denoted by ``binary model'' in the ``Demand Model'' column). We first restrict the first dimension of $x_t$ to be always $1$. To recover \eqref{BDM2}, we can then set the function $g(\cdot)$ in \eqref{fun_demand} to be the cumulative distribution function of $-\tilde{\zeta}_t$, $\beta^*=(-\sigma,0,\ldots,0)^\top$, and
\begin{equation*}
 \epsilon_t =
\begin{cases}
      1-g\left(x_t^\top\alpha^*-\sigma p_t\right) & \text{ w.p. $g\left(x_t^\top\alpha^*-\sigma p_t\right)$},\\
    -g\left(x_t^\top\alpha^*-\sigma p_t\right) & \text{w.p. $1-g\left(x_t^\top\alpha^*-\sigma p_t\right)$}.
    \end{cases}
\end{equation*}
Thus it becomes the binary demand model \eqref{BDM2}.  The parameter $\sigma$ is an unknown parameter that represents the price coefficient or describes the variance of the utility shock in \eqref{BDM2}; the sub-Gaussian parameter $\bar{\sigma}^2$ can be easily chosen as $1/4$ from the boundedness of $\epsilon_t.$
\label{eg1}
\end{example}

\begin{example}[Linear Demand Model]
\label{eg2}
The linear demand model \eqref{LDM} can be easily recovered from \eqref{fun_demand} by letting the function $g(\cdot)$ be an identity function. As in Example \ref{eg1}, we restrict the first dimension of $x_t$ to be always $1$. Then, the model in \citet{qiang2016dynamic} can be recovered by setting $\beta^*=(b,0,\ldots,0)$; the covariate-free linear demand model in \citet{den2014simultaneously, keskin2014dynamic} can be recovered by setting $\alpha^*=(a,0,\ldots,0)$ and $\beta^*=(b,0,\ldots,0)$.
\end{example}

\subsection{Practical Motivation for No Assumptions on the Covariate Distributions}
\label{sec_non_iid}
The covariates are typically customer information or features of the environment and are almost always exogenous and not in the control of the firm.   Making assumptions on their distribution is restrictive for the following reasons:
\begin{itemize}
    \item Network/peer effects: A wide range of products are influenced by network or peer effects \citep{seiler2017does,goolsbee2002evidence,bailey2019peer, nasr2018continuous}.   \cite{baardman2020detecting} show that demand prediction can be improved by incorporating such effects.  Thus, the customers' features are more likely to exhibit short-term dependencies.
    \item Seasonality and life-cycle of product:  Seasonality, day-of-week and time-of-day patterns create serial correlation in the covariates \citep{neale2009managing}. Product life-cycle effects (for tech or fashion items) also come into play, so the distribution of the covariates changes over the life-cycle of the product as the customer segment mix may be different at different stages of the product life-cycle.
    \item Competitors: Competing products and the action of the competitors influence demand \citep{armstrong2005demand}.  However, the effect from competitors' actions is complicated and the covariates of the environment cannot be identically distributed.
\end{itemize}

\section{Appendix for Proofs}
Throughout the following proofs, we denote
$$\bar{\gamma}\coloneqq 2\bar{\theta}+ \frac{2\bar{\sigma}}{\underline{g}}\sqrt{2\log T+2d\log\left(\frac{2d+T}{2d}\right)}.$$
\subsection{Proofs for Section \ref{sec_UCB}}
\label{sec_apend_UCB}
\subsubsection{Proofs for Subsection \ref{sec_Quasi_MLE}}
We first introduce some notations and ancillary lemmas. First, the gradient and Hessian of $l_t$ are
\begin{equation}
\nabla l_t(\theta)=\frac{g'(z_t^\top\theta)\cdot z_t}{h(g(z_t^\top \theta))}\left(D_t-g(z_t^\top \theta) \right)= \xi_{t}(\theta)z_{t}
\end{equation}
\begin{equation} \quad \nabla^2 l_t (\theta)=-\eta_{t}(\theta)z_{t}z_{t}^\top,
\end{equation}
where $\xi_{t}(\theta)\coloneqq D_{t}-g\left(z_t^\top \theta\right)$ and
$\eta_{t}(\theta)\coloneqq g'\left(z_t^\top \theta \right)$.  The concise form of the gradient and Hessian justifies the choice of $h(u)$ in \eqref{eq_quasi_self}.

The following lemma states that under a non-anticipatory pricing policy/algorithm, the sequence of $\{\xi_t(\theta^*)\}_{t=1}^{T}$ is a martingale difference sequence adapted to history observations with zero-mean $\bar{\sigma}^2$-sub-Gaussian increments.

\begin{lemma}
\label{MD_seq}
For $t=1,\ldots,T$, we have
$$\mathbb{E}\left[\xi_{t}(\theta^*)|\mathcal{H}_{t-1}\right]=0.$$
In addition, $\xi_{t}(\theta^*)|\mathcal{H}_{t-1}$ is $\bar{\sigma}^2$-sub-Gaussian.
\end{lemma}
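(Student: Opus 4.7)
My plan is to reduce the statement directly to Assumption~\ref{assp_error} by showing that $\xi_t(\theta^*)$ equals $\epsilon_t$ once one unpacks the definitions. The key observation is the measurability structure of the filtration $\{\mathcal{H}_t\}$: by construction, $\mathcal{H}_{t-1}$ contains $x_1,\dots,x_{t-1},x_t$ (the forward shift in the filtration is precisely what enables this), and under any non-anticipatory pricing policy $\pi$, the price $p_t$ is a measurable function of $\mathcal{H}_{t-1}$. Consequently, the concatenated vector $z_t=(x_t,p_tx_t)$ is $\mathcal{H}_{t-1}$-measurable, and so is $g(z_t^\top\theta^*)$.

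Given this, I would substitute the demand model \eqref{fun_demand} into the definition of $\xi_t(\theta^*)$:
\begin{equation*}
\xi_t(\theta^*) \;=\; D_t - g(z_t^\top\theta^*) \;=\; g(x_t^\top\beta^* + x_t^\top\gamma^*\cdot p_t) + \epsilon_t - g(z_t^\top\theta^*) \;=\; \epsilon_t,
\end{equation*}
since $z_t^\top\theta^* = x_t^\top\beta^* + (p_tx_t)^\top\gamma^* = x_t^\top\beta^* + x_t^\top\gamma^*\cdot p_t$. Therefore both claims reduce to the corresponding properties of $\epsilon_t$ conditional on $\mathcal{H}_{t-1}$.

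To finish, I would invoke Assumption~\ref{assp_error} directly: $\mathbb{E}[\epsilon_t\mid\mathcal{H}_{t-1}]=0$ yields $\mathbb{E}[\xi_t(\theta^*)\mid\mathcal{H}_{t-1}]=0$, and the sub-Gaussian moment generating function bound on $\epsilon_t\mid\mathcal{H}_{t-1}$ transfers verbatim to $\xi_t(\theta^*)\mid\mathcal{H}_{t-1}$.

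There is no real obstacle here; the lemma is essentially a bookkeeping result whose entire content is the measurability argument in the first paragraph. The only subtlety worth flagging in the write-up is that $p_t$ is not literally an element of the generating set of $\mathcal{H}_{t-1}$, but is $\mathcal{H}_{t-1}$-measurable because the policy $\pi$ is non-anticipatory — this is why the filtration was defined to include $x_{t+1}$ rather than stopping at $x_t$.
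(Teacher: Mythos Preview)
Your proposal is correct and follows exactly the paper's approach: identify $\xi_t(\theta^*)=\epsilon_t$ and invoke Assumption~\ref{assp_error}. The only difference is that you spell out the measurability of $z_t$ with respect to $\mathcal{H}_{t-1}$ more explicitly than the paper does.
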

\begin{proof}
Note that
$$\mathbb{E}\left[\xi_{t}(\theta^*)|\mathcal{H}_{t-1}\right]=\mathbb{E}\left[\epsilon_t|\mathcal{H}_{t-1}\right]=0.$$
Both the last part and the sub-Gaussianity come from Assumption \ref{assp_error}.
\end{proof}
Let the (cumulative) score function
$$S_{t}\coloneqq \sum_{t'=1}^{t}\frac{\xi_{t'}(\theta^*)}{\bar{\sigma}}z_{t'}.$$

The following theorem measures $S_t$'s deviation in terms of the metric induced by $M_t.$ It can be easily proved by an application of the martingale maximal inequality on the sequence of $\xi_t(\theta^*).$

\begin{theorem}[Theorem 20.4, \cite{lattimore2020bandit}]
\label{Lattimore}
\label{thm_first_order}
For any regularization parameter $\lambda>0$ and $\delta\in(0,1)$,

$$\mathbb{P}\left( \exists t\in \{1\ldots,T\}:  \left\|S_{t}\right\|^2_{M_{t}^{-1}}\geq 2\log\left(\frac{1}{\delta}\right)+\log\left(\frac{\det M_{t}}{\lambda^{2d}}\right)\right)\leq \delta. $$
\end{theorem}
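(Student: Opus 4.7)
The plan is to prove this using the classical method of mixtures (Peña--Lai--Shao; Abbasi-Yadkori--Pál--Szepesvári): construct a single nonnegative supermartingale whose large values exactly encode the failure event, then apply Ville's maximal inequality.

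I would first fix a deterministic $u \in \mathbb{R}^{2d}$ and consider the exponential process
$$W_t(u) := \exp\!\left(u^\top S_t - \tfrac{1}{2}\|u\|_{V_t}^2\right), \qquad V_t := \sum_{\tau=1}^t z_\tau z_\tau^\top = M_t - \lambda I_{2d}.$$
Because $\mathcal{H}_{t-1}$ already contains $x_t$ (by the author's choice to include $x_{t+1}$ in $\mathcal{H}_t$) and $p_t$ is chosen from $\mathcal{H}_{t-1}$, the vector $z_t$ is $\mathcal{H}_{t-1}$-measurable; combined with the $1$-sub-Gaussianity of $\xi_t(\theta^*)/\bar\sigma$ from Lemma \ref{MD_seq}, the standard one-step calculation gives $\mathbb{E}[W_t(u) \mid \mathcal{H}_{t-1}] \leq W_{t-1}(u)$. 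Thus $\{W_t(u)\}$ is a nonnegative supermartingale started at $1$.

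Next, to obtain a bound uniform over all directions simultaneously, I would take a Gaussian mixture with prior $u \sim \mathcal{N}(0, \lambda^{-1} I_{2d})$:
$$\bar W_t := \int_{\mathbb{R}^{2d}} W_t(u)\, \left(\tfrac{\lambda}{2\pi}\right)^{d} e^{-\frac{\lambda}{2}\|u\|^2}\, du.$$
Tonelli keeps $\bar W_t$ a nonnegative supermartingale with $\mathbb{E}[\bar W_t]\leq 1$. The key algebraic step is the closed-form evaluation of this Gaussian integral: the quadratic-in-$u$ terms in the exponent combine to $-\tfrac{1}{2}\|u\|_{M_t}^2$ (this is precisely why the prior covariance is set to $\lambda^{-1} I_{2d}$), and completing the square in $u$ together with the standard Gaussian normalization yields
$$\bar W_t \;=\; \sqrt{\frac{\lambda^{2d}}{\det M_t}}\, \exp\!\left(\tfrac{1}{2}\|S_t\|_{M_t^{-1}}^2\right).$$

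Finally I would invoke Ville's maximal inequality on $\bar W_t$, giving $\mathbb{P}(\exists t \leq T: \bar W_t \geq 1/\delta) \leq \delta$; plugging in the closed form above and taking logarithms rearranges to exactly the stated tail bound on $\|S_t\|_{M_t^{-1}}^2$. I do not expect a serious obstacle: the two care points are the measurability check for $z_t$ and the Gaussian integral computation, both routine. The conceptual heart of the argument is the deliberate matching of the prior covariance to the regularizer, so that the regularized design matrix $M_t = \lambda I_{2d} + V_t$ emerges naturally from integrating out $u$.
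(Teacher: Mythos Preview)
Your proposal is correct and is exactly the standard method-of-mixtures/self-normalized martingale argument underlying Theorem 20.4 of \cite{lattimore2020bandit}; the paper does not give its own proof but simply cites that result and remarks that it follows from the martingale maximal inequality, which is precisely your Ville step. The measurability and Gaussian-integral details you flag are handled correctly, so there is nothing to add.
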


Now we are ready to prove Proposition \ref{prop_theta_est}:
\paragraph{Proof of Proposition \ref{prop_theta_est}.}\

\begin{proof}
We perform a second-order Taylor's expansion for the objective function of regularized quasi-MLE \eqref{eq_quasi_reg} around the true parameter $\theta^*$. Let
$$Q_t(\theta) \coloneqq \sum_{t'=1}^t l_{t'}(\theta).$$
We have
\begin{align}
    \label{Taylor}
    &Q_t(\theta^*)-\frac{\lambda\underline{g}\|\theta^*\|^2_2}{2}-Q_t(\theta)+\frac{\lambda\underline{g}\|\theta\|^2_2}{2}\\
    =&-\left\langle \nabla Q_t(\theta^*)-\lambda\underline{g}\theta^*,\theta-\theta^*\right\rangle-\frac{1}{2}\left\langle\theta-\theta^*,\left(\nabla^2 Q_t(\theta')-\lambda \underline{g} I_{2d}\right)(\theta-\theta^*)\right\rangle \nonumber
\end{align}
for some $\theta'$ on the line segment between $\theta$ and $\theta^*$.

By the optimality of $\hat{\theta}_t$,
$$Q_t(\theta^*)-\frac{\lambda \underline{g}\|\theta^*\|^2_2}{2}\leq Q_t (\hat{\theta}_t)-\frac{\lambda \underline{g} \|\hat{\theta}_t\|^2_2}{2}.$$

Then from (\ref{Taylor}), we have
\begin{equation}
\label{eq_Taylor_2}
\left\langle\nabla Q_t(\theta^*)-\lambda  \underline{g} \theta^*,\hat{\theta}_t-\theta^*\right\rangle+\frac{1}{2}\left\langle\hat{\theta}_t-\theta^*,\left(\nabla^2 Q_t(\tilde{\theta}')-\lambda  \underline{g} I_{2d}\right)(\hat{\theta}_t-\theta^*)\right\rangle \geq0,
\end{equation}
for some $\tilde{\theta}'$ on the line segment between $\theta$ and $\theta^*$.

From Assumption $\ref{assp_g}$,
\begin{equation*}
    -\nabla^2 Q_t(\tilde{\theta})= \sum_{t'=1}^t g'\left( z_t^\top \tilde{\theta} \right) z_{t'}z_{t'}^\top \geq \underline{g}\cdot \sum_{t'=1}^{t}z_{t'}z_{t'}^\top \quad \forall \tilde{\theta} \in \Theta.
\end{equation*}

Further, with Holder's inequality and \eqref{eq_Taylor_2},
\begin{align*}
    \left\|\nabla Q_t(\theta^*) -\lambda \underline{g} \theta^*\right \|_{M_t^{-1}}\left\|\hat{\theta}_t-\theta^*\right\|_{M_t} \geq& \left\langle \nabla Q_t(\theta^*)-\lambda \underline{g} \theta^*,\hat{\theta}_t-\theta^* \right\rangle\\
    \geq &\frac{1}{2}\left\langle\hat{\theta}_t-\theta^*,\left(-\nabla^2 Q_t(\theta')+\lambda \underline{g} I_{2d}\right)(\hat{\theta}_t-\theta^*)\right\rangle \\
    \geq & \frac{1}{2}\underline{g}\left\langle\hat{\theta}_t-\theta^*,M_t (\hat{\theta}_t-\theta^*)\right\rangle\\
    = & \frac{1}{2}\underline{g} \left\|\hat{\theta}_t-\theta^*\right\|_{M_t}^2
\end{align*}
almost surely. Consequently,
\begin{equation}
    \left\|\nabla Q_t(\theta^*) -\lambda \underline{g} \theta^*\right \|_{M_t^{-1}}\geq  \frac{1}{2}\underline{g} \left\|\hat{\theta}_t-\theta^*\right\|_{M_t} \quad a.s.
    \label{tmp_taylor}
\end{equation}

Recall that
$$S_{t}=\sum_{t'=1}^{t}\frac{\xi_{t'}(\theta^*)}{\bar{\sigma}}z_{t'}=\frac{1}{\bar{\sigma}}\nabla Q_t(\theta^*),$$
which implies
\begin{align*}
    \frac{1}{2\bar{\sigma}}\underline{g} \left\|\hat{\theta}_t-\theta^*\right\|_{M_t}
    \leq& \frac{1}{\bar{\sigma}}\left\|-\nabla Q_t(\theta^*)+\lambda \underline{g}\theta^*\right  \|_{M_t^{-1}} \\
    = &\frac{1}{\bar{\sigma}}\left\|-\bar{\sigma} S_t+\lambda \underline{g} \theta^*\right \|_{M_t^{-1}}\\
    \leq& \left\|S_t\right\|_{M^{-1}_{t}}+\frac{\sqrt{\lambda}\underline{g}}{\bar{\sigma}}\sqrt{(\theta^*)^\top (\lambda M_{t}^{-1})\theta^*}\\
    \leq& \left\|S_t\right\|_{M^{-1}_{t}}+\frac{\sqrt{\lambda}\underline{g}}{\bar{\sigma}}\|\theta^*\|_2.
\end{align*}
Here the first line comes from \eqref{tmp_taylor}, the second line comes from the definition of $S_t$, the third lines comes from the norm inequality, and the last line is from the fact that $\lambda M_{t}^{-1} \le I_{2d}$. Thus, we complete the proof from combining Theorem \ref{Lattimore} with $\|\theta^*\|_2\leq \bar{\theta}$.

\end{proof}

\paragraph{Proof of Corollary \ref{MLEbound}.}\
\begin{proof}
Given that $\|z_{t}\|_2^2\leq 1$ by assumption, we can apply Lemma 19.4 of \cite{lattimore2020bandit} (purely algebraic analysis with no stochasticity) with $\delta=\frac{1}{T}$ in Proposition \ref{prop_theta_est} to obtain the result.
\end{proof}
\subsubsection{Proofs for Subsection \ref{sec_UCB_alg} }

For Theorem \ref{thm_regret_bound}, the proof idea is very intuitive. As the algorithm represents the confidence set based on the matrix $M_{t-1}$, the current observation $x_t$ will either induce a small single step regret or reduce the confidence set significantly. Then we upper bound the regret of the algorithm to a summation sequence involving the covariates $x_t$'s and matrices $M_t$'s and then employ the elliptical potential lemma (as follows) to conclude the proof.

\begin{lemma}[Elliptical Potential Lemma, \citep{lai1982least}]
\label{EPL_lemma}
For any constant $\lambda\geq 1$ and sequence of $\{x_t\}_{t\geq 1}$ with $\|x_t\|_2\leq 1$ for all $t\geq 1$ and $x_t\in \mathbb{R}^d$, define the sequence of covariance matrices:
$$\Sigma_0:=\lambda I_d, \quad \Sigma_t:=\lambda I_d+\sum_{t'=1}^tx_{t'}x_{t'}^\top \ \ \forall t\geq 1,$$
where $I_d$ is the identity matrix with dimension $d$. Then for any $T\geq 1$, the following inequality holds
$$\sum_{t=1}^T\|x_{t}\|^2_{\Sigma_{t-1}^{-1}}\leq 2d\log\left(\frac{\lambda d+T}{\lambda d}\right).$$
\end{lemma}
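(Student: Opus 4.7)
The plan is to prove this as a purely deterministic (``potential'') bound via the standard matrix-determinant argument. First I would observe that each $\Sigma_t$ is a rank-one update of $\Sigma_{t-1}$, so the matrix determinant lemma gives
$$\det(\Sigma_t) = \det(\Sigma_{t-1})\bigl(1 + \|x_t\|^2_{\Sigma_{t-1}^{-1}}\bigr).$$
Taking logarithms and telescoping from $t=1$ to $T$ produces
$$\log\frac{\det(\Sigma_T)}{\det(\Sigma_0)} = \sum_{t=1}^T \log\bigl(1 + \|x_t\|^2_{\Sigma_{t-1}^{-1}}\bigr).$$
This identity is what lets us trade the sum we want to bound for a single determinant ratio.

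Next I would invoke the elementary inequality $u \leq 2\log(1+u)$, which is valid for $u \in [0,1]$ (the difference $2\log(1+u)-u$ vanishes at $0$ and has nonnegative derivative on this range). To apply it termwise, I need $\|x_t\|^2_{\Sigma_{t-1}^{-1}} \leq 1$ for every $t$; this is precisely where the hypothesis $\lambda \geq 1$ is used, since $\Sigma_{t-1} \succeq \lambda I_d$ together with $\|x_t\|_2 \leq 1$ gives $\|x_t\|^2_{\Sigma_{t-1}^{-1}} \leq \|x_t\|_2^2/\lambda \leq 1/\lambda \leq 1$. Summing the inequality term by term yields
$$\sum_{t=1}^T \|x_t\|^2_{\Sigma_{t-1}^{-1}} \;\leq\; 2\log\frac{\det(\Sigma_T)}{\det(\Sigma_0)}.$$

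To finish, I would upper bound $\det(\Sigma_T)$ by AM--GM on its eigenvalues: $\det(\Sigma_T) \leq \bigl(\operatorname{tr}(\Sigma_T)/d\bigr)^d$, and the trace is $\operatorname{tr}(\Sigma_T) = d\lambda + \sum_{t=1}^T \|x_t\|_2^2 \leq d\lambda + T$. Combined with $\det(\Sigma_0) = \lambda^d$, this yields
$$2\log\frac{\det(\Sigma_T)}{\det(\Sigma_0)} \;\leq\; 2d\log\frac{d\lambda + T}{d\lambda},$$
which is the claimed bound.

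The entire argument is deterministic matrix algebra; no martingale or probabilistic tool is needed. The only mildly non-routine step is the scalar inequality $u \leq 2\log(1+u)$ and verifying that its range of validity is large enough to cover every term, but the lemma's assumption $\lambda \geq 1$ is designed exactly to make this work. Thus there is no real obstacle to overcome beyond bookkeeping the determinant identity and the trace bound.
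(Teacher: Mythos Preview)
Your proof is correct and is the standard determinant-potential argument for this lemma. The paper does not supply its own proof---it simply cites the result as the classical Elliptical Potential Lemma (attributed to Lai and Wei, 1982, and also appearing as Lemma~19.4 in Lattimore and Szepesv\'ari)---so your write-up is exactly the proof one would expect a reader to supply or look up.
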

\paragraph{Proof of Theorem \ref{thm_regret_bound}.}\

\begin{proof}
We define the ``good event'' as
$\mathcal{E}=\left\{\theta^*\in \Theta_{t} \text{ for } t=1,\ldots,T\right\}.$
From Corollary \ref{MLEbound}, we know
\begin{equation}
    \label{bad_prob}
    \mathbb{P}(\mathcal{E}) \ge 1 -1/T.
\end{equation}

At time $t$, under the event $\mathcal{E}$, the choice of $\theta_t=(\alpha_t,\beta_t)$ in Algorithm \ref{alg_UCB} ensures
\begin{equation}
\label{UCB_pricing}
    r^*(x_t^\top\alpha_t,x_t^\top\beta_t)\geq r^*(x_t^\top\alpha^*,x_t^\top\beta^*).
\end{equation}

Thus, under the event $\mathcal{E},$ the single period regret can be bounded by
\begin{align*}
   \text{Reg}_t \coloneqq r^*(x_t^\top\alpha^*,x_t^\top\beta^*)-r(p_t;x_t^\top\alpha^*,x_t^\top\beta^*)
    &\leq r^*(x_t^\top\alpha_t,x_t^\top\beta_t)-r(p_t;x_t^\top\alpha^*,x_t^\top\beta^*)\\
    &=p_t\cdot \left(g\left(x_t^\top \alpha_{t}+x_t^\top \beta_{t}\cdot p_t\right) -g\left(x_t^\top \alpha^*+x_t^\top \beta^*\cdot p_t\right)\right)\\
    &\leq \bar{g}\bar{p}\left| z_t^\top(\theta_t-\theta^*) \right|\\
    &\leq \bar{g}\bar{p}\|z_t\|_{M_{t-1}^{-1}}\|\theta_t-\theta^*\|_{M_{t-1}}\\
    &\leq \bar{g}\bar{p}\|z_t\|_{M_{t-1}^{-1}}\left(\|\theta_t-\hat{\theta}_{t-1}\|_{M_{t-1}}+\|\hat{\theta}_{t-1}-\theta^*\|_{M_{t-1}}\right)\\
    &\leq 2\bar{g}\bar{p}\bar{\gamma}\|z_t\|_{M_{t-1}^{-1}}
\end{align*}
where the functions $r^*$ and $r$ are introduced in the Section \ref{sec_Model}. Here the first line is from \eqref{UCB_pricing}, the third line is from Assumption \ref{assp_bound}, the fourth line is from Holder's inequality, and the last inequality is by Corollary
\ref{MLEbound} under the event $\mathcal{E}$.

Thus, the total expected regret (the expectation is with respect to the randomness of demand shocks) can be bounded by
\begin{align*}
   \text{Reg}_T^{\text{UCB}}(x_1,\ldots,x_T)&=\sum_{t=1}^T \mathbb{E}\left[\text{Reg}_t\cdot \mathbbm{1}_{\mathcal{E}}\right]+\mathbb{E}\left[\text{Reg}_t\cdot \mathbbm{1}_{\mathcal{E}^c}\right]\\
    &\leq 2\sum_{t=1}^T\bar{g}\bar{p}\bar{\gamma}\|z_t\|_{M_{t-1}^{-1}}+\mathbb{P}\left(\mathcal{E}^c\right)\cdot \bar{p}\bar{D}T\\
    &\leq 2\bar{g}\bar{p}\bar{\gamma}\sqrt{T\sum_{t=1}^T\|z_t\|^2_{M_{t-1}^{-1}}}+\mathbb{P}\left(\mathcal{E}^c\right)\cdot \bar{p}\bar{D}T\\
    &\leq 4\bar{g}\bar{p}\bar{\gamma}\sqrt{Td\log\left(\frac{2d\lambda+T}{2d\lambda}\right)}+\bar{p}\bar{D}
\end{align*}
where UCB denotes the pricing policy specified by Algorithm \ref{alg_UCB} and $\mathcal{E}^c$ denotes the complement of the event $\mathcal{E}.$ The second inequality is by Holder's inequality and the last inequality is because \eqref{bad_prob} and Lemma \ref{EPL_lemma}.
\end{proof}

\paragraph{Proof of Theorem \ref{thm_UCB_MC_approx}.}\
Note by definitions, the single period regret at $t$ can be bounded by:

\begin{align*}
    \text{Reg}_t &\coloneqq r^*(x_t^\top\alpha^*,x_t^\top\beta^*)-r(p_t;x_t^\top\alpha^*,x_t^\top\beta^*)\\
&= r^*(x_t^\top\alpha^*,x_t^\top\beta^*)-r^*(x_t^\top\alpha_t,x_t^\top\beta_t)+r^*(x_t^\top\alpha_t,x_t^\top\beta_t)-r(p_t;x_t^\top\alpha^*,x_t^\top\beta^*)\\
&= r^*(x_t^\top\alpha^{\text{MC}}_t,x_t^\top\beta^{\text{MC}}_t)-r(p_t;x_t^\top\alpha^*,x_t^\top\beta^*)+ r^*(x_t^\top\alpha^*,x_t^\top\beta^*)-r^*(x_t^\top\alpha_t,x_t^\top\beta_t)\\
&+r^*(x_t^\top\alpha_t,x_t^\top\beta_t)-r^*(x_t^\top\alpha^{\text{MC}}_t,x_t^\top\beta^{\text{MC}}_t).
\end{align*}
Then following the identical proof idea of Theorem \ref{thm_regret_bound} (noting $r^*(x_t^\top\alpha^*,x_t^\top\beta^*)-r^*(x_t^\top\alpha_t,x_t^\top\beta_t)<0$ under the good event $\mathcal{E}$), we can conclude the result.
\subsection{Proof for Section \ref{sec_TS}}
\label{sec_appn_TS}

For the original Thompson Sampling, its analysis in \cite{abeille2017linear} needs its key Lemma 3. However, Algorithm \ref{alg_TS} can not directly apply this lemma: (1) the sampling step in \eqref{eq_TS_samplestep} is based on $(\hat{a}_t,\hat{b}_t)$ and $\Tilde{M}_{t-1}$ but we want it to be still conditional on the event $\theta^*=(\alpha^*,\beta^*)\in \Theta_t$ due to Corollary \ref{MLEbound}; (2) the reward function is non-convex, and not Lipschitz in the estimators even in the linear demand model (so our reward function can not satisfy the setting of convex optimization problems with linear observation as discussed in Section 6 or the Assumption 4 in \citet{abeille2017linear}). In summary, we want to lower bound the probability $$\mathbb{P}\left(r^*(\tilde{a}_t,\tilde{b}_t)\geq r^*(a^*_t,b^*_t) \bigg \vert \mathcal{H}_{t-1}, \theta^*\in \Theta_{t}\right),$$
where $r^*(a,b)$ can be non-convex and non-Lipschitz in $(a,b)$.

To solve the above challenge, we will first show  $(\tilde{a}_t,\tilde{b}_t)$ has a distribution matched with the sample $(x_t^\top \Tilde{\alpha}_{t-1}, x_t^\top \Tilde{\beta}_{t-1})$ conditional on $\mathcal{H}_{t-1}$ in Lemma \ref{lemma_same_dis}, where
$$(\Tilde{\alpha}_{t-1},\Tilde{\beta}_{t-1}) \coloneqq \hat{\theta}_{t-1}+\bar{\gamma}M_{t-1}^{-1/2}\eta_t$$
is defined as the original Thompson sampling parameters  (see Appendix~\ref{apx:TS_vs_newTS} for details), to connect with the Lemma 3 in \cite{abeille2017linear} and get Lemma \ref{Abeille_opt_prob}. Then we utilize the special structure of pricing problem shown in Lemma \ref{lem:linear2reward} to finally conclude Lemma \ref{lemma_opt_samp}:

\begin{lemma}
    $ (x_t^\top \Tilde{\alpha}_{t-1}, x_t^\top \Tilde{\beta}_{t-1})|\mathcal{H}_{t-1} \stackrel{d}{=}(\tilde{a}_t,\tilde{b}_t)|\mathcal{H}_{t-1}$, where $\stackrel{d}{=}$ means equal in distribution.
    \label{lemma_same_dis}
\end{lemma}

\begin{proof}
        Let $\eta'\sim \mathcal{N}(0,I_{2d})$, we have
        \begin{align*}
        (x_t^\top \Tilde{\alpha}_{t-1}, x_t^\top \Tilde{\beta}_{t-1})|\mathcal{H}_{t-1}&=\begin{pmatrix}
x_t&\bm{0}\\
\bm{0}&x_t
\end{pmatrix}^\top \hat{\theta}_{t-1}+\bar{\gamma} \begin{pmatrix}
x_t&\bm{0}\\
\bm{0}&x_t
\end{pmatrix}^\top M_{t-1}^{-1/2}\eta'\\
&\stackrel{d}{=} (\hat{a}_t,\hat{b}_t)+\bar{\gamma} \mathcal{N}\left(0, \tilde{M}_{t-1}^{-1}\right)\\
&\stackrel{d}{=} (\hat{a}_t,\hat{b}_t)+\bar{\gamma}\tilde{M}_{t-1}^{-1/2}\eta\\
&=(\tilde{a}_t,\tilde{b}_t)|\mathcal{H}_{t-1}.
        \end{align*}
\end{proof}

\paragraph{Proof for Lemma \ref{Abeille_opt_prob}.}\

\begin{proof}
By Lemma \ref{lemma_same_dis} and  Lemma 3 in \cite{abeille2017linear}, we can get the result.
\end{proof}
\paragraph{Proof for Lemma \ref{lem:linear2reward}.}\

\begin{proof}
Assume $\tilde{a}_t+\tilde{b}_t\cdot p^*_t\geq a^*_t+b^*_t\cdot p^*_t$, then
\begin{align*}
    r^*(\tilde{a}_t,\tilde{b}_t)
    \geq &p^*_t \cdot g(\tilde{a}_t+\tilde{b}_t\cdot p^*_t)\\
    \geq &p^*_t \cdot  g(a^*_t+b^*_t\cdot p^*_t)\\
    =& r^*(a^*_t,b^*_t),
\end{align*}
where the first line is by the the definition of $r^*$, the second line is by $g$ is increasing from Assumption \ref{assp_g'} and the last line is again by the definition of $r^*$.
\end{proof}

\begin{lemma}
\label{lemma_opt_samp}
$$\mathbb{P}\left(r^*(\tilde{a}_t,\tilde{b}_t)\geq r^*(a^*_t,b^*_t) \bigg \vert \mathcal{H}_{t-1}, \theta^*\in \Theta_{t}\right)\geq \frac{1}{4\sqrt{e\pi}}.$$
\end{lemma}
\begin{proof}
Denote the optimal pricing at $t$ by $p^*_t\coloneqq \argmax_{p\in[\underline{p},\bar{p}]}r(p;a^*_t,b^*_t)$. Then since $p^*_t\in\mathcal{H}_{t-1}$, by Lemma \ref{Abeille_opt_prob} and the fact that linear function is convex,
\begin{equation*}
\mathbb{P}\left(\tilde{a}_t+\tilde{b}_t\cdot p^*_t\geq a^*_t+b^*_t\cdot p^*_t \bigg \vert \mathcal{H}_{t-1}, \theta^*\in \Theta_{t}\right)\geq \frac{1}{4\sqrt{e\pi}}.
\end{equation*}

With Lemma \ref{lem:linear2reward} we can conclude the result.
\end{proof}
\textbf{Remark.} Note the above result can be extended to the reward function with cost $r^*(\mu,a,b)\coloneqq \max_{p\in[\mu,\bar{p}]} (p-\mu)\cdot g(a+b\cdot p)$ for any cost $\mu \in [\underline{p},\bar{p}]$, which will be applied in the proof of Theorem \ref{thm_TS_C_UB}.

\subsubsection{Proof of Theorem \ref{thm_TS_NC}}
\begin{proof}
Let $\bar{\kappa}\coloneqq 2\bar{\gamma}\sqrt{\log (4T^2)}$. Define
$$\tilde{\Theta}_{t}\coloneqq \left\{(a,b) \in \mathbb{R}^{2}:  \|(a,b)-(\hat{a}_t,\hat{b}_t) \|_{\tilde{M}_{t-1}}\leq \bar{\kappa}  \right\}.$$

Now, we define the good event as
$$\mathcal{E}\coloneqq \{\theta^*\in \Theta_t,\  (\tilde{a}_t,\tilde{b}_t)\in \tilde{\Theta}_t \ \text{for} \ t=1,\ldots,T\}$$ where recall
$$\Theta_{t}=\left\{\theta\in \Theta: \left\|\theta-\hat{\theta}_{t-1}\right\|_{M_{t-1}}\leq \bar{\gamma}\right\}.$$

Then by Definition \ref{assp: AbeilleTS} and Corollary \ref{MLEbound}, we have
\begin{equation}
\label{eq_bad_prob}
   \mathbb{P}(\mathcal{E})\geq 1-\frac{2}{T}.
\end{equation}

 Now under event $\mathcal{E}$,  the regret can be decomposed into
\begin{align*}
    \text{Reg}^{\text{TS}}_T\cdot \mathbbm{1}_{\mathcal{E}}&=\sum_{t=1}^T \left(r^*(a^*_t,b^*_t)-r(p_t;a^*_t,b^*_t)\right) \cdot \mathbbm{1}_{\mathcal{E}}\\
&= \sum_{t=1}^T \left( r^*(a^*_t,b^*_t)-r^*(\tilde{a}_t,\tilde{b}_t)+r^*(\tilde{a}_t,\tilde{b}_t)-r(p_t;a^*_t,b^*_t) \right)\cdot \mathbbm{1}_{\mathcal{E}},
\end{align*}
where $\text{TS}$ denotes the pricing policy specified by Algorithm \ref{alg_TS}.

Let $$\text{Reg}^{(1)}_t:=  \left(r^*(a^*_t,b^*_t)-r^*(\tilde{a}_t,\tilde{b}_t)\right)\cdot\mathbbm{1}_{\mathcal{E}},$$
$$\text{Reg}^{(2)}_t:= \left(r^*(\tilde{a}_t,\tilde{b}_t)-r(p_t;a^*_t,b^*_t)\right)\cdot\mathbbm{1}_{\mathcal{E}}.$$

We first focus on analyzing $\text{Reg}^{(1)}_t$.  Denote
$$\Theta^{\text{OPT}}_t\coloneqq \left\{(a,b)\in\Tilde{\Theta}_t:r^*(a,b)\geq r^*(a^*_t,b^*_t)\right\}.$$
Then by Lemma \ref{lemma_opt_samp},
       $$\mathbb{P}\left( r^*(\tilde{a}_t,\tilde{b}_t)\geq r^*(a^*_t,b^*_t)  \bigg \vert \mathcal{H}_{t-1}, \theta^*\in \Theta_{t}\right)\geq \frac{1}{4\sqrt{e\pi}}.$$
    Further, by Definition \ref{assp: AbeilleTS}, when $T\geq 6$, $$\mathbb{P}\left(  (\tilde{a}_t,\tilde{b}_t)\notin \Tilde{\Theta}_t \bigg \vert \mathcal{H}_{t-1}, \theta^*\in \Theta_{t}\right)\leq \frac{1}{T^2}\leq  \frac{1}{8\sqrt{e\pi}}.$$
    Then by applying union bound to the above two events,
    \begin{equation}
    \label{eq_opt_samp}
    \mathbb{P}\left( (\tilde{a}_t,\tilde{b}_t)\in \Theta_t^{\text{OPT}}  \bigg \vert \mathcal{H}_{t-1}, \theta^*\in \Theta_{t}\right)\geq \frac{1}{4\sqrt{e\pi}}-\frac{1}{8\sqrt{e\pi}}=\frac{1}{8\sqrt{e\pi}}.
    \end{equation}

For any $(\tilde{a},\tilde{b})\in \Theta^{\text{OPT}}_t$ , we have
\begin{align}
\mathbb{E}\left[\text{Reg}^{(1)}_t|\mathcal{H}_{t-1}\right]&\leq \mathbb{E}\left[\left(r^*(\tilde{a},\tilde{b})-r^*(\tilde{a}_t,\tilde{b}_t)\right)\cdot \mathbbm{1}_{\mathcal{E}}\bigg \vert  \mathcal{H}_{t-1}\right]\nonumber \\
    &=  \mathbb{E}\left[\left(r^*(\tilde{a},\tilde{b})-p_t\cdot g(\tilde{a}_t+\tilde{b}_t\cdot p_t)\right)\cdot \mathbbm{1}_{\mathcal{E}}\bigg \vert \mathcal{H}_{t-1}\right]\nonumber \\
    &\leq  \mathbb{E}\left[p^*(\tilde{a},\tilde{b})\cdot\left( g(\tilde{a}+\tilde{b}\cdot p^*(\tilde{a},\tilde{b}))- g(\tilde{a}_t+\tilde{b}_t\cdot p^*(\tilde{a},\tilde{b}))\right)\cdot\mathbbm{1}_{\mathcal{E}}\bigg \vert  \mathcal{H}_{t-1}\right]\nonumber \\
    &\leq \Bar{g}\Bar{p} \mathbb{E}\left[\big \vert \tilde{a}+\tilde{b}\cdot p^*(\tilde{a},\tilde{b})- (\tilde{a}_t+\tilde{b}_t\cdot p^*(\tilde{a},\tilde{b}))\big \vert \cdot\mathbbm{1}_{\mathcal{E}}\bigg \vert \mathcal{H}_{t-1}\right]\nonumber \\
     &= \Bar{g}\Bar{p} \mathbb{E}\left[\big \vert (1,p^*(\tilde{a},\tilde{b}))^\top(\tilde{a}-\tilde{a}_t,\tilde{b}-\tilde{b}_t) \big \vert \cdot\mathbbm{1}_{\mathcal{E}}\bigg \vert  \mathcal{H}_{t-1}\right]\nonumber \\
     &\leq \Bar{g}\Bar{p} \mathbb{E}\left[ \|(1,p^*(\tilde{a},\tilde{b}))\|_{\Tilde{M}^{-1}_{t-1}}\|(\tilde{a}-\tilde{a}_t,\tilde{b}-\tilde{b}_t) \|_{\Tilde{M}_{t-1}} \cdot\mathbbm{1}_{\mathcal{E}}\bigg \vert  \mathcal{H}_{t-1}\right]\nonumber \\
      &=\Bar{g}\Bar{p} \mathbb{E}\left[ \|\tilde{z}_t(\tilde{a},\tilde{b}))\|_{M^{-1}_{t-1}}\|(\tilde{a}-\tilde{a}_t,\tilde{b}-\tilde{b}_t) \|_{\Tilde{M}_{t-1}} \cdot\mathbbm{1}_{\mathcal{E}}\bigg \vert  \mathcal{H}_{t-1}\right]\nonumber \\
    &\leq 4\Bar{g}\Bar{p}\bar{\gamma}\sqrt{\log(4T^2)}   \mathbb{E}\left[\|\tilde{z}_t(\tilde{a},\tilde{b}))\|_{M^{-1}_{t-1}}\cdot\mathbbm{1}_{\mathcal{E}}\bigg \vert  \mathcal{H}_{t-1}\right], \label{ineq_theta_tilde}
\end{align}
where $p^*(a,b)=\argmax_{p\in [\underline{p},\bar{p}] }r(p;a,b)$ and $\tilde{z}_t(a,b)\coloneqq (x_t,p^*(a,b)x_t)$. Here the first line is by the definition of $\Theta^{\text{OPT}}_t$,
the third line is from the optimality of $p_t$, the fourth line is from Assumptions \ref{assp_bound}  and \ref{assp_g'}, the sixth line is by Holder's inequality, the seventh line is by
$$\|(1,p^*(\tilde{a},\tilde{b}))\|_{\Tilde{M}^{-1}_{t-1}}=\sqrt{(1,p^*(\tilde{a},\tilde{b}))^\top \begin{pmatrix}
x_t&\bm{0}\\
\bm{0}&x_t
\end{pmatrix}^\top M_{t-1}^{-1} \begin{pmatrix}
x_t&\bm{0}\\
\bm{0}&x_t
\end{pmatrix} (1,p^*(\tilde{a},\tilde{b})) }=\|\tilde{z}_t(\tilde{a},\tilde{b}))\|_{M^{-1}_{t-1}},$$
and the last line comes from the event $\mathcal{E}$ and $(\tilde{a},\tilde{b})\in \Theta^{\text{OPT}}_t$ with triangle inequality.

Let $(\tilde{a}_{t}',\tilde{b}_t')$ be an independent copy of $(\tilde{a}_{t},\tilde{b}_t)$ following the same distribution. Then we have
\begin{align}
   \mathbb{E}\left[\text{Reg}^{(1)}_t \bigg \vert \mathcal{H}_{t-1}\right] & \le 4\Bar{g}\Bar{p}\bar{\gamma}\sqrt{\log(4T^2)}   \mathbb{E}\left[\|\tilde{z}_t(\tilde{a}'_t,\tilde{b}'_t)\|_{M_{t-1}^{-1}}\cdot\mathbbm{1}_{\mathcal{E}}\bigg \vert (\tilde{a}_{t}',\tilde{b}_t')\in \Theta_{t}^{\text{OPT}}, \mathcal{H}_{t-1}\right] \nonumber \\
   & \le 4\Bar{g}\Bar{p}\bar{\gamma}\sqrt{\log(4T^2)}   \mathbb{E}\left[\|\tilde{z}_t(\tilde{a}'_t,\tilde{b}'_t)\|_{M_{t-1}^{-1}}\cdot\mathbbm{1}_{\mathcal{E}}\cdot\mathbbm{1}_{(\tilde{a}'_t,\tilde{b}'_t)\in \Theta_{t}^{\text{OPT}}}\bigg \vert \mathcal{H}_{t-1}\right]\mathbb{P}^{-1}\left((\tilde{a}'_t,\tilde{b}'_t)\in \Theta_{t}^{\text{OPT}}\vert \mathcal{H}_{t-1}, \theta^*\in \Theta_{t-1}\right) \nonumber\\
   & \le 4\Bar{g}\Bar{p}\bar{\gamma}\sqrt{\log(4T^2)}   \mathbb{E}\left[\|\tilde{z}_t(\tilde{a}'_t,\tilde{b}'_t)\|_{M_{t-1}^{-1}}\bigg \vert \mathcal{H}_{t-1}\right] \cdot \mathbb{P}^{-1}\left((\tilde{a}'_t,\tilde{b}'_t)\in \Theta_{t}^{\text{OPT}}\vert \mathcal{H}_{t-1}, \theta^*\in \Theta_{t-1}\right) \nonumber\\
  & \leq   4\Bar{g}\Bar{p}\bar{\gamma}\sqrt{\log(4T^2)} \mathbb{E}\left[\|\tilde{z}_t(\tilde{a}'_t,\tilde{b}'_t)\|_{M_{t-1}^{-1}}\bigg \vert \mathcal{H}_{t-1} \right]\cdot 8\sqrt{e\pi} \nonumber\\
  & = 32\Bar{g}\Bar{p}\Bar{\gamma}\sqrt{e \pi \log(4T^2)}   \mathbb{E}\left[\|z_t\|_{M_{t-1}^{-1}}\bigg \vert \mathcal{H}_{t-1}\right] \label{ineq_reg_t_1}.
\end{align}
where $z_t=(x_t, p_tx_t)$ and $p_t$ is the price used in the algorithm. Here the first line comes by replacing $(\tilde{a},\tilde{b})$ in \eqref{ineq_theta_tilde} with a randomized parameter of $(\tilde{a}_{t}',\tilde{b}_t')$ restricted to the set $\Theta_{t}^{\text{OPT}}.$ The second line comes from the property of conditional expectation. The third line removes the indicator functions. The fourth line applies \eqref{eq_opt_samp}. The last line comes from the definition of $\tilde{z}_t(\tilde{a}'_t,\tilde{b}'_t)$ and $z_t.$

Next, for $\text{Reg}^{(2)}_t$,

\begin{align}
\mathbb{E}\left[\text{Reg}^{(2)}_t \bigg \vert \mathcal{H}_{t-1} \right] &= \mathbb{E}\left[p_t\cdot (g(\tilde{a}_t+\tilde{b}_t\cdot p_t)- g(a^*_t+b^*_t\cdot p_t))\cdot \mathbbm{1}_{\mathcal{E}} \bigg \vert \mathcal{H}_{t-1} \right] \nonumber\\
&\leq \Bar{g}\Bar{p}\mathbb{E}\left[|\tilde{a}_t+\tilde{b}_t\cdot p_t- (a^*_t+b^*_t\cdot p_t)|\cdot \mathbbm{1}_{\mathcal{E}} \bigg \vert \mathcal{H}_{t-1} \right] \nonumber\\
&=\Bar{g}\Bar{p}\mathbb{E}\left[|\tilde{a}_t+\tilde{b}_t\cdot p_t-(\hat{a}_t+\hat{b}_t\cdot p_t)+(\hat{a}_t+\hat{b}_t\cdot p_t)- (a^*_t+b^*_t\cdot p_t)|\cdot \mathbbm{1}_{\mathcal{E}} \bigg \vert \mathcal{H}_{t-1} \right] \nonumber\\
&\leq\Bar{g}\Bar{p}\mathbb{E}\left[\left(\|(1,p^*(\tilde{a}_t,\tilde{b}_t))\|_{\Tilde{M}^{-1}_{t-1}}\|(\tilde{a}_t-\hat{a}_t,\tilde{b}_t-\hat{b}_t) \|_{\Tilde{M}_{t-1}}  +\|z_t\|_{M_{t-1}^{-1}}\|\hat{\theta}_{t-1}-\theta^*\|_{M_{t-1}}\right)\cdot \mathbbm{1}_{\mathcal{E}} \bigg \vert \mathcal{H}_{t-1} \right] \nonumber\\
&= \Bar{g}\Bar{p}\mathbb{E}\left[\left(\|z_t\|_{M_{t-1}^{-1}}\|(\tilde{a}_t-\hat{a}_t,\tilde{b}_t-\hat{b}_t) \|_{\Tilde{M}_{t-1}}  +\|z_t\|_{M_{t-1}^{-1}}\|\hat{\theta}_{t-1}-\theta^*\|_{M_{t-1}}\right)\cdot \mathbbm{1}_{\mathcal{E}} \bigg \vert \mathcal{H}_{t-1} \right] \nonumber\\
&\leq (2\sqrt{\log(4T^2)}+1)\Bar{g}\Bar{p}\bar{\gamma} \mathbb{E}\left[\|z_t\|_{M_{t-1}^{-1}} \bigg \vert \mathcal{H}_{t-1} \right] \nonumber\\
&< 4\Bar{g}\Bar{p}\bar{\gamma}\sqrt{e \pi \log(4T^2)} \mathbb{E}\left[\|z_t\|_{M_{t-1}^{-1}} \bigg \vert \mathcal{H}_{t-1} \right] \label{ineq_regt_2}
\end{align}
where $z_t=(x_t, p_tx_t)$ and $p_t$ is the price used in the algorithm. The second line is from Assumptions \ref{assp_bound}  and \ref{assp_g'}, the fourth line is by Holder's inequality, the fifth line is due to the same computation used for $\text{Reg}^{(1)}_t$, and the sixth line is from the definition of event $\mathcal{E}$. Now we can combine them and conclude the final result:

\begin{align*}
   \text{Reg}_T^{\text{TS}}(x_1,\ldots,x_T)&\leq \sum_{t=1}^T \mathbb{E}\left[\left(\text{Reg}^{(1)}_t+\text{Reg}^{(2)}_t\right)\cdot \mathbbm{1}_{\mathcal{E}}\right]+\mathbb{E}\left[\Bar{p}\bar{D}T \cdot \mathbbm{1}_{\mathcal{E}^c}\right]\\
    &\leq  36\Bar{g}\Bar{p}\bar{\gamma}\sqrt{e \pi \log(4T^2)}\mathbb{E}\left[\sum_{t=1}^T\|z_t\|_{M_{t-1}^{-1}}\right]+\mathbb{P}\left(\mathcal{E}^c\right)\cdot \Bar{p}\bar{D}T\\
    &\leq 36\Bar{g}\Bar{p}\bar{\gamma}\sqrt{e \pi \log(4T^2)}\mathbb{E}\left[\sqrt{T\sum_{t=1}^T\|z_t\|^2_{M_{t-1}^{-1}}}\right]+\mathbb{P}\left(\mathcal{E}^c\right)\cdot \bar{p}\bar{D}T\\
    &\leq  36\Bar{g}\Bar{p}\bar{\gamma}\sqrt{2e \pi dT\log\left(\frac{2d\lambda+T}{2d\lambda}\right)\log(4T^2)}+2\bar{D}\bar{p}
\end{align*}
where  $\mathcal{E}^c$ denotes the complement of the event $\mathcal{E}.$ The first line is by the single-period regret will be bounded by $\Bar{p}\Bar{D}$, the third line is by Holder's inequality and the last inequality is because \eqref{eq_bad_prob} and Elliptical Potential Lemma, i.e. Lemma \ref{EPL_lemma}.
\end{proof}

\subsection{Proofs for Section \ref{sec_TS_C}}
\label{sec_appn_TS_C}

\subsubsection{Proof of Theorem \ref{thm_LB}}
\begin{proof}
Assume $\alpha^*=(2,0)$ and $\beta^*=(0,-1)$ and there exists some $K>1$, $c=\frac{K}{2}$ and $x_t=(K,K)$ for all $t=1,\ldots,\frac{T}{2}$. For the second half of the horizon, the nature randomly chooses between the following two cases: (1) $x_t=(K,K-\delta)$ or (2) $x_t=(K,K+\delta)$ with some $\delta\in(1,K)$ for all $t=\frac{T}{2}+1,\ldots,T$. And further we assume there exists no error on demands, $\mathbb{P}(X_t=x_t)=1$ for all $t=1,\ldots,T$ and $g(x)=x$, i.e. with price $p$, $D_t(p)=2K- K\cdot p$ for all $t=1,\ldots,\frac{T}{2}$ and $D_t(p)=2K- (K-\delta)\cdot p$ or $D_t(p)=2K-(K+\delta)\cdot p$ for all $t=\frac{T}{2}+1,\ldots,T$ based on the nature's choice.

Now we consider the optimal pricing policy under the first case. Denote the proportion of inventory used in the second half horizon as $q$. Note $p\cdot D_t(p)$ is concave in $p$ for all $t$ and thus by this concavity, it is easy to see for a fixed $q$ the optimal pricing policy is $p_t=1+q$ for $t=1,\ldots,\frac{T}{2}$ and $p_t=\frac{K(2-q)}{K-\delta}$ for $t=\frac{T}{2}+1,\ldots,T$. So it is sufficient to find the optimal $q^*(\delta)$ for the optimal policy. Indeed, the revenue under optimal pricing of a given $q$ is:
$$T\cdot \left(\frac{K(1+q)(1-q)}{2}+\frac{K^2q(2-q)}{2(K-\delta)}\right),$$
and thus the optimal $q$ can be computed as $q^*(\delta)=\frac{K}{2K-\delta}$ with optimal revenue as $ \frac{TK^2}{2(2K-\delta)(K-\delta)}+\frac{TK}{2}$. Intuitively, when $\delta$ is larger, since customers are less sensitive to the price in the second half, the seller should save more inventory for the second half. Similarly, for the second case the optimal the optimal $q$ can be computed as $q^*(\delta)=\frac{K}{2K+\delta}$ with optimal value as $ \frac{TK^2}{2(2K+\delta)(K+\delta)}+\frac{TK}{2}$.

Then for any online policy $\pi$, if the proportion of inventory used in the second half horizon of it $q^{\pi}\leq \frac{1}{2}$, then with probability $\frac{1}{2}$ the environment will become the first case and will cause the regret at least:
$$ \frac{TK^2}{2(2K-\delta)(K-\delta)}+\frac{TK}{2}-T\cdot \left(\frac{3K}{8}+\frac{3K^2}{8(K-\delta)}\right)=\frac{KT\delta^2}{8(2K-\delta)(K-\delta)}.$$
Similarly if the proportion of inventory used in the second half horizon of it $q^{\pi}>\frac{1}{2}$, then with probability $\frac{1}{2}$ the environment will become the second case and will cause the regret at least:
$$ \frac{KT\delta^2}{8(2K+\delta)(K+\delta)}.$$
Note  $\mathcal{W}_T(\mathcal{P}_1,\ldots,\mathcal{P}_T)=\sum_{t=1}^T\frac{\delta}{\sqrt{2}}=\frac{T\delta}{\sqrt{2}}$ for both two cases, we finish the proof.
\end{proof}

\subsubsection{Proof of Theorem \ref{thm_TS_C_UB}}
In this subsection, we denote $\tilde{r}(p;\mu,a,b)=(p-\mu)\cdot g(a+b\cdot p)$ as the revised revenue function with virtual cost $\mu$, and $\tilde{r}^*(\mu,a,b)=\max_{p\in [\mu,\bar{p}]} \tilde{r}(p;\mu,a,b)$ as the corresponding optimal revised revenue function (assume $\mu\leq \Bar{p}$).

\noindent \textbf{Proof sketch.} The proof idea of Theorem \ref{thm_TS_C_UB} is to decompose the regret as follows:
$$\mathbb{E} \left[\sum_{t=1}^{\tau+1}\left(\Tilde{r}^*(\mu_t,A^*_t,B^*_t)-\Tilde{r}(p_t;\mu_t,A^*_t,B^*_t)+\mu_t (c-g(\tilde{a}_t+\tilde{b}_t\cdot p_t ))\right)+\bar{p}c\cdot (T-\tau-1) \right]+O\left(\mathcal{W}_T(\mathcal{P}_1,\ldots,\mathcal{P}_T)\right),$$
where $\tau$ is the termination time of the algorithm:
$$\tau \coloneqq \max \left\{ t=1,\ldots,T-1: \sum_{t'=1}^t D_{t'}\leq  cT \right\},$$
which is the last period before the inventory is exhausted or equal to $T-1$ when still has inventory at $T-1$. The last term $O\left(\mathcal{W}_T(\mathcal{P}_1,\ldots,\mathcal{P}_T)\right)$ measures the effect of the non-stationarity, and we can then focus on bounding the terms in the expectation:
\begin{itemize}
    \item $\mathbb{E} \left[\sum_{t=1}^{\tau+1}\Tilde{r}^*(\mu_t,A^*_t,B^*_t)-\Tilde{r}(p_t;\mu_t,A^*_t,B^*_t)\right].$ This term captures the revised regret before algorithm stops based on the revised revenue function $\Tilde{r}(p;\mu_t,A^*_t,B^*_t)$, which considers the cost of each unit as $\mu_t$. Since the price $p_t$ is the optimal price from sampled variables  $(\tilde{a}_t,\tilde{b}_t)$, this term can be bounded by a similar analysis as in Theorem \ref{thm_TS_NC}.
    \item $\mathbb{E} \left[\sum_{t=1}^{\tau+1}\mu_t (c-g(\tilde{a}_t+\tilde{b}_t\cdot p_t ))\right].$ This term captures the ``extra revenues/negative regrets'' from using inventory with the ``price'' $\mu_t$, which can be bounded by $c\Bar{p}\mathbbm{E}[\tau+1-T]+O(\sqrt{T})$. Intuitively, the algorithm will consume all inventory $cT$ at $\tau+1$ while the target consumption is only $c\cdot (\tau+1)$. Thus, these extra consumed inventories should earn some extra revenues or negative regrets. Thanks to the online gradient descent, such extra revenues can be bounded by $c\bar{p}\mathbbm{E}[\tau+1-T]+O(\sqrt{T})$. The analysis is similar to \cite{agrawal2016linear}, which, however, focuses on the UCB algorithm under a stationary environment.
    \item $\mathbb{E} \left[\Bar{p}\Bar{c}\cdot (T-\tau-1)\right].$ This term captures the regret caused by the early stop of the algorithm: the algorithm stops at $\tau+1\leq T$. Such an early stop means the algorithm may consume the inventory too fast and can at most make $\mathbb{E} \left[\Bar{p}\Bar{c}\cdot (T-\tau-1)\right]$ loss. However, such loss can be covered by the extra revenues $\mathbb{E} \left[\sum_{t=1}^{\tau+1}\mu_t (c-g(\tilde{a}_t+\tilde{b}_t\cdot p_t ))\right]$ as discussed above.
\end{itemize}

\paragraph{Proof of Theorem \ref{thm_TS_C_UB}.}\
\begin{proof}
 Here we rewrite the deterministic optimization problem \eqref{eq_RT_Det} as the following in a more detailed way:
\begin{align*}
    r^*_T(\{\mathcal{P}_t\})= \max& \sum_{t=1}^T \mathbb{E}_t\left[p_t(X_t)g(A_t^*+B_t^*\cdot p_t(X_t))\right] \\
\text{s.t.} &  \sum_{t=1}^T \mathbb{E}_t\left[ g(A_t^*+B_t^*\cdot p_t(X_t))\right]\leq cT, \nonumber \\
& p_t(x): \mathcal{X}\rightarrow [\underline{p},\Bar{p}] \ \text{is a measurable function for}\  t=1,\ldots,T, \nonumber
\end{align*}
where $\mathbb{E}_t$ is the expectation taken with respect to $X_t\sim \mathcal{P}_t$. Then one standard result in revenue management literature \citep{gallego1997multiproduct,talluri2004theory,jiang2020online} is the benchmark  $\mathbb{E}[\textup{OPT}(X_1,\ldots,X_T,cT)]$ can be upper bounded by $r^*_T(\{\mathcal{P}_t\})$:
\begin{lemma}[Lemma 1 in \citep{jiang2020online}]
\label{lemma_opt_det_ub}
Under Assumption \ref{assp_bound}, \ref{assp_error}, \ref{assp_g'}, \ref{assmp_gen_x_t}, \ref{assmp_feasible},
    $$\mathbb{E}[\textup{OPT}(X_1,\ldots,X_T,cT)]\leq r^*_T(\{\mathcal{P}_t\}),$$
    where the expectation is with respect to $X_t\sim \mathcal{P}_t$ for $t=1,\ldots,T$.
\end{lemma}
Although $r^*_T(\{\mathcal{P}_t\})$ is more tractable than $\mathbb{E}[\textup{OPT}(X_1,\ldots,X_T,cT)]$, we need to further upper bound it with a pricing policy which can utilize the dual variables $\mu_t$'s solved by the algorithm. We define the termination time
$$\tau = \max \left\{ t=1,\ldots,T-1: \sum_{t'=1}^t D_{t'}\leq  cT \right\},$$
which is the last period before the inventory is exhausted or equals to $T-1$ when still has inventory at $T-1$. Further, recall $\tilde{r}^*(\mu,a,b)=\max_{p\in [\mu,\bar{p}]}\Tilde{r}(p;\mu,a,b)$ and $(A^*_t,B^*_t)=(X^\top_t\alpha^*,X^\top_t\beta^*)$, then

\begin{lemma}
    \label{lemma_reviseOPT}
    $$r^*_T\left(\{\mathcal{P}_t\}\right)\leq  \mathbb{E} \left[\sum_{t=1}^{\tau+1}\left(c\mu_t+\Tilde{r}^*(\mu_t,A^*_t,B^*_t)\right)+\bar{p}c\cdot (T-\tau-1) \right]+\sqrt{2}(\bar{p}\vee \Bar{p}^2)\bar{g}\bar{\theta} \mathcal{W}_T(\mathcal{P}_1,\ldots,\mathcal{P}_T).$$
\end{lemma}

The above lemma says the benchmark performance can be bounded by three parts: $\sum_{t=1}^{\tau+1}\left(c\mu_t+\Tilde{r}^*(\mu_t;A_t^*,B_t^*)\right)$ captures the Lagrangian formulation (with the dual variables $\mu_t$'s) of revenues gotten when inventory is exhausted, $\bar{p}c\cdot (T-\tau-1)$ captures the maximum revenue gotten after inventory is exhausted, and $\mathcal{W}_T(\mathcal{P}_1,\ldots,\mathcal{P}_T)$ captures the effect of non-stationarity of $\mathcal{P}_t$'s. The proof of the above lemma is mainly based on the dual formulations of both $r^*_T\left(\{\mathcal{P}_t\}\right)$ and Wasserstein distance function. We postpone the proof to Appendix~\ref{sec_anc_lemmas}.

Now we introduce some similar notations as in the proof of Theorem \ref{thm_TS_NC}. Let $\bar{\gamma}=2\bar{\theta}+ \frac{2\bar{\sigma}}{\underline{g}}\sqrt{2\log T+2d\log\left(\frac{2d+T}{2d}\right)}$,   $\bar{\kappa}=2\bar{\gamma}\sqrt{\log (4T^2)}$ and
$$\tilde{\Theta}_{t}= \left\{(a,b) \in \mathbb{R}^{2}:  \|(a,b)-(\hat{a}_t,\hat{b}_t) \|_{\tilde{M}_{t-1}}\leq \bar{\kappa}  \right\}.$$

We define the good event as
$$\mathcal{E}= \{\theta^*\in \Theta_t,\  (\tilde{a}_t,\tilde{b}_t)\in \tilde{\Theta}_t \ \text{for} \ t=1,\ldots,T\}$$ where recall
$$\Theta_{t}=\left\{\theta\in \Theta: \left\|\theta-\hat{\theta}_{t-1}\right\|_{M_{t-1}}\leq \bar{\gamma}\right\}.$$

Then by Definition \ref{assp: AbeilleTS} and Corollary \ref{MLEbound}, we have
\begin{equation}
\label{eq_bad_prob_TSC}
   \mathbb{P}(\mathcal{E})\geq 1-\frac{2}{T}.
\end{equation}

For $t=1,\ldots,\tau+1$, we denote
$$\text{Reg}^{(1)}_t:= \left(c\mu_t+\tilde{r}^*(\mu_t,A^*_t,B^*_t)-r(p_t;\tilde{a}_t,\tilde{b}_t)\right)\cdot\mathbbm{1}_{\mathcal{E}},$$
$$\text{Reg}^{(2)}_t:= \left(r(p_t;\tilde{a}_t,\tilde{b}_t)-r(p_t;A^*_t,B^*_t)\right)\cdot\mathbbm{1}_{\mathcal{E}},$$
where $\mu_t,p_t$ are from Algorithm \ref{alg_TS_C}.

We first focus on analyzing $\text{Reg}^{(1)}_t$. Denote
$$\Theta^{\text{OPT}}_t\coloneqq \left\{(a,b)\in\Tilde{\Theta}_t: \tilde{r}^*(\mu_t,a,b) \geq \tilde{r}^*(\mu_t,A^*_t,B^*_t) \right\}.$$
Then by (the remark below) Lemma \ref{lemma_opt_samp},
       $$\mathbb{P}\left( \tilde{r}^*(\mu_t,\tilde{a}_t,\tilde{b}_t) \geq \tilde{r}^*(\mu_t,A^*_t,B^*_t)  \bigg \vert \mathcal{H}_{t-1}, \theta^*\in \Theta_{t}\right)\geq \frac{1}{4\sqrt{e\pi}},$$
by noting $\mu_t\in \mathcal{H}_{t-1}$. With similar computation as in the proof of Theorem \ref{thm_TS_NC}, we have
    \begin{equation}
    \label{eq_opt_samp_TSC}
    \mathbb{P}\left( (\tilde{a}_t,\tilde{b}_t)\in \Theta_t^{\text{OPT}}  \bigg \vert \mathcal{H}_{t-1}, \theta^*\in \Theta_{t}\right)\geq \frac{1}{8\sqrt{e\pi}}.
    \end{equation}

For any $(\tilde{a},\tilde{b})\in \Theta^{\text{OPT}}_t$ , we have
\begin{align}
\mathbb{E}\left[\text{Reg}^{(1)}_t \bigg \vert  \mathcal{H}_{t-1}\right]&\leq \mathbb{E}\left[ \left(c\mu_t+\max_{p\in[\mu_t,\bar{p}]}\tilde{r}(p;\mu_t,\tilde{a},\tilde{b}) -r(p_t;\tilde{a}_t,\tilde{b}_t)\right)\cdot \mathbbm{1}_{\mathcal{E}}\bigg \vert  \mathcal{H}_{t-1}\right]\nonumber \\
    &=  \mathbb{E}\left[ \left(\mu_t\cdot(c-g(\tilde{a}_t+\tilde{b}_t\cdot p_t))+\max_{p\in[\mu_t,\bar{p}]}\tilde{r}(p;\mu_t,\tilde{a},\tilde{b}) -\tilde{r}(p_t;\mu_t,\tilde{a}_t,\tilde{b}_t)\right)\cdot \mathbbm{1}_{\mathcal{E}}\bigg \vert  \mathcal{H}_{t-1}\right]\nonumber \\
        &\leq  \mathbb{E}\left[ \left(\mu_t\cdot(c-g(\tilde{a}_t+\tilde{b}_t\cdot p_t))+\tilde{r}(\tilde{p}^*_t;\mu_t,\tilde{a},\tilde{b}) -\tilde{r}(\tilde{p}^*_t;\mu_t,\tilde{a}_t,\tilde{b}_t)\right)\cdot \mathbbm{1}_{\mathcal{E}}\bigg \vert  \mathcal{H}_{t-1}\right]\nonumber \\
    &\leq    \mathbb{E}\left[ \left(4\Bar{g}\Bar{p}\bar{\gamma}\sqrt{\log(4T^2)}\|\tilde{z}_t(\tilde{a},\tilde{b}))\|_{M^{-1}_{t-1}}+\mu_t\cdot (c-g(\tilde{a}_t+\tilde{b}_t\cdot p_t) \right)\cdot \mathbbm{1}_{\mathcal{E}} \bigg \vert \mathcal{H}_{t-1}\right]\nonumber
\end{align}
where $\tilde{z}_t(a,b)\coloneqq (x_t,\tilde{p}^*_tx_t)$ and $\tilde{p}^*_t\coloneqq \argmax_{p\in [\mu_t,\bar{p}] }\tilde{r}(p;\mu_t, \tilde{a},\tilde{b})$. Here the first line is by definition of $\Theta^{\text{OPT}}_t$, the second line is by definition of $\tilde{r}(p;\mu, a,b)$, the third line is by the optimality of $p_t$, and the last line is from similar computation steps as in \eqref{ineq_theta_tilde}. Then by the same computation steps in \eqref{ineq_reg_t_1}, we get
\begin{equation}
    \label{eq_reg_inv_reg1}
    \mathbb{E}\left[\text{Reg}^{(1)}_t \bigg \vert  \mathcal{H}_{t-1}\right]\leq 32\Bar{g}\Bar{p}\Bar{\gamma}\sqrt{e \pi \log(4T^2)}   \mathbb{E}\left[\|z_t\|_{M_{t-1}^{-1}}\bigg \vert \mathcal{H}_{t-1}\right] +\mathbb{E}\left[\mu_t\cdot (c-g(\tilde{a}_t+\tilde{b}_t\cdot p_t) )\cdot \mathbbm{1}_{\mathcal{E}} \bigg \vert \mathcal{H}_{t-1}\right],
\end{equation}
where $z_t=(x_t, p_tx_t)$ and $p_t$ is the price used in the algorithm. Thus, $\mathbb{E}\left[\sum_{t=1}^{\tau+1} \text{Reg}_t^{(1)}\right]$ can be bounded by
\begin{align}
    \mathbb{E}\left[\sum_{t=1}^{\tau+1} \text{Reg}_t^{(1)}\right]&= \mathbb{E}\left[\sum_{t=1}^{T} \mathbbm{1}_{\{t\leq \tau+1\}}\mathbb{E}\left[\text{Reg}_t^{(1)}\big\vert \mathcal{H}_{t-1}\right]\right] \nonumber\\
    &\leq 32\Bar{g}\Bar{p}\Bar{\gamma}\sqrt{e \pi \log(4T^2)}\sum_{t=1}^T  \mathbb{E}\left[\|z_t\|_{M_{t-1}^{-1}}\right]+ \mathbb{E}\left[\sum_{t=1}^{T} \mathbbm{1}_{\{t\leq \tau+1\}} \mathbb{E}\left[\mu_t\cdot (c-g(\tilde{a}_t+\tilde{b}_t\cdot p_t) )\cdot \mathbbm{1}_{\mathcal{E}} \bigg \vert \mathcal{H}_{t-1}\right] \right] \nonumber\\
    &\leq 32\Bar{g}\Bar{p}\bar{\gamma}\sqrt{2e \pi dT\log\left(\frac{2d\lambda+T}{2d\lambda}\right)\log(4T^2)}+ \mathbb{E}\left[\sum_{t=1}^{T} \mathbbm{1}_{\{t\leq \tau+1\}} \mathbb{E}\left[\mu_t\cdot (c-g(\tilde{a}_t+\tilde{b}_t\cdot p_t) )\cdot \mathbbm{1}_{\mathcal{E}} \bigg \vert \mathcal{H}_{t-1}\right] \right] \nonumber\\
    &=32\Bar{g}\Bar{p}\bar{\gamma}\sqrt{2e \pi dT\log\left(\frac{2d\lambda+T}{2d\lambda}\right)\log(4T^2)}+ \mathbb{E}\left[\sum_{t=1}^{\tau+1} \mu_t\cdot (c-g(\tilde{a}_t+\tilde{b}_t\cdot p_t) )\cdot \mathbbm{1}_{\mathcal{E}} \right] \nonumber\\
    &\leq 36\Bar{g}\Bar{p}\bar{\gamma}\sqrt{2e \pi dT\log\left(\frac{2d\lambda+T}{2d\lambda}\right)\log(4T^2)}+c\bar{p}\mathbb{E}[\tau+1-T]+\Bar{p}\sqrt{(\bar{D}^2+\bar{\sigma}^2)T}+2\Bar{p}\Bar{D} \label{eq_reg_TSC_reg1}
\end{align}
where the first line is by the tower rule and $ \mathbbm{1}_{\{t\leq \tau+1\}}\in \mathcal{H}_{t-1}$, the second line is by \eqref{eq_reg_inv_reg1}, the third line is by Lemma \ref{EPL_lemma}, the fourth line is again by the tower rule and $ \mathbbm{1}_{\{t\leq \tau+1\}}\in \mathcal{H}_{t-1}$, and the last line is by the lemma below whose proof can be found in Appendix \ref{sec_anc_lemmas}:
\begin{lemma}
\label{lemma_GD_inv}
$$\mathbb{E}\left[\sum_{t=1}^{\tau+1} \mu_t\cdot (c-g(\tilde{a}_t+\tilde{b}_t\cdot p_t) )\cdot \mathbbm{1}_{\mathcal{E}} \right]\leq c\bar{p}\mathbb{E}[\tau+1-T]+\Bar{p}\sqrt{(\bar{D}^2+\bar{\sigma}^2)T}+2\Bar{p}\Bar{D}+4\Bar{g}\Bar{p}\bar{\gamma}\sqrt{2e \pi dT\log\left(\frac{2d\lambda+T}{2d\lambda}\right)\log(4T^2)}.$$
\end{lemma}

And with the same argument used in the proof of Theorem \ref{thm_TS_NC}, $\mathbb{E}\left[\sum_{t=1}^{\tau+1} \text{Reg}_t^{(2)}\right]$ can be bounded by
\begin{equation}
\label{eq_reg_TSC_reg2}
    \mathbb{E}\left[\sum_{t=1}^{\tau+1} \text{Reg}_t^{(2)}\right]= \mathbb{E}\left[\sum_{t=1}^{T} \mathbbm{1}_{\{t\leq \tau+1\}}\mathbb{E}\left[\text{Reg}_t^{(2)}\big\vert \mathcal{H}_{t-1}\right]\right]\leq 4\Bar{g}\Bar{p}\bar{\gamma}\sqrt{2e \pi dT\log\left(\frac{2d\lambda+T}{2d\lambda}\right)\log(4T^2)}.
\end{equation}

We use $\text{TSC}$(Thompson Sampling with Constraint) to denote the Algorithm \ref{alg_TS_C}, then
\begin{align*}
   \text{Reg}_T^{\text{TSC}}&\leq \mathbb{E} \left[\sum_{t=1}^{\tau+1}\left(c\mu_t+\Tilde{r}^*(\mu_t,A_t^*,B_t^*)\right)+\bar{p}c\cdot (T-\tau-1) \right]+\sqrt{2}(\bar{p}\vee \Bar{p}^2)\bar{g}\bar{\theta} \mathcal{W}_T(\mathcal{P}_1,\ldots,\mathcal{P}_T)-\mathbb{E} \left[\sum_{t=1}^{\tau} r(p_{t};A^*_{t},B^*_{t})\right]\\
   &\leq \mathbb{E}\left[\sum_{t=1}^{\tau+1} \left(\text{Reg}_t^{(1)}+\text{Reg}_t^{(2)}+(c\mu_t+\Tilde{r}^*(X_t,\mu_t))\cdot \mathbbm{1}_{\mathcal{E}^c}\right)+\bar{p}c\cdot (T-\tau-1)+r(p_{\tau+1};A^*_{\tau+1},B^*_{\tau+1}) \right]\\
   &+\sqrt{2}(\bar{p}\vee \Bar{p})^2\bar{g}\bar{\theta} \mathcal{W}_T(\mathcal{P}_1,\ldots,\mathcal{P}_T)\\
   &\leq \mathbb{E}\left[\sum_{t=1}^{\tau+1} (\text{Reg}_t^{(1)}+\text{Reg}_t^{(2)})\right]+\mathbbm{P}(\mathcal{E}^c)\cdot (c+\Bar{D})\bar{p}T+\bar{p}c\mathbb{E}\left[T-\tau-1 \right]+\bar{D}\Bar{p}+\sqrt{2}(\bar{p}\vee \Bar{p}^2)\bar{g}\bar{\theta} \mathcal{W}_T(\mathcal{P}_1,\ldots,\mathcal{P}_T)\\
      &\leq \mathbb{E}\left[\sum_{t=1}^{\tau+1} (\text{Reg}_t^{(1)}+\text{Reg}_t^{(2)})\right]+2(c+\Bar{D})\bar{p}+\bar{p}c\mathbb{E}\left[T-\tau-1 \right]+\bar{D}\Bar{p}+\sqrt{2}(\bar{p}\vee \Bar{p}^2)\bar{g}\bar{\theta} \mathcal{W}_T(\mathcal{P}_1,\ldots,\mathcal{P}_T)\\
   &\leq  40\Bar{g}\Bar{p}\bar{\gamma}\sqrt{2e \pi dT\log\left(\frac{2d\lambda+T}{2d\lambda}\right)\log(4T^2)}+\Bar{p}\sqrt{(\bar{D}^2+\bar{\sigma}^2)T}+7\Bar{D}\Bar{p}+\sqrt{2}(\bar{p}\vee \Bar{p}^2)\bar{g}\bar{\theta} \mathcal{W}_T(\mathcal{P}_1,\ldots,\mathcal{P}_T),
\end{align*}
where the first inequality is by Lemma \ref{lemma_reviseOPT}, the second inequality is by definitions, the third inequality is by $\mu_t\leq \bar{p}$ and $r(p_{\tau+1};A^*_{\tau+1},B^*_{\tau+1}),\tilde{r}^*(X_t,\mu_t)\leq \bar{p}\bar{D}$, the fourth inequality is by \eqref{eq_bad_prob_TSC} and the last inequality is by  \eqref{eq_reg_TSC_reg1}, \eqref{eq_reg_TSC_reg2} and Assumption \ref{assmp_feasible}.
\end{proof}

\subsubsection{Proofs for Ancillary Lemmas}
\label{sec_anc_lemmas}
\paragraph{Proof for Lemma \ref{lemma_reviseOPT}.}\

\begin{proof}
Denote  $\tilde{\mathcal{H}}_t\coloneqq \sigma(p_1,\ldots,p_t,\epsilon_1,\ldots,\epsilon_t,X_1,\ldots,X_t)$ and $\tilde{\mathcal{H}}_{0}=\sigma\left(\emptyset, \Omega\right)$, then
\begin{align*}
    r^*_T\left(\{\mathcal{P}_t\}\right)&\leq \min_{\mu\geq 0} c\mu T+\sum_{t=1}^T \mathbb{E}_t\left[ \tilde{r}^*(\mu,X_t^\top\alpha^*,X^\top_t\beta^*)\right] \nonumber \\
    &=\min_{\mu\geq 0} c\mu T+\sum_{t=1}^T \mathbb{E}_{\Bar{\mathcal{P}}_T}\left[ \tilde{r}^*(\mu,X^\top\alpha^*,X^\top\beta^*)\right] \nonumber \\
    &=\mathbb{E}\left[\sum_{t=1}^{\tau+1} \left(\min_{\mu\geq 0} c\mu +\mathbb{E}_{\Bar{\mathcal{P}}_T}\left[ \tilde{r}^*(\mu,X^\top\alpha^*,X^\top\beta^*)\right]\right)\right]+\mathbb{E}\left[\sum_{t=\tau+2}^{T} \left(\min_{\mu\geq 0} c\mu +\mathbb{E}_{\Bar{\mathcal{P}}_T}\left[ \tilde{r}^*(\mu,X^\top\alpha^*,X^\top\beta^*)\right]\right)\right] \nonumber \\
    &\leq \mathbb{E}\left[\sum_{t=1}^{T} \mathbbm{1}_{\{t\leq \tau+1\}}\cdot \left( c\mu_t +\mathbb{E}_{\Bar{\mathcal{P}}_T}\left[ \tilde{r}^*(\mu_t,X^\top\alpha^*,X^\top\beta^*)\big \vert \Tilde{\mathcal{H}}_{t-1}\right]\right) \right]\\
    &+\mathbb{E}\left[\sum_{t=\tau+2}^{T} \left(\min_{\mu\geq 0} c\mu +\mathbb{E}_{\Bar{\mathcal{P}}_T}\left[ \tilde{r}^*(\mu,X^\top\alpha^*,X^\top\beta^*)\right]\right)\right]\\
    &\leq  \mathbb{E}\left[\sum_{t=1}^{T} \mathbbm{1}_{\{t\leq \tau+1\}}\cdot \left( c\mu_t +\mathbb{E}_{t}\left[ \tilde{r}^*(\mu_t,X_t^\top\alpha^*,X_t^\top\beta^*)\big \vert \Tilde{\mathcal{H}}_{t-1}\right]\right) \right]+\sqrt{2} (\bar{p}\vee\Bar{p}^2)\Bar{g}\Bar{\theta}\mathcal{W}_T(\mathcal{P}_1,\ldots,\mathcal{P}_T)\\
    &+c\bar{p}\cdot \mathbb{E}[T-\tau-1],
\end{align*}
where the first line is from weak duality and the definition of $\tilde{r}^*$, the second line is because $\Bar{\mathcal{P}}_T$ is the uniform mixture of $\mathcal{P}_t$'s, and the third line is by splitting the summation into two parts and use $\sum_{T+1}^T(\cdot)=0$ for simplifying the notation,  the fourth line is by the suboptimal $\mu_t$ and $\mu_t,\mathbbm{1}_{\{t\leq \tau+1\}}\in \Tilde{\mathcal{H}}_{t-1}$ with the tower rule of conditional expectation, and the last line is from the following two lemmas and their proofs can be found in this subsection.
\begin{lemma}
\label{lemma_OPT_refined1}
For $t=1,\ldots,\tau+1$,
    $$\mathbb{E}_{\Bar{\mathcal{P}}_T}\left[ \tilde{r}^*(\mu_t,X^\top\alpha^*,X^\top\beta^*)\big \vert \Tilde{\mathcal{H}}_{t-1}\right]\leq \mathbb{E}_t\left[\tilde{r}^*(\mu_t,X_t^\top\alpha^*,X_t^\top\beta^*)\big\vert \Tilde{\mathcal{H}}_{t-1}\right]
    +\sqrt{2} (\bar{p}\vee\Bar{p}^2)\Bar{g}\Bar{\theta} \mathcal{W}(\mathcal{P}_t,\bar{\mathcal{P}}_T) \quad \textup{almost surely.}$$
\end{lemma}
\begin{lemma}
\label{lemma_OPT_refined2}
    $$\mathbb{E}\left[\sum_{t=\tau+2}^{T} \left(\min_{\mu\geq 0} c\mu +\mathbb{E}_{\Bar{\mathcal{P}}_T}\left[ \tilde{r}^*(\mu,X^\top\alpha^*,X^\top\beta^*)\right]\right)\right]\leq c\bar{p}\cdot \mathbb{E}[T-\tau-1].$$
\end{lemma}
Further, since $\mu_t,\mathbbm{1}_{\{t\leq \tau+1\}}\in \Tilde{\mathcal{H}}_{t-1}$, with the tower rule of conditional expectation, we have
$$\mathbb{E}\left[\sum_{t=1}^{T} \mathbbm{1}_{\{t\leq \tau+1\}}\cdot \left( c\mu_t +\mathbb{E}_{t}\left[ \tilde{r}^*(\mu_t,X_t^\top\alpha^*,X_t^\top\beta^*)\big \vert \Tilde{\mathcal{H}}_{t-1}\right]\right) \right]=\mathbb{E}\left[\sum_{t=1}^{\tau+1} \left( c\mu_t +\tilde{r}^*(\mu_t,X_t^\top\alpha^*,X_t^\top\beta^*)\right)\right],$$
and can conclude the result.
\end{proof}

\paragraph{Proof for Lemma \ref{lemma_GD_inv}.}\

\begin{proof}

\begin{align}
   \mathbb{E}\left[\sum_{t=1}^{\tau+1} \mu_t\cdot (c-g(\tilde{a}_t+\tilde{b}_t\cdot p_t) )\cdot \mathbbm{1}_{\mathcal{E}} \right]&= \mathbb{E}\left[\sum_{t=1}^{\tau+1} \left(\mu_t\cdot \left((c-D_t)+g(A^*_t+B^*_t\cdot p_t)-g(\tilde{a}_t+\tilde{b}_t\cdot p_t)+\epsilon_t  \right)\right)\cdot   \mathbbm{1}_{\mathcal{E}} \right] \nonumber \\
     &= \mathbb{E}\left[\sum_{t=1}^{\tau+1} \left(\mu_t\cdot(c-D_t+\epsilon_t)  \right)\cdot   \mathbbm{1}_{\mathcal{E}} \right] \nonumber\\
     &+\mathbb{E}\left[\sum_{t=1}^{\tau+1} \left(\mu_t\cdot(g(A^*_t+B^*_t\cdot p_t)-g(\tilde{a}_t+\tilde{b}_t\cdot p_t))  \right)\cdot   \mathbbm{1}_{\mathcal{E}} \right] \nonumber\\
    &\leq \mathbb{E}\left[\sum_{t=1}^{\tau+1} \left(\mu_t\cdot(c-D_t+\epsilon_t)  \right)\cdot   \mathbbm{1}_{\mathcal{E}} \right] +4\Bar{g}\Bar{p}\bar{\gamma}\sqrt{2e \pi dT\log\left(\frac{2d\lambda+T}{2d\lambda}\right)\log(4T^2)}, \nonumber
\end{align}
where the first line is by definition of $D_t$ and the  inequality is by similar computation steps used in proof of Theorem \ref{thm_TS_NC} with $0\leq\mu_t\leq \Bar{p}$.

Denote the random variable $\mu^*\coloneqq \argmin_{\mu\in[0,\bar{p}]} \sum_{t=1}^{\tau+1} \mu\cdot (c-D_t)$. By a standard analysis of online gradient descent on Online Convex Optimization (e.g., Theorem 3.1. in  \cite{hazan2016introduction}), we have

$$\sum_{t=1}^{\tau+1} \mu_t\cdot (c-D_t)\leq \sum_{t=1}^{\tau+1} \mu^*\cdot (c-D_t)+\frac{\bar{p}^2}{2\eta}+\frac{\eta}{2}\sum_{t=1}^{\tau+1}(c-D_t)^2 \quad \text{almost surely},$$
where $\eta=\frac{\bar{p}}{\sqrt{(\bar{D}^2+\Bar{\sigma}^2)T}}$. We claim $\sum_{t=1}^{\tau+1} \mu^*\cdot (c-D_t)\leq c\Bar{p}(\tau+1-T)$ almost surely: if $\tau=T-1$, then we can pick $\mu=0$ and thus $\sum_{t=1}^{\tau+1} \mu^*\cdot (c-D_t)\leq 0$. If $\tau<T-1$, then by its definition we have $\sum_{t=1}^{\tau+1} D_t >cT$, and thus we can pick $\mu=\bar{p}$ and $$\sum_{t=1}^{\tau+1} \mu^*\cdot (c-D_t)\leq c\Bar{p}(\tau+1-T).$$
Further,
\begin{align*}
    \mathbb{E}\left[\sum_{t=1}^{\tau+1}(c-D_t)^2\right]&=\mathbb{E}\left[\sum_{t=1}^{\tau+1}\mathbb{E}\left[(c-D_t)^2\big\vert \mathcal{H}_{t-1},p_t \right]\right]\\
    &=\mathbb{E}\left[\sum_{t=1}^{\tau+1} \mathbb{E}\left[(c-g(A^*_t+B^*_t\cdot p_t))^2+\epsilon_t^2-2\epsilon_t\cdot (c-g(A^*_t+B^*_t\cdot p_t))\big\vert \mathcal{H}_{t-1},p_t \right]\right]\\
    &=\mathbb{E}\left[\sum_{t=1}^{\tau+1} \mathbb{E}\left[(c-g(A^*_t+B^*_t\cdot p_t))^2+\epsilon_t^2\big\vert \mathcal{H}_{t-1},p_t \right]\right]\\
    &=\mathbb{E}\left[\sum_{t=1}^{\tau+1} (c-g(A^*_t+B^*_t\cdot p_t))^2\right]+\mathbb{E}\left[\sum_{t=1}^{\tau+1}\mathbb{E}\left[\epsilon_t^2\big\vert \mathcal{H}_{t-1}\right]\right]\\
    &\leq \mathbb{E}\left[(\tau+1)\Bar{D}^2+(\tau+1)\Bar{\sigma}^2\right]\\
    &\leq T(\bar{D}^2+\Bar{\sigma}^2),
\end{align*}
where the first line is by the tower rule of conditional expectation and $\mathbbm{1}_{t\leq \tau+1} \in \mathcal{H}_{t-1}$, the third line is because $\epsilon_t$ is independent with $p_t$ conditional on $\mathcal{H}_{t-1}$ and has zero mean, while $A^*_t,B^*_t\in\mathcal{H}_{t-1}$, the fifth line is by Assumption \ref{assp_g'} and the property of sub-Gaussian variables (e.g., Proposition 2.5.2 in \cite{vershynin2018high}), and the last line is by $\tau\leq T-1$ by definition.

Combine above all with $\eta=\frac{\bar{p}}{\sqrt{(\bar{D}^2+\Bar{\sigma}^2)T}}$, we can conclude
\begin{equation}
\label{eq_DG_result}
    \mathbb{E}\left[\sum_{t=1}^{\tau+1} \mu_t\cdot (c-D_t)\right]\leq c\bar{p}\mathbb{E}\left[(\tau+1-T) \right]+\Bar{p}\sqrt{(\bar{D}^2+\Bar{\sigma}^2)T}.
\end{equation}

Further,
\begin{align*}
    \mathbb{E}\left[\sum_{t=1}^{\tau+1} (\mu_t\cdot (c-D_t))\cdot \mathbbm{1}_{\mathcal{E}}\right]&=\mathbb{E}\left[\sum_{t=1}^{\tau+1} \mu_t\cdot (c-D_t)\right]-\mathbb{E}\left[\sum_{t=1}^{\tau+1} (\mu_t\cdot (c-D_t))\cdot \mathbbm{1}_{\mathcal{E}^c}\right]\\
    &\leq \mathbb{E}\left[\sum_{t=1}^{\tau+1} \mu_t\cdot (c-D_t)\right]+\mathbb{E}\left[\sum_{t=1}^{\tau+1}\bar{p}\bar{D}\cdot\mathbbm{1}_{\mathcal{E}^c}\right]+\mathbb{E}\left[  \sum_{t=1}^{\tau+1}\mu_t \epsilon_t \cdot \mathbbm{1}_{\mathcal{E}^c}\right]\\
     &\leq \mathbb{E}\left[\sum_{t=1}^{\tau+1} \mu_t\cdot (c-D_t)\right]+2\Bar{p}\bar{D}+\mathbb{E}\left[  \sum_{t=1}^{\tau+1}\mu_t \epsilon_t \cdot \mathbbm{1}_{\mathcal{E}^c}\right],
\end{align*}
where the second line is by $D_t\leq \Bar{D}+\epsilon_t$ and $\mu_t\in[0,\bar{p}]$, the third line is by \eqref{eq_bad_prob_TSC}. And thus,
\begin{align*}
    \mathbb{E}\left[\sum_{t=1}^{\tau+1} (\mu_t\cdot (c-D_t+\epsilon_t))\cdot \mathbbm{1}_{\mathcal{E}}\right]&\leq \mathbb{E}\left[\sum_{t=1}^{\tau+1} \mu_t\cdot (c-D_t)\right]+2\Bar{p}\bar{D}+\mathbb{E}\left[  \sum_{t=1}^{\tau+1}\mu_t \epsilon_t\right]\\
    &= \mathbb{E}\left[\sum_{t=1}^{\tau+1} \mu_t\cdot (c-D_t)\right]+2\Bar{p}\bar{D}+\mathbb{E}\left[  \sum_{t=1}^{T}\mathbbm{1}_{\{t\leq\tau+1\}}\mu_t \mathbb{E}[\epsilon_t\big \vert \mathcal{H}_{t-1}]\right]\\
    &\leq c\bar{p}\mathbb{E}[\tau+1-T]+\Bar{p}\sqrt{(\bar{D}^2+\bar{\sigma}^2)T}+2\Bar{p}\Bar{D},
\end{align*}
where the second line is by the tower rule and  $\mu_t,\mathbbm{1}_{\{t\leq\tau+1\}}\in \mathcal{H}_{t-1}$ and last line is by \eqref{eq_DG_result}. And combine all above we can conclude the lemma.
\end{proof}

\paragraph{Proof for Lemma \ref{lemma_OPT_refined1}.}\

\begin{proof}
    We first note given any $\mu_t\in[0,\bar{p}]$, $\Tilde{r}^*(\mu_t,x^\top\alpha^*,x^\top\beta^*)$ is Lipschitz in $x$: for $x,x' \in \mathcal{X}$, without loss of generality, we assume $\Tilde{r}^*(\mu_t,x^\top\alpha^*,x^\top\beta^*)\geq\Tilde{r}^*(\mu_t,(x')^\top\alpha^*,(x')^\top\beta^*)$ and denote $\tilde{p}^*_t(x)\coloneqq \argmax_{p\in[\mu_t,\Bar{p}]}\Tilde{r}(p;\mu_t,x^\top \alpha^*,x^\top\beta^*)$, then
\begin{align*}
  &\Tilde{r}^*(\mu_t,x^\top\alpha^*,x^\top\beta^*)-\Tilde{r}^*(\mu_t,(x')^\top\alpha^*,(x')^\top\beta^*)\\
  =&(\tilde{p}^*_t(x)-\mu_t)g(x^\top\alpha^*+x^\top \beta^*\cdot \tilde{p}^*_t(x))-(\tilde{p}^*_t(x')-\mu_t)g((x')^\top\alpha^*+(x')^\top \beta^*\cdot \tilde{p}^*_t(x'))\\
    \leq& (\tilde{p}^*_t(x')-\mu_t)\cdot \big \vert g(x^\top\alpha^*+x^\top \beta^*\cdot\tilde{p}^*_t(x'))-g((x')^\top\alpha^*+(x')^\top \beta^*\cdot \tilde{p}^*_t(x'))) \big \vert\\
    \leq& \Bar{g}\Bar{p}|(x-x',x-x')^\top (\alpha^*, \tilde{p}^*_t(x')\beta^*)|\\
    \leq&\sqrt{2} (\bar{p}\vee\Bar{p}^2)\Bar{g}\Bar{\theta}\|x-x'\|_2,
\end{align*}
where the first line is by definitions, the second line is by the optimality of $\tilde{p}^*_t(x)$, and the last two lines are by Assumption \ref{assp_bound} and \ref{assp_g'} and Holder's inequality. Thus, by the dual formulation of Wasserstein distance \citep{kantorovich1958space}, we have for $t=1,\ldots,\tau+1$,
$$\mathbb{E}_{\Bar{\mathcal{P}}_T}\left[\Tilde{r}^*(\mu_t;X^\top\alpha^*,X^\top\beta^*)\big\vert \Tilde{\mathcal{H}}_{t-1}\right]-\mathbb{E}_t\left[\Tilde{r}^*(\mu_t;X_t^\top\alpha^*,X_t^\top\beta^*)\big\vert \Tilde{\mathcal{H}}_{t-1}\right]
    \leq\sqrt{2} (\bar{p}\vee\Bar{p}^2)\Bar{g}\Bar{\theta} \mathcal{W}(\mathcal{P}_t,\bar{\mathcal{P}}_T). $$
\end{proof}

\paragraph{Proof for Lemma \ref{lemma_OPT_refined2}.}\

\begin{proof}
Define the demand optimization problem:
\begin{align}
    \label{eq_R_barP_Det_Demand}
    r^*_{\bar{\mathcal{P}}_T}\coloneqq \max&  \ \mathbb{E}_{\Bar{\mathcal{P}}_T}\left[D(X)\cdot \frac{g^{-1}(D(X))-X^\top \alpha^*}{X^\top \beta^*}\right] \nonumber \\
\text{s.t.} &   \ \mathbb{E}_{\Bar{\mathcal{P}}_T}\left[D(X)\right] \leq c, \nonumber \\
& D(x)\in[g(x^\top\alpha^*+x^\top \beta^*\cdot \Bar{p}),g(x^\top\alpha^*+x^\top \beta^*\cdot \underline{p})] \quad \forall x\in \mathcal{X}, \nonumber \\
& D(x): \mathcal{X}\rightarrow [0,\Bar{D}] \ \text{is a measurable function.}\  \nonumber
\end{align}
By Assumption \ref{assmp_feasible}, $r^*_{\bar{\mathcal{P}}_T}$ is a concave maximization problem and Slater’s condition holds. Thus, by strong duality (Section 5.2.3 in \cite{boyd2004convex}) and replacing the optimal demand function with its corresponding pricing function, we get the dual problem $\min_{\mu\geq 0} c\mu +\mathbb{E}_{\Bar{\mathcal{P}}_T}\left[ \tilde{r}^*(\mu;X^\top\alpha^*,X^\top\beta^*)\right]= r^*_{\bar{\mathcal{P}}_T}\leq c\Bar{p}$. By summing this inequality on both sides over $t=\tau+2,\ldots,T$ when $\tau\leq T-2$ and noting $\tau=T-1$ the result still holds, and the proof is finished.
\end{proof}

\section{More Algorithm Discussions and Extensions}

\subsection{Discussion on the UCB Pricing Algorithm}
\label{apx:diss_UCB}

\subsubsection{Discussion on the Original (GLM-)UCB Algorithm}
The main difference between Algorithm \ref{alg_UCB} and GLM-UCB \citep{filippi2010parametric} is that the estimation of $\hat{\theta}_t$ comes from the quasi-MLE on the observed demands as in \eqref{eq_quasi_reg}, instead of the observed rewards (revenues) as in GLM-UCB. This difference is based on the special structure of the pricing problem, i.e., the misalignment of the direct observations (demands) and the rewards (revenues), which is crucial to reducing the estimation error. Specifically, if the price $p_t>1$, the error term on revenue $r_t$ will become $p_t\bar{\sigma}^2$-sub-Gaussian, which results in larger variances and estimation errors. Indeed, when $g$ is the identical mapping and $\epsilon_{t'}$'s are i.i.d., it is well known that the weighted least square estimation on revenues $r_{t'}$'s with weights $w_{t'}=\frac{1}{p_{t'}^2}$ will lead to the best linear unbiased estimated of $\theta^*$, which just recovers the original linear regression on demands $D_{t'}$'s. This difference will be further exemplified in the case of Thompson sampling. As we noted in the analysis of Algorithm \ref{alg_TS}, this special structure is the key to reducing the dependence on $d$ and relaxing the convexity assumption.

\subsubsection{Sample Complexity on the Monte Carlo Approximation of UCB Optimization}
Given a target approximation error $\epsilon$ such that $\|(\alpha_t,\beta_t)-(\alpha^{\text{MC}}_t,\beta^{\text{MC}}_t)\|_2\leq \epsilon$, noting that $\Theta_t$'s volume is $\text{Vol}(\Theta_t)\propto\text{det}(M_{t})$, and a ball with radius $\epsilon$ has volume $\propto \epsilon^d$, by standard covering number computation (Proposition 4.2.12 in \citet{vershynin2018high}), the covering number of $\Theta_t$ is $\propto \frac{\text{det}(M_{t})}{\epsilon^d}$. We assume the covering number is achieved by the set of $\epsilon-$balls denoted as $\mathcal{B}_{\epsilon}$, then by the definition of  the covering number, there needs at least one Monte Carlo sample in every ball in $\mathcal{B}_{\epsilon}$ if $\|(\alpha_t,\beta_t)-(\alpha^{\text{MC}}_t,\beta^{\text{MC}}_t)\|_2\leq \epsilon$. Thus, the expected number of samples needed for the Monte Carlo to achieve $\|(\alpha_t,\beta_t)-(\alpha^{\text{MC}}_t,\beta^{\text{MC}}_t)\|_2\leq \epsilon$ is $\propto\frac{\text{det}(M_{t})}{\epsilon^d}$, which follows the standard analysis of coupon collector's problem \citep{motwani1995randomized} and suffers from the curse of dimensionality.

\subsection{Discussion on Original Thompson Sampling Algorithm}
\label{apx:TS_vs_newTS}
For completeness, here we introduce the original Thompson sampling in \cite{abeille2017linear} with some ancillary lemmas needed in the proof of Theorem \ref{thm_TS_NC}.  We first state some requirements for the sampling distribution.

\begin{definition}[Sampling distribution \citep{abeille2017linear}]
\label{assp: AbeilleTS}
A distribution $\mathcal{D}^{\text{TS}}$ is \textit{suitable} for Thompson sampling if it is a multivariate distribution on $\mathbb{R}^{2d}$ absolutely continuous with respect to the Lebesgues measure which satisfies the following properties:
\begin{itemize}
    \item (anti-concentration) there exists a positive probability $q$ such that for any $u \in \mathbb{R}^{2d}$ with $\|u\|_2=1$,
    $$\mathbb{P}_{\eta \sim \mathcal{D}^{\text{TS}}} (u^\top \eta\geq 1)\geq q,$$
    \item (concentration) there exist positive constants $c, c'$ such that $\forall \delta \in (0,1)$,
    $$\mathbb{P}_{\eta \sim \mathcal{D}^{\text{TS}}}\left(\|\eta\|_2\leq \sqrt{2cd\log\frac{2c'd}{\delta}}\right)\geq 1-\delta .$$
\end{itemize}
\label{def_xi}
\end{definition}

As shown in \citet{abeille2017linear}, the Gaussian distribution $\eta \sim \mathcal{N}(0,I_{2})$ that we use in Algorithm \ref{alg_TS} satisfies the above definition with $d=1$, $\delta=1/T^2$, $c=c'=2$ and $q=\frac{1}{4\sqrt{e\pi}}$.

As a comparison, the direct application of the original Thompson Sampling in \citet{abeille2017linear} on the pricing problem is shown as below:

\begin{algorithm}[H]
\centering
\caption{Original Thompson Sampling \citep{abeille2017linear} for Dynamic Pricing}
\label{alg_TS_exante}
\begin{algorithmic}
\STATE{\textbf{Input:} Regularization parameter $\lambda$.}
\FOR{$t=1,\ldots,T$}
\STATE{Compute the estimator $\hat{\theta}_{t-1}$ by \eqref{eq_quasi_reg} and observe the covariates $x_t$.}
\STATE{Sample $\eta_t\sim \mathcal{N}(0,I_{2d})$ and compute the parameter
$$\tilde{\theta}_{t-1}= (\Tilde{\alpha}_{t-1},\Tilde{\beta}_{t-1}) \coloneqq \hat{\theta}_{t-1}+\left(2\sqrt{\lambda}\bar{\theta}+ \frac{2\bar{\sigma}}{\underline{g}}\sqrt{2\log T+2d\log\left(\frac{2d\lambda +T}{2d\lambda}\right)}\right)M_{t-1}^{-1/2}\eta_t.$$
Set the price by
\begin{equation*}
    p_{t}=\argmax_{p\in [\underline{p},\bar{p}]} r(p;x^\top_t \Tilde{\alpha}_{t-1},x^\top_t\Tilde{\beta}_{t-1}),
\end{equation*}
and observe the demand $D_t$.
}
\ENDFOR
\end{algorithmic}
\end{algorithm}
With a modification of Assumption \ref{assp_g} which enlarges the domain width of $g$ from $O(d)$ to $O(d^{3/2})$, we can get the original Thompson Sampling's regret bound as follows:

\begin{assumption}[Properties of $g(\cdot)$]
Let $$\tilde{\Theta}:=\left\{\theta \in \mathbb{R}^{2d}:  \|\theta- \tilde{\theta}\|_2\leq 2\sqrt{d\log (4dT^2)}\left(2\bar{\theta}+ \frac{2\bar{\sigma}}{\underline{g}}\sqrt{2\log T+2d\log\left(\frac{2d+T}{2d}\right)}\right) \text{ for some } \tilde{\theta}\in \Theta\right\}$$ where $\Theta$ is defined in Assumption \ref{assp_bound}. We assume $g(z)$ is strictly increasing, differentiable, and there exist constants $\underline{g}, \bar{g} \in \mathbb{R}$ such that $0<\underline{g}\leq g'(z)\leq \bar{g} <\infty$ for all $z=x^\top\alpha+x^\top\beta \cdot p$ where $x\in \mathcal{X}$, $\theta = (\alpha,\beta) \in \Tilde{\Theta}$, and $p \in [\underline{p},\bar{p}]$.
\label{assp_g''}
\end{assumption}

\begin{theorem}
Under Assumption \ref{assp_bound}, \ref{assp_error}, \ref{assp_g''}, with any $T\geq 6$, if we choose the regularization parameter  $\lambda=1$, the regret of Algorithm \ref{alg_TS_exante} can be bounded by
$$36d\bar{g}\bar{p} \bar{\gamma} \sqrt{2T\log (4dT^2) \log\left(\frac{2d+T}{2d}\right)  }+2\bar{p}\bar{D}=\tilde{O}\left(d^{\frac{3}{2}} \sqrt{T}\right),$$
where $\Bar{\gamma}=2\bar{\theta}+ \frac{2\bar{\sigma}}{\underline{g}}\sqrt{2\log T+2d\log\left(\frac{2d+T}{2d}\right)}$.
\label{thm_TS_exante}
\end{theorem}
We omit the proof here since it is largely following \citet{abeille2017linear}.  Compared to Algorithm \ref{alg_TS}, there is an extra factor of $\sqrt{d}$ in both the Assumption \ref{assp_g''} (for domain enlargement) and regret bound. To the best of our knowledge, this extra factor is also inevitable for the existing analyses of Thompson sampling algorithms on the linear bandits problem \citep{agrawal2013thompson,abeille2017linear}.

\subsection{Pricing with Approximate Quasi-MLE Estimator}
\label{apx:appx_MLE}

As discussed in Section \ref{sec_Quasi_MLE}, the quasi-MLE problem \eqref{eq_quasi_reg} generally cannot be solved in closed-form. In this subsection, we first show how to compute its approximate (optimal) solution $\check{\theta}_{t}$ through standard projected gradient descent (Algorithm \ref{alg_GD_theta}) with bounded approximation gap (Proposition \ref{prop_GD_theta}). We further show that both the UCB and the Thompson sampling pricing algorithms can also be adapted to the case of only accessing to the approximate solution $\check{\theta}_{t}$ (Algorithm \ref{alg_UCB_appx} and \ref{alg_TS_appx} respectively) with regret upper bounds (Theorem \ref{thm:UCB_UB_2} and Theorem \ref{thm_TS_UB_appx} respectively). Finally, we provide a numerical experiment (Figure \ref{fig:reg_error}) to showcase the influence on the regret by using such approximate solutions.

\subsubsection{(Approximate) Optimization Algorithm of Quasi-MLE problem.}\

In general, the quasi-MLE problem  \eqref{eq_quasi_reg} does not have a closed-form solution. However, by introducing the regularization term, the (minus of) objective function $\lambda \underline{g} \|\theta\|_2^2/2-\sum_{t'=1}^{t} l_{t'}(\theta)$ is strongly convex and thus \eqref{eq_quasi_reg} can be solved by the standard projected gradient descent as shown in Algorithm \ref{alg_GD_theta} (we refer readers to Section 3.4.1 in \cite{bubeck2015convex} for more details).

\begin{algorithm}[ht!]
\centering
\caption{Projected Gradient Descent for \eqref{eq_quasi_reg}}
\label{alg_GD_theta}
\begin{algorithmic}
\STATE{\textbf{Input:} Total update steps $K$, initial point $\theta'_1\in\Theta$.}
\FOR{$k=1,\ldots,K-1$}
\STATE{Update
$$\theta'_{k+1}=\text{Proj}_{\Theta}\left(\theta'_{k}-\eta_k \left(\sum_{t'=1}^t(D_{t'}-g(z^\top_{t'}\theta'_{k})\cdot z_{t'}-\lambda\underline{g}\theta'_{k}\right)  \right),$$
where $\text{Proj}_{\Theta}(\theta)$ is the projection function to project $\theta$ in $\Theta$ and $\eta_k=\frac{2}{\underline{g}\lambda_{\text{min}}(M_t)(k+1)}$ (here $\lambda_{\min}(M_t)\geq \lambda$ is the minimum eigenvalue of $M_t$).}

\ENDFOR
\STATE{\textbf{Output:} Approximate solution $\check{\theta}_t\coloneqq  \frac{\sum_{k=1}^K 2k\theta'_k}{K(K+1)}$.}
\end{algorithmic}
\end{algorithm}

The following Proposition \ref{prop_GD_theta} shows that the approximation gap between the output approximate solution $\check{\theta}_{t}$ of Algorithm \ref{alg_GD_theta} and the optimal solution $\hat{\theta}_{t}$ of the quasi-MLE problem \eqref{eq_quasi_reg}, which measures the gap of their values of \eqref{eq_quasi_reg}, can be bounded as a function of the total update steps $K$:

\begin{proposition}
\label{prop_GD_theta}
    For any $\lambda>0$ and $t=1,\ldots,T-1$, function $ \frac{\lambda \underline{g}\|\theta\|_2^2}{2} -\sum_{t'=1}^{t} l_{t'}(\theta)$ is $\underline{g}\lambda_{\text{min}}(M_t)$-strongly convex, where $\lambda_{\min}(M_t)\geq \lambda$ is the minimum eigenvalue of $M_t$.  By applying Algorithm \ref{alg_GD_theta} with $K$ steps, its output $\check{\theta}_{t}$ is an approximate solution of \eqref{eq_quasi_reg} such that
    $$\sum_{t'=1}^t l_{t'}(\hat{\theta}_t)-\frac{\lambda \underline{g}\|\hat{\theta}_t\|^2_2}{2} -\sum_{t'=1}^t l_{t'}(\check{\theta}_t)+\frac{\lambda \underline{g}\|\check{\theta}_t\|^2_2}{2} \leq \frac{2L_t^2}{\underline{g} \lambda_{\min}(M_t)(K+1)},$$
     where $L_t\coloneqq \max_{\theta\in\Theta} \|\sum_{t'=1}^t(D_{t'}-g(z_{t'}^\top \theta))\cdot z_{t'}-\lambda \underline{g}\theta \|_2$ and $\mathbb{E}[L_t^2]\leq 12\bar{g}^2\Bar{\theta}^2t^2+3\Bar{\sigma}^2t+3\lambda^2\underline{g}^2\bar{\theta}^2.$
\end{proposition}

 Although $\mathbb{E}[L_t^2]$ increases squarely in $t$, the curvature of $\sum_{t'=1}^{t} l_{t'}(\theta)$ and thus $\lambda_{\min}(M_t)$ will also change based on the collated data. As an example, if $x_t$'s are i.i.d. from some distribution with a positive definite covariance matrix, $ -\frac{\lambda \underline{g}\|\theta\|_2^2}{2} +\sum_{t'=1}^{t} l_{t'}(\theta)$ will be strongly convex with parameter $\lambda_{\min}(M_t)=\Omega(t)$ in expectation and thus the above convergence rate is roughly $O(t/K)$ for all $t=1,\ldots,T$.

\paragraph{Proof of Proposition \ref{prop_GD_theta}.}\

\begin{proof}
We first prove the strong convexity in $\Theta$ of function $\lambda \underline{g} \|\theta\|_2^2/2-\sum_{t'=1}^{t} l_{t'}(\theta)$. By the second-order Taylor's expansion, for any $\theta, \theta' \in\Theta$, recall $Q_t(\theta)=\sum_{t'=1}^{t} l_{t'}(\theta)$,
\begin{align*}
    &\frac{\lambda \underline{g} \|\theta\|_2^2}{2}-Q_t(\theta)-\frac{\lambda \underline{g} \|\theta'\|_2^2}{2}+Q_t(\theta')\\
   =&\left\langle -\nabla Q_t(\theta)+\lambda\underline{g}\theta,\theta-\theta'\right\rangle-\frac{1}{2}\left\langle\theta-\theta',\left(-\nabla^2 Q_t(\tilde{\theta})+\lambda \underline{g} I_{2d}\right)(\theta-\theta')\right\rangle \nonumber\\
\leq &\left\langle -\nabla Q_t(\theta)+\underline{g}\theta,\theta-\theta'\right\rangle-\frac{ \underline{g} \lambda_{\text{min}}(M_t)}{2}\|\theta-\theta'\|^2_2, \nonumber
\end{align*}
where $\tilde{\theta}\in \Theta$ is some point on the line segment between $\theta$ and $\theta^*$ by noting $\Theta$ is convex and the second line is from Assumption \ref{assp_g}:
\begin{equation*}
    -\nabla^2 Q_t(\tilde{\theta})+\lambda \underline{g} I_{2d}= \sum_{t'=1}^t g'\left( z_t^\top \tilde{\theta} \right) z_{t'}z_{t'}^\top +\lambda \underline{g} I_{2d}\geq \underline{g}\cdot \sum_{t'=1}^{t}z_{t'}z_{t'}^\top+\lambda \underline{g} I_{2d}\geq \underline{g} M_{t}.
\end{equation*}
Thus, we can conclude the strong convexity. Further, noting $\lambda \underline{g} \|\theta\|_2^2/2-Q_t(\theta)$ is $L_t$-Lipschitz with respect to Euclidean norm since $L_t$ upper bounds the Euclidean norm of gradient $\lambda\underline{g}\theta-\nabla Q_t(\theta)$ by definition, we can directly apply the standard convergence analysis of projected gradient descent on strongly convex and Lipschitz objective function (e.g., Theorem 3.9 in \cite{bubeck2015convex}), and get the second part of the proposition.

Now for bounding $\mathbb{E}[L_t^2]$, note
\begin{align*}
    L_t&=\max_{\theta\in\Theta} \left\|\sum_{t'=1}^t(D_{t'}-g(z_{t'}^\top \theta))\cdot z_{t'}-\lambda \underline{g}\theta \right\|_2\\
    &\leq \max_{\theta\in\Theta} \sum_{t'=1}^t\|(g(z_{t'}^\top \theta^*)-g(z_{t'}^\top \theta))\cdot z_{t'}\|_2+\left\|\sum_{t'=1}^t\epsilon_{t'}z_{t'}\right\|_2+\lambda \underline{g}\|\theta \|_2\\
    &\leq \max_{\theta\in\Theta}  \bar{g}\|\theta^*-\theta\|_2\sum_{t'=1}^t\|z_{t'}\|_2^2+\left\|\sum_{t'=1}^t\epsilon_{t'}z_{t'}\right\|_2+\lambda \underline{g}\|\theta \|_2\\
    &\leq 2\bar{g}\Bar{\theta}t+\left\|\sum_{t'=1}^t\epsilon_{t'}z_{t'}\right\|_2+\lambda\underline{g}\bar{\theta}
\end{align*}
where the second line is by triangle inequality and definition of $D_{t'}$, the third line is by Holder's inequality with Assumption \ref{assp_bound}, and the last line is still from Assumption \ref{assp_bound}. Further, since
$$\mathbb{E}\left[\left\|\sum_{t'=1}^t\epsilon_{t'}z_{t'}\right\|_2^2\right]=\mathbb{E}\left[\sum_{t'=1}^t\epsilon_{t'}^2\left\|z_{t'}\right\|_2^2\right]\leq \mathbb{E}\left[\sum_{t'=1}^t\epsilon_{t'}^2\right]\leq t\bar{\sigma}^2,$$
by Assumption \ref{assp_error} and properties of sub-Gaussian variables (Proposition 2.5.2 in \cite{vershynin2018high}), with elementary inequality $(x+y+z)^2\leq 3(x^2+y^2+z^2)$ we can conclude the result.
\end{proof}
\subsubsection{Pricing Algorithms with Approximate Quasi-MLE Estimator}
We first denote  $\Delta_t, t=1,...,T-1$ as the upper bounds of approximation gap between the optimal solution $\hat{\theta}_t$ of \eqref{eq_quasi_reg} and its approximate solution $\check{\theta}_t$ (from Algorithm \ref{alg_GD_theta} with some $K$) as the following:
\begin{equation}
    \label{eq_delta_gap}
     \sum_{t'=1}^t l_{t'}(\hat{\theta}_t)-\frac{\lambda\underline{g}\|\hat{\theta}_t\|^2_2}{2}-\sum_{t'=1}^t l_{t'}(\check{\theta}_t)+\frac{\lambda\underline{g}\|\check{\theta}_t\|^2_2}{2}\leq \Delta_t \quad \forall t=1,...,T-1,
\end{equation}
which can be upper bounded as a function of $K$ by Proposition \ref{prop_GD_theta}. The following Algorithm \ref{alg_UCB_appx} and Algorithm \ref{alg_TS_appx} show how to adapt the UCB (Algorithm \ref{alg_UCB}) and the Thompson sampling   (Algorithm \ref{alg_TS}) pricing algorithms to the case with the approximate solution $\check{\theta}_t$, with provable regret upper bounds as demonstrated by Theorem \ref{thm:UCB_UB_2} and Theorem \ref{thm_TS_UB_appx} respectively.

To account for this approximation gap, the confidence bound needs to be enlarged in the UCB pricing (Algorithm \ref{alg_UCB_appx}), and the sampling distribution should also be inflated in the Thompson sampling pricing (Algorithm \ref{alg_TS_appx}). We also remark that the Assumption \ref{assp_g'} for Algorithm \ref{alg_TS_appx} should be slightly changed in that the set $\tilde{\Theta}$ should be redefined by $$\tilde{\Theta}\coloneqq \left\{\theta \in \mathbb{R}^{2d}:  \exists \tilde{\theta}\in \Theta, \|\theta-\tilde{\theta}\|_2\leq  4\bar{\theta}\sqrt{\log (4T^2)}+ \frac{4\bar{\sigma}}{\underline{g}}\sqrt{\left(2\log T+2d\log\left(\frac{2d+T}{2d}\right)\right)\log (4T^2)}+ 2\sqrt{\log (4T^2)}\bar{\Delta}\right\},$$
where $\bar{\Delta}=\max_{t=1,\ldots,T-1}\Delta_t$. This is because the approximate quasi-MLE solution may further enlarge the sampling region.

\begin{algorithm}[ht!]
\centering
\caption{UCB Pricing with Approximation}
\label{alg_UCB_appx}
\begin{algorithmic}
\STATE{\textbf{Input:} Regularization parameter $\lambda$.}
\FOR{$t=1,...,T$}
\STATE{Compute the estimators $\check{\theta}_{t-1}$ of \eqref{eq_quasi_reg} with approximation gap upper bounded by $\Delta_{t-1}$ and its confidence set
$$\Theta_{t}:=\left\{\theta\in \Theta: \left\|\theta-\check{\theta}_{t-1}\right\|_{M_{t-1}}\leq 2\sqrt{\lambda}\bar{\theta}+ \frac{2\bar{\sigma}}{\underline{g}}\sqrt{2\log T+2d\log\left(\frac{2d\lambda +T}{2d\lambda}\right)}+\sqrt{\frac{2}{\underline{g}}\Delta_{t-1}} \right\}.$$}
\STATE{Observe covariates $x_t$ and choose the UCB parameter which maximizes the expected revenue:
\begin{equation}
\label{UCB_opt}
   (\alpha_t,\beta_t)=\argmax_{(\alpha,\beta)\in \Theta_{t}} \ \  r^*\left(x_t^\top\alpha,x_t^\top\beta\right).
\end{equation}}
\STATE{Set the price by
\begin{equation*}
    p_{t}=\argmax_{p\in [\underline{p},\bar{p}]} r(p;x^\top_t\alpha_t,x^\top_t\beta_t).
\end{equation*}
}
\ENDFOR
\end{algorithmic}
\end{algorithm}

\begin{theorem}
\label{thm:UCB_UB_2}
Under Assumptions \ref{assp_bound}, \ref{assp_g} and \ref{assp_error}, with any sequence $\{x_t\}_{t=1,...,T}$, if we choose the regularization parameter $\lambda=1$, the regret of Algorithm \ref{alg_UCB_appx} is upper bounded by
$$4\bar{p}\bar{g}\left(\bar{\gamma}+\sqrt{\frac{2\bar{\Delta}}{\underline{g}}}\right)\sqrt{Td\log\left(\frac{2d+T}{2d}\right)}+2\bar{p}\bar{D}=\tilde{O}\left(d \sqrt{T}\right),$$
where $\bar{\Delta}=\max_{t=1,\ldots,T-1} \Delta_t$ and $\bar{\gamma}=2\bar{\theta}+ \frac{2\bar{\sigma}}{\underline{g}}\sqrt{2\log T+2d\log\left(\frac{2d+T}{2d}\right)}$
represents an upper bound for the confidence volume.
\end{theorem}

\begin{algorithm}[ht!]
\centering
\caption{Thompson Sampling Pricing with Approximation}
\label{alg_TS_appx}
\begin{algorithmic}
\STATE{\textbf{Input:} Regularization parameter $\lambda$.}
\FOR{$t=1,\ldots,T$}
\STATE{Compute the estimator $\check{\theta}_{t-1}$ of \eqref{eq_quasi_reg} with approximation gap upper bounded by $\Delta_{t-1}$, observe feature $x_t$.}
\STATE{Compute the estimator $(\hat{a}_t,\hat{b}_t)\coloneqq (x_t^\top \check{\alpha}_{t-1}, x_t^\top \check{\beta}_{t-1})$.}
\STATE{Sample $\eta_t\sim \mathcal{N}(0,I_{2})$ and compute the parameter
\begin{equation}
    (\tilde{a}_{t},\tilde{b}_t)\coloneqq (\hat{a}_t,\hat{b}_t)+\left(2\sqrt{\lambda}\bar{\theta}+ \frac{2\bar{\sigma}}{\underline{g}}\sqrt{2\log T+2d\log\left(\frac{2d\lambda +T}{2d\lambda}\right)}+\sqrt{\frac{2}{\underline{g}}\Delta_{t-1}}\right)\tilde{M}_{t-1}^{-1/2}\eta_t.
\end{equation}
Set the price by
\begin{equation*}
    p_{t}=\argmax_{p\in [\underline{p},\bar{p}]} r(p;\tilde{a}_t,\tilde{b}_t),
\end{equation*}
and observe the demand $D_t$.}
\ENDFOR
\end{algorithmic}
\end{algorithm}

\begin{theorem}
\label{thm_TS_UB_appx}
Under Assumption \ref{assp_bound}, \ref{assp_error},  \ref{assp_g'}, with any sequence $\{x_t\}_{t=1,\ldots,T}$ and any $T\geq 6$, if we choose the regularization parameter  $\lambda=1$, the regret of Algorithm \ref{alg_TS_appx} can be bounded by
$$36\Bar{g}\Bar{p}\left(\bar{\gamma}+\sqrt{\frac{2\Bar{\Delta}}{\underline{g}}}\right)\sqrt{2e \pi dT\log\left(\frac{2d\lambda+T}{2d\lambda}\right)\log(4T^2)}+2\bar{D}\bar{p}=\tilde{O}\left(d \sqrt{T}\right),$$
where $\bar{\Delta}=\max_{t=1,\ldots,T-1} \Delta_t$ and $\bar{\gamma}=2\bar{\theta}+ \frac{2\bar{\sigma}}{\underline{g}}\sqrt{2\log T+2d\log\left(\frac{2d+T}{2d}\right)}$.
\end{theorem}

Theorem \ref{thm:UCB_UB_2} and Theorem \ref{thm_TS_UB_appx} provide regret upper bounds for Algorithm \ref{alg_UCB_appx} and Algorithm \ref{alg_TS_appx} respectively.  The proof ideas of them are almost identical to those of Theorem \ref{thm:UCB_UB} and Theorem \ref{thm_TS_NC} by using the following lemma captures the distance between the approximate solution $\check{\theta}_t$ and the optimal solution $\hat{\theta}_t$. The lemma justifies the choice of the confidence set in Algorithm \ref{alg_UCB_appx} and the sampling step in Algorithm \ref{alg_TS_appx}, and can be implied from the optimality condition. The exact proof can be founded in the end of this subsection.

\begin{lemma}
Recall that $\hat{\theta}_t$ is the optimal solution to the optimization problem \eqref{eq_quasi_reg}. For any $\theta\in\Theta$, we have
$$\sum_{t'=1}^t l_{t'}(\hat{\theta}_t)-\frac{\lambda\underline{g}\|\hat{\theta}_t\|^2_2}{2}-\sum_{t'=1}^t l_{t'}(\theta)+\frac{\lambda\underline{g}\|\theta\|^2_2}{2}\geq \frac{1}{2}\underline{g}\|\hat{\theta}_t-\theta\|_{M_{t}}^2.$$
\label{lemma_opt_gap}
\end{lemma}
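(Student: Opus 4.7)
The plan is to recognize this inequality as the standard strong-concavity bound for the penalized quasi-log-likelihood
\[
F_t(\theta) := -\lambda\underline{g}\|\theta\|_2^2 + \sum_{\tau=1}^t l_\tau(\theta),
\]
whose constrained maximizer over the convex set $\Theta$ is $\hat{\theta}_t$. The whole argument reduces to two ingredients: (i) $F_t$ is $\underline{g}$-strongly concave with respect to the $M_t$-norm, and (ii) the first-order optimality condition of $\hat{\theta}_t$ on the convex set $\Theta$.

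For ingredient (i), I would differentiate twice using the formula $\nabla^2 l_\tau(\theta) = -\eta_\tau(\theta)\,z_\tau z_\tau^\top$ already derived in Section \ref{sec_MLE}. Since $\eta_\tau(\theta) = g'(z_\tau^\top \theta) \geq \underline{g}$ by Assumption \ref{assp_g}, a Loewner-order estimate gives
\[
\nabla^2 F_t(\theta) = -2\lambda\underline{g}\,I_{2d} - \sum_{\tau=1}^t \eta_\tau(\theta)\,z_\tau z_\tau^\top \preceq -\underline{g}\Bigl(\lambda I_{2d} + \sum_{\tau=1}^t z_\tau z_\tau^\top\Bigr) = -\underline{g}\,M_t
\]
uniformly in $\theta\in\Theta$. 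A small but important observation is that the regularization coefficient in \eqref{eq_quasi_reg} is $\lambda\underline{g}$ rather than $\lambda$; this is precisely what allows the $\lambda I_{2d}$ part of $M_t$ to be absorbed on the right-hand side rather than carried as a leftover term.

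The remainder is a routine second-order Taylor expansion along the segment from $\hat{\theta}_t$ to an arbitrary $\theta\in\Theta$ (which lies in $\Theta$ by convexity of the ball):
\[
F_t(\theta) - F_t(\hat{\theta}_t) = \nabla F_t(\hat{\theta}_t)^\top (\theta-\hat{\theta}_t) + \tfrac{1}{2}(\theta-\hat{\theta}_t)^\top \nabla^2 F_t(\tilde{\theta})(\theta-\hat{\theta}_t),
\]
for some $\tilde{\theta}$ on the segment. The constrained first-order optimality condition at $\hat{\theta}_t$ makes the linear term nonpositive, while the Hessian bound from the previous step bounds the quadratic term above by $-\tfrac{\underline{g}}{2}\|\theta-\hat{\theta}_t\|_{M_t}^2$. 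Rearranging $F_t(\hat{\theta}_t) - F_t(\theta) \geq \tfrac{\underline{g}}{2}\|\hat{\theta}_t-\theta\|_{M_t}^2$ is exactly the claim.

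The ``hard part'' is essentially nonexistent: the only point requiring a little care is using the constrained first-order condition $\nabla F_t(\hat{\theta}_t)^\top(\theta-\hat{\theta}_t)\leq 0$ instead of $\nabla F_t(\hat{\theta}_t)=0$ in case $\hat{\theta}_t$ sits on the boundary of the ball $\Theta$. Because $\Theta$ is a convex Euclidean ball and $F_t$ is $C^2$, this is entirely standard, so the conceptual work is really just the Hessian bound recorded above.
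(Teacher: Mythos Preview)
Your proposal is correct and follows essentially the same approach as the paper: a second-order Taylor expansion of the penalized quasi-likelihood around $\hat{\theta}_t$, the constrained first-order optimality condition to kill the linear term, and the Hessian bound $-\nabla^2 F_t(\theta)\succeq \underline{g}M_t$ coming from Assumption~\ref{assp_g}. Your handling of the factor $2$ in the regularizer's Hessian and your explicit remark about the constrained (rather than unconstrained) first-order condition are slightly more careful than the paper's write-up, but the argument is the same.
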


\paragraph{Numerical experiment.}\
\begin{figure}[ht!]
    \centering
    \includegraphics[width=0.5\textwidth]{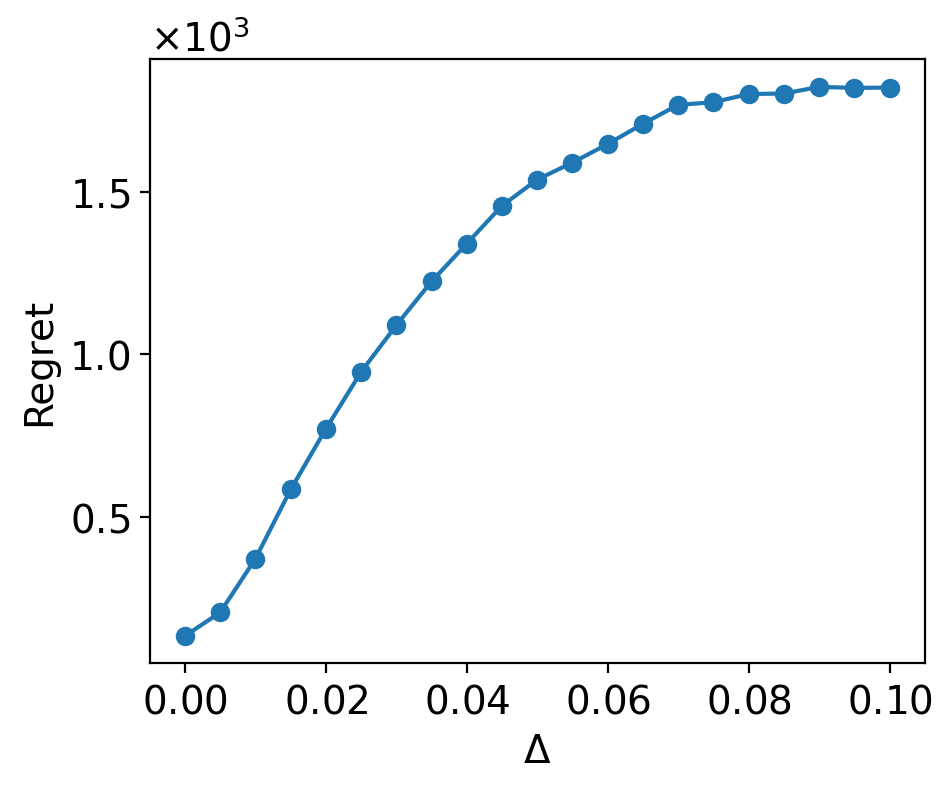}
    \caption{Regret when varying $\Delta$ in Algorithm \ref{alg_TS_appx}. }
    \label{fig:reg_error}
\end{figure}

Figure \ref{fig:reg_error} demonstrates the performances of Algorithm \ref{alg_TS_appx} on the same demand setting as \S \ref{sec_num_diss} with different approximation gaps $\bar{\Delta}$. We set $$\check{\theta}_t=\hat{\theta}_t+\Delta\times [1,1,\ldots,1]$$
as the approximate estimators used in Algorithm \ref{alg_TS_appx} and thus by simple computation $\bar{\Delta}\propto \Delta$.  We plot the regrets at $T=1500$ and $d=6$ over $\Delta=0,0.005,0.01,\ldots,0.1$. From Figure \ref{fig:reg_error},  the regret is roughly $ \propto \sqrt{\Delta}$, which is consistent with the analysis of Algorithm \ref{alg_TS_appx} in Theorem \ref{thm_TS_UB_appx}. Here the sampling step of Algorithm \ref{alg_TS_appx} is by $$ (\hat{a}_t,\hat{b}_t)+\left(\frac{\sqrt{d}}{10}+\frac{T\Delta }{10}\right)\tilde{M}^{-1/2}_{t-1}\eta_t.$$

\paragraph{Proof of Lemma \ref{lemma_opt_gap}.}\

\begin{proof}
Recall that the regularized quasi-MLE at time $t$ is defined as
$$Q_t(\theta)-\frac{\lambda\underline{g}\|\theta\|^2_2}{2},$$
where $Q_t(\theta)=\sum_{t'=1}^t l_t(\theta)$, with Hessian matrix
$$\sum_{t'=1}^t -g'(z_{t'}^\top \theta)z_{t'}z_{t'}^\top-\lambda \underline{g} I_{2d},$$
which is negative definite by the assumption that $g'(z_t^\top \theta)>0$ (in the feasible domain of related parameters). Thus, the regularized quasi-MLE is concave in $\theta$.  We can then perform a second-order Taylor's expansion around the optimal solution $\hat{\theta}_t$ in $\Theta$ with any point $\theta\in \Theta$,
\begin{align*}
    &Q_t(\hat{\theta}_t)-\frac{\lambda\underline{g}\|\hat{\theta}_t\|^2_2}{2}-Q_t(\theta)+\frac{\lambda\underline{g}\|\theta\|^2_2}{2}\\
    =&-\left\langle \nabla Q_t(\hat{\theta}_t)-\lambda\underline{g}\hat{\theta}_t,\theta-\hat{\theta}_t\right\rangle-\frac{1}{2}\left\langle\hat{\theta}_t-\theta,\left(\nabla^2 Q_t(\theta')-\lambda \underline{g} I_{2d}\right)(\hat{\theta}_t-\theta)\right\rangle \\
    \geq& -\frac{1}{2}\left\langle\hat{\theta}_t-\theta,\left(\nabla^2 Q_t(\theta')-\lambda \underline{g} I_{2d}\right)(\hat{\theta}_t-\theta)\right\rangle\\
    \geq & \frac{1}{2}\underline{g}\|\hat{\theta}_t-\theta\|_{M_{t}}^2,
\end{align*}
where $\theta'\in \Theta$ is a point between $\theta$ and $\hat{\theta}_t$, the first inequality is by the concavity and the optimality of $\hat{\theta}_t$ in the compact set $\Theta$, and the second inequality is by $-\nabla^2 Q_t(\theta')\geq \underline{g} \sum_{t'=1}^tz_{t'} z_{t'}^\top$.
\end{proof}

\section{Appendix for Numerical Experiments}
\label{sec_appn_num}
\paragraph{Benchmarks.}\

\begin{itemize}
    \item \textbf{CILS.} By covariate-free constrained iterated least square (CILS) algorithm \citep{keskin2014dynamic}, the price $p_t$ is set by
\begin{equation*}
p_t=\begin{cases}
 \bar{p}_{t-1}+\text{sgn}(\delta_t)\kappa t^{-\frac{1}{4}}, &\text{if } |\delta_t|<\kappa t^{-\frac{1}{4}} ,  \\
p^*(\hat{\theta}_t), &\text{otherwise },
\end{cases}
\end{equation*}
where $\hat{\theta_t}$ is the least square estimator for the unknown parameters, $\bar{p}_{t-1}$ is the average of the prices over the period $1$ to $t-1$, and $\delta_t=p^*(\hat{\theta}_t)-\bar{p}_{t-1}$. The intuition is that if the tentative price $p^*(\hat{\theta}_t)$ stays too close to the history average, we will introduce a small perturbation as price experimentation to encourage the parameter learning. The parameter $\kappa$ is a hyper-parameter.
\item \textbf{Greedy\_Single.} The price $p_t$ is set as the solution of the single-period pricing problem with inventory constraint based on  $(\hat{a}_t,\hat{b}_t)$:
\begin{align*}
     \max_{p_t} & \quad p_t\cdot g(\hat{a}_t+\hat{b}_t\cdot p_t) \\
\text{s.t.} & \quad   g(\hat{a}_t+\hat{b}_t\cdot p_t)\leq c, \nonumber \\
&  \quad p_t\in [0.1,5].
\end{align*}
Further, when $\hat{a}_t<0$ or $\hat{b}_t>0$, we choose $p_t=5$ for saving the inventory under large uncertainty.

\item \textbf{Greedy\_Dual.} Greedy\_Dual is same as TS\_Dual expect replacing $(\tilde{a}_t,\tilde{b}_t)$ by $(\hat{a}_t,\hat{b}_t)$. Specifically, we replace equation \eqref{eq_price_con} by
$$p_{t}=\argmax_{p\in [\mu_t,5]}\Tilde{r}(p;\mu_{t},\hat{a}_{t},\hat{b}_t).$$
\end{itemize}

\paragraph{Hyper-parameter Tuning.}\

For all UCB and TS algorithms, we choose the regularization parameter $\lambda=1$. And after moderate tuning, we choose the hyper-parameter (if any) of each algorithm as follows:
\begin{itemize}
    \item CILS:  We choose $\kappa=\frac{d}{10}$ in our experiments.
    \item UCB: We set the confidence set for UCB algorithm (with any number of Monte Carlo samples) by
$$\Theta_{t}=\left\{\theta\in \Theta: \left\|\hat{\theta}_{t-1}-\theta\right\|^2_{M_{t-1}}\leq\frac{d}{10}\right\}.$$
\item TS and TS\_Ori: We sample the TS parameter by
$(\hat{a}_t,\hat{b}_t)+\frac{\sqrt{d}}{10}\tilde{M}^{-1/2}_{t-1}\eta_t$
for TS and
$ \hat{\theta}_{t-1}+\frac{\sqrt{d}}{25}M^{-1/2}_{t-1}\eta_t$
for TS\_Ori.
\item TS\_Dual: We sample the TS parameter by
$(\hat{a}_t,\hat{b}_t)+ \frac{\sqrt{d}}{10}\tilde{M}^{-1/2}_{t-1}\eta_t,$
and the update step for the dual variable is set by:
$$\mu_{t+1}=\text{Proj}_{[0,5]}\left(\mu_t+0.05\cdot(D_t-c)\right).$$
\item Greedy\_Dual: The update step for the dual variable is set by:
$$\mu_{t+1}=\text{Proj}_{[0,5]}\left(\mu_t+0.05\cdot(D_t-c)\right).$$
\end{itemize}

\end{document}